%% file: sample.tex
\documentclass[twoside,11pt]{article}

\usepackage{blindtext}

\usepackage[abbrvbib, preprint]{jmlr2e}

\usepackage{jmlr2e}

\usepackage{lastpage}
\jmlrheading{23}{2026}{1-\pageref{LastPage}}{1/21; Revised 5/22}{9/22}{21-0000}{Anirudh Satheesh and Vaneet Aggarwal}
\usepackage{amsmath}
\usepackage{amssymb}
\usepackage{mathtools}
\usepackage{bm}
\usepackage{booktabs}
\usepackage[textsize=tiny]{todonotes}
\usepackage{enumitem}
\usepackage{tablefootnote}
\usepackage{algorithmic}
\allowdisplaybreaks
\usepackage{algorithm}
\newtheorem{assumption}{Assumption}
\newcommand{\tablefootnotemark}[1]{\textsuperscript{\getrefnumber{#1}}}

\ShortHeadings{Regret Analysis of Unichain Average Reward Constrained MDPs}{Regret Analysis of Unichain Average Reward Constrained MDPs}
\firstpageno{1}

\makeatletter
\def\ps@jmlrtps{%
  \let\@oddhead\@empty
  \let\@evenhead\@empty
  \let\@oddfoot\@empty
  \let\@evenfoot\@empty
}
\makeatother

\begin{document}

\title{Regret Analysis of Unichain Average Reward Constrained MDPs with General Parameterization}

\author{\name Anirudh Satheesh \email anirudhs@terpmail.umd.edu \\
       \addr 
       University of Maryland\\
       College Park, MD 20742, USA
       \AND
       \name Vaneet Aggarwal \email vaneet@purdue.edu \\
       \addr 
       Purdue University\\
       West Lafayette, IN 47907, USA}

\editor{My editor}

\maketitle

\begin{abstract}
We study infinite-horizon average-reward constrained Markov decision processes (CMDPs) under the unichain assumption and general policy parameterizations. Existing regret analyses for constrained reinforcement learning largely rely on ergodicity or strong mixing-time assumptions, which fail to hold in the presence of transient states. We propose a primal--dual natural actor--critic algorithm that leverages multi-level Monte Carlo (MLMC) estimators and an explicit burn-in mechanism to handle unichain dynamics without requiring mixing-time oracles. Our analysis establishes finite-time regret and cumulative constraint violation bounds that scale as $\tilde{O}(\sqrt{T})$, up to approximation errors arising from policy and critic parameterization, thereby extending order-optimal guarantees to a significantly broader class of CMDPs.
\end{abstract}

\begin{keywords}
  constrained Markov decision processes, average-reward reinforcement learning, unichain MDPs, actor–critic methods, natural policy gradient, regret analysis, multi-level Monte Carlo
\end{keywords}
\input{Sections/introduction}
\input{Sections/formulation}
\input{Sections/algorithm}
\input{Sections/assumptions}
\input{Sections/regret_analysis}

\input{Sections/conclusion}





\bibliography{sample}
\appendix
\input{Sections/Appendix/related_work}

\input{Sections/Appendix/algorithm}
\input{Sections/Appendix/stochastic_linear_recursion}
\input{Sections/Appendix/supporting_lemmas}
\input{Sections/Appendix/critic_analysis}
\input{Sections/Appendix/npg_analysis}
\input{Sections/Appendix/cmdp_analysis}
\input{Sections/Appendix/auxilliary_lemmas}
\input{Sections/Appendix/limitations}

\end{document}

%% file: Sections/introduction.tex
\section{Introduction}

Reinforcement learning (RL) with constraints is fundamental in applications where agents must optimize performance while satisfying safety, resource, or regulatory requirements, including autonomous driving \citep{wen2020safe}, power grid control \citep{wu2024chance}, and fair recommendation systems \citep{cai2023rec}. These problems are naturally modeled as infinite-horizon average-reward constrained Markov decision processes (CMDPs), which seek to maximize long-run average reward subject to average cost constraints \citep{chen2022learning}.

Most existing CMDP guarantees rely on ergodicity assumptions requiring all states to be recurrent and the Markov chain to mix rapidly, as shown in Table~\ref{table: related work} \citep{ghosh2023achieving, bai2024learning, xu2025global}. Such assumptions are violated in many real-world systems that contain transient states, initialization or terminal phases, or periodic behavior. While some works relax ergodicity, they are largely restricted to tabular parameterizations and do not extend to high-dimensional settings with general function approximation \citep{jaksch2010near,chen2022learning, wei2022provably, altman2021constrained, agarwal2022regret,agarwal2022concave,agrawal2024optimistic}. This raises the following question: can we achieve order-optimal regret for constrained RL in the more general unichain setting, which permits transient states and periodicity, while using general policy parameterizations?

\subsection{Challenges and Contributions}

Extending constrained RL to the unichain setting presents two core challenges. First, existing unichain CMDP analyses are limited to tabular methods, leaving open whether order-optimal guarantees hold under neural or other rich parameterizations. Second, unlike unconstrained RL, CMDPs require coordinated primal-dual updates. Naive adaptations of unconstrained unichain techniques lead to suboptimal regret due to the slower learning rates required for constraint satisfaction. We resolve these challenges via two key innovations, yielding the first order-optimal regret guarantees for unichain CMDPs with general parameterization.

\paragraph{MLMC for Unichain Markov Chains.}
We develop the first multi-level Monte Carlo (MLMC) estimators for unichain Markov chains. Unlike prior MLMC analyses that rely on exponential mixing, our approach extends the Poisson equation decomposition to jointly handle transient states and the recurrent class. Using the hitting time to the recurrent class $C_{\text{hit}}$ and the mixing constant within the recurrent class $C_{\text{tar}}$, we establish bias and variance bounds of $\tilde{O}(C_{\text{tar}}^2/T_{\max})$ and $\tilde{O}(C_{\text{tar}}^2 \log T_{\max})$ respectively (Lemmas~\ref{lemma: mse unichain markov chain} and~\ref{lemma: mlmc statements}). This achieves the accuracy of $T_{\max}$ samples using only $O(\log T_{\max})$ samples in expectation, without requiring knowledge of mixing parameters.

\paragraph{Logarithmic Burn-in.}
To ensure that critic and policy gradient updates originate from the recurrent class, prior unichain RL analyses employ burn-in periods of length $O(\sqrt{T})$ \citep{ganesh2025regret}, leading to prohibitive total burn-in cost when applied to CMDPs. We show that a burn-in of length $O(\log T)$ suffices by establishing polynomial decay $2^{-B/(2C_{\text{hit}})}$, which is sufficient when combined with our MLMC estimators and $K = O(T/\log T)$ epochs.

We summarize our contributions below:
\begin{itemize}
\item We establish the first $\tilde{O}(\sqrt{T})$ regret bound for unichain CMDPs with general policy and critic parameterizations (Theorem~\ref{theorem: main_regret}), extending order-optimal guarantees beyond both tabular methods and ergodic assumptions. See Table \ref{table: related work} for detailed comparison. 
\item We introduce the first MLMC estimators for unichain Markov chains, reducing inner loop per-iteration sample complexity from $O(\sqrt{T})$ to $O(\log T)$ while explicitly accounting for transient and recurrent dynamics.
\end{itemize}

\begin{table}[t]
\centering
\caption{Comparison of relevant algorithms for infinite-horizon average-reward Reinforcement Learning in the \textbf{model-free} setting. Our work is the first to achieve order-optimal regret and constraint violation for \textbf{Constrained} MDPs with \textbf{General} parameterization under the \textbf{Unichain} assumption. We provide a more comprehensive comparison to prior work in Section~\ref{sec: related work}.}
\resizebox{\textwidth}{!}{%
\begin{tabular}{@{}lccccc@{}}
\toprule
\textbf{Algorithm} & \textbf{Param.} & \textbf{Ergodic?} & \textbf{Constr.?} & \textbf{Regret} & \textbf{Violation} \\ \midrule
\citet{agarwal2022regret,agarwal2022concave} & Tabular\tablefootnote{This work is model-based, and included for comparison.} & Yes & Yes & $\tilde{\mathcal{O}}(\sqrt{T})$ & 0 \\
\citet{ghosh2023achieving} & Linear & Yes & Yes & $\tilde{\mathcal{O}}(\sqrt{T})$ & $\tilde{\mathcal{O}}(\sqrt{T})$ \\
\citet{bai2024learning} & General & Yes & Yes & $\tilde{\mathcal{O}}(T^{4/5})$ & $\tilde{\mathcal{O}}(T^{4/5})$ \\
\citet{xu2025global} & General & Yes & Yes & $\tilde{\mathcal{O}}(\sqrt{T})$ & $\tilde{\mathcal{O}}(\sqrt{T})$ \\
\citet{jaksch2010near}\tablefootnote{This work assumes an MDP with a finite diameter.} & Tabular & No & No & $\tilde{\mathcal{O}}(\sqrt{T})$ & N/A \\
\citet{agrawal2024optimistic}\tablefootnote{These works assume a weakly communicating MDP. \label{weakly communicating}} & Tabular & No & No & $\tilde{\mathcal{O}}(T^{2/3})$ & N/A \\
\citet{ganesh2025regret} & General & No & No & $\tilde{\mathcal{O}}(\sqrt{T})$ & N/A \\
\citet{chen2022learning}\tablefootnotemark{weakly communicating} & Tabular & No & Yes & $\tilde{\mathcal{O}}(\sqrt{T})$ & $\tilde{\mathcal{O}}(\sqrt{T})$ \\
\citet{wei2022provably}\tablefootnotemark{weakly communicating} & Tabular & No & Yes & $\tilde{\mathcal{O}}(T^{5/6})$ & 0 \\
\textbf{Ours} & \textbf{General} & \textbf{No} & \textbf{Yes} & $\tilde{\mathcal{O}}(\sqrt{T})$ & $\tilde{\mathcal{O}}(\sqrt{T})$ \\ \midrule
\textbf{Lower Bound} & -- & -- & -- & $\Omega(\sqrt{T})$ & -- \\ \bottomrule
\end{tabular}%
}
\label{table: related work}
\end{table}

%% file: Sections/formulation.tex
\section{Formulation}
In this work, we consider an infinite-horizon average-reward Constrained Markov Decision Process (CMDP) denoted as $\mathcal{M} = (\mathcal{S}, \mathcal{A}, r, c, P, \rho)$, where $\mathcal{S}$ is the state space, $\mathcal{A}$ is the action space, $r : \mathcal{S} \times \mathcal{A} \to [0, 1]$ represents the reward function, $c : \mathcal{S} \times \mathcal{A} \to [-1, 1]$ is the constraint cost function, $P : \mathcal{S} \times \mathcal{A} \to \Delta(\mathcal{S})$ is the state transition function, where $\Delta(\mathcal{S})$ denotes the probability simplex over $\mathcal{S}$, and $\rho \in \Delta(\mathcal{S})$ indicates the initial distribution of states. A policy $\pi : \mathcal{S} \to \Delta(\mathcal{A})$ maps a state to an action distribution. Given a policy $\pi$, the policy induces a transition function $P^\pi : \mathcal{S} \to \Delta(\mathcal{S})$, given as:
\[
    P^\pi(s, s') = \sum_{a \in \mathcal{A}} P(s'|s, a)\pi(a|s), \quad \forall s, s' \in \mathcal{S}
\]

\begin{assumption}[Unichain MDP]
    \label{assumption: unichain}
    The CMDP $\mathcal{M}$ is such that, for every policy $\pi \in \Pi$, the induced Markov chain has a single recurrent class.
\end{assumption} 
This assumption does not require irreducibility (as transient states may be present) nor does it impose aperiodicity. Consequently, it is strictly weaker than the standard ergodicity assumptions commonly used in the literature. Under Assumption 1, the stationary distribution $d^\pi \in \Delta(|\mathcal{S}|)$ is well-defined, independent of the initial distribution $\rho$, and satisfies the balance equation: $(P^\pi)^\top d^\pi = d^\pi$.
\begin{definition}[Stationary Distribution]
    \label{definition: stationary distribution}
    Let $d^\pi \in \Delta(|\mathcal{S}|)$ denote the stationary distribution of the Markov chain induced by $\pi$, given by:
    \[
        d^\pi(s) = \lim_{T \to \infty} \frac{1}{T} \sum_{t=0}^{T-1} \Pr(s_t = s \mid s_0 \sim \rho, \pi)
    \]
\end{definition}
The average reward and average constraint cost of a policy $\pi$, denoted as $J^\pi_r$ and $J^\pi_c$ respectively, are defined as:
\begin{align}
    \label{eqn: average objective}
    J^\pi_g \triangleq \lim_{T \to \infty} \frac{1}{T} \mathbb{E}^\pi \left[ \sum_{t=0}^{T-1} g(s_t, a_t) \Bigg| s_0 \sim \rho \right]\!, g \in \{r, c\}
\end{align}
The expectations are computed over the distribution of all $\pi$-induced trajectories $\{(s_t, a_t)\}_{t=0}^\infty$, where $a_t \sim \pi(\cdot|s_t)$ and $s_{t+1} \sim P(\cdot|s_t, a_t)$, $\forall t \in \{0, 1, \ldots\}$. Under Assumption 1, the average reward is independent of the initial distribution $\rho$ and can be expressed as:
\begin{align}
    J^\pi_g = \mathbb{E}_{s \sim d^\pi, a \sim \pi(\cdot|s)}[g(s,a)] = (d^\pi)^\top r^\pi
\end{align}
where $r^\pi(s) \triangleq \sum_{a \in \mathcal{A}} g(s,a)\pi(a|s), \forall s \in \mathcal{S}$. The goal is to maximize the average reward while ensuring that the average cost exceeds a given threshold. Without loss of generality, we formulate this as:
\begin{align}
    \max_\pi J^\pi_r \quad \text{s.t.} \quad J^\pi_c \geq 0
\end{align}
When the state space $\mathcal{S}$ is large, this problem becomes difficult to solve because the search space of the policy $\pi$ increases exponentially with $|\mathcal{S}| \times |\mathcal{A}|$. We therefore consider a class of parameterized policies $\{\pi_\theta \mid \theta \in \Theta\}$ that indexes policies by a $d$-dimensional parameter $\theta \in \mathbb{R}^d$, where $d \ll |\mathcal{S}||\mathcal{A}|$. The original problem can then be reformulated as:
\begin{align}
    \label{eqn: CMDP}
    \max_{\theta \in \Theta} J^{\pi_\theta}_r \quad \text{s.t.} \quad J^{\pi_\theta}_c \geq 0
\end{align}
For notational simplicity, we use $J^{\pi_\theta}_g = J_g(\theta)$ for $g \in \{r, c\}$ throughout.
\begin{assumption}[Slater Condition]
    \label{assumption: slater}
    There exists a $\delta \in (0, 1)$ and $\bar{\theta} \in \Theta$ such that $J_c(\bar{\theta}) \geq \delta$.
\end{assumption}
This assumption ensures the existence of an interior point solution and is commonly used in model-free average-reward CMDPs \citep{xu2025global, wei2022provably, bai2024learning}. For the unichain setting, convergence properties cannot be characterized using the mixing time from the ergodic setting, since transient states and policy-dependent recurrent classes prevent uniform mixing guarantees. Thus, we use the following constants defined in \citet{ganesh2025regret}:

Let $\mathcal{S}^\theta_R \subseteq \mathcal{S}$ denote the recurrent class under policy $\pi_\theta$, and let $T_\theta := \inf\{t \geq 0 : s_t \in \mathcal{S}^\theta_R\}$ denote the first hitting time of the recurrent class. We define:
\begin{align}
    C^\theta_{\mathrm{hit}} := \max_{s \in \mathcal{S}} \mathbb{E}^\theta_s[T_\theta]
\end{align}
as the worst-case expected time to enter the recurrent class when starting from any state $s \in \mathcal{S}$. Note that if there are no transient states, then $C^\theta_{\mathrm{hit}} = 0$. Similarly, for any $s, s' \in \mathcal{S}$, let $T_{s'} := \inf\{t \geq 0 : s_t = s'\}$ be the first hitting time to state $s'$. Then we define:
\begin{align}
    C^\theta_{\mathrm{tar}} := \sum_{s' \in \mathcal{S}} d^\pi(s') \mathbb{E}^\theta_s[T_{s'}]
\end{align}
as the expected time to reach a state drawn from $d^{\pi_\theta}$, starting from a fixed state $s \in \mathcal{S}^\theta_R$. We define $C_{\mathrm{hit}} := \sup_\theta C^\theta_{\mathrm{hit}}$, $C_{\mathrm{tar}} := \sup_\theta C^\theta_{\mathrm{tar}}$, and $C := C_{\mathrm{hit}} + C_{\mathrm{tar}}$.

For any fixed $\theta \in \Theta$, there exist functions $Q^{\pi_\theta}_g : \mathcal{S} \times \mathcal{A} \to \mathbb{R}$ for $g \in \{r, c\}$ such that the following Bellman equation is satisfied $\forall (s,a) \in \mathcal{S} \times \mathcal{A}$:
\begin{align}
    \label{eqn: Bellman Equation}
    Q^{\pi_\theta}_g(s,a) = g(s,a) - J_g(\theta) + \mathbb{E}_{s' \sim P(\cdot|s,a)}[V^{\pi_\theta}_g(s')]
\end{align}
where the state value function $V^{\pi_\theta}_g : \mathcal{S} \to \mathbb{R}$ is defined as:
\begin{align}
    V^{\pi_\theta}_g(s) = \mathbb{E}_{a \sim \pi_\theta(\cdot|s)}[Q^{\pi_\theta}_g(s,a)]
\end{align}
Note that if the Bellman equation is satisfied by $Q^{\pi_\theta}_g$, then it is also satisfied by $Q^{\pi_\theta}_g + c$ for any arbitrary constant $c$. We define the advantage function as:
\begin{align}
    A^{\pi_\theta}_g(s,a) = Q^{\pi_\theta}_g(s,a) - V^{\pi_\theta}_g(s), \,  g \in \{r, c\}, \forall (s,a)
\end{align}
To ensure uniqueness, we impose the normalization:
\begin{align}
    \sum_{s \in \mathcal{S}} d^{\pi_\theta}(s) V^{\pi_\theta}_g(s) = 0
\end{align}
Under Assumption~\ref{assumption: unichain}, the policy gradient at $\theta$ can be expressed as:
\begin{align}
    \nabla_\theta J_g(\theta) = \mathbb{E}[A^{\pi_\theta}_g(s,a) \nabla_\theta \log \pi_\theta(a|s)],\, g \in \{r, c\}
\end{align}
where the expectation is taken under $\nu^{\pi_\theta}(s,a) \triangleq d^{\pi_\theta}(s)\pi_\theta(a|s)$, which denotes the state-action occupancy measure. However, standard policy gradients can be prone to instability, so we use the Natural Policy Gradient (NPG), whose direction $\omega^*_{\theta,g}$ for $g \in \{r,c\}$ is defined as:
\begin{align}
    \omega^*_{g,\theta} \triangleq F_{\theta}^{-1} \nabla_\theta J_g(\theta)
\end{align}
where $F_{\theta} \in \mathbb{R}^{d \times d}$ is the Fisher information matrix, formally defined as:
\begin{align}
    F_{\theta} = \mathbb{E}_{(s,a) \sim \nu^{\pi_\theta}}[\nabla_\theta \log \pi_\theta(a|s) \otimes \nabla_\theta \log \pi_\theta(a|s)]
\end{align}
where $\otimes$ denotes the outer product. The NPG direction can also be expressed as the solution to the following strongly convex optimization problem:
\begin{align}
    \label{eqn: npg convex optimization}
    \min_{\omega \in \mathbb{R}^d} f_g(\theta, \omega) := \frac{1}{2}\omega^\top F_{\theta}\omega - \omega^\top \nabla_\theta J_g(\theta)
\end{align}
The gradient of $f_g(\theta, \cdot)$ is obtained as $\nabla_\omega f_g(\theta, \omega) = F_{\theta}\omega - \nabla_\theta J_g(\theta)$. Equivalently, the NPG direction solves:
\begin{align}
    \label{eqn: optimal npg direction}
    \omega^*_{\theta,g} = \arg\min_{\omega \in \mathbb{R}^d} L_{\nu^{\pi_\theta}}(\omega, \theta)
\end{align}
where the compatible function approximation error is defined as:
\begin{align}
    L_{\nu^{\pi_\theta}}(\omega, \theta) = \frac{1}{2}\mathbb{E}\left[A^{\pi_\theta}_\lambda(s,a)\!-\!\omega^\top \nabla_\theta \log \pi_\theta(a|s)\right]^2
\end{align}
where \(A_\lambda^{\pi_\theta} = A_r^{\pi_\theta} - \lambda A_c^{\pi_\theta}\) and the expectation is taken over the state-action occupancy measure. This formulation enables gradient-based iterative procedures to compute the NPG direction.

%% file: Sections/algorithm.tex
\section{Algorithm}

\subsection{Primal-Dual Framework}

Our approach to solving the constrained optimization problem in \eqref{eqn: CMDP} is built upon a saddle point formulation of the Lagrangian:
\begin{align}
    \label{eqn: saddle point formulation}
    \max_{\theta \in \Theta} \min_{\lambda \geq 0} \mathcal{L}(\theta, \lambda) \triangleq J_r(\theta) + \lambda J_c(\theta)
\end{align}
where $\mathcal{L}(\cdot, \cdot)$ represents the Lagrange function and $\lambda$ denotes the dual variable (Lagrange multiplier). Starting from an initial configuration $(\theta_0, \lambda_0 = 0)$, we iteratively update the policy parameters and dual variable through policy gradient steps. At each outer iteration $k \in \{0, \ldots, K-1\}$, the ideal updates would be:
\begin{align}
    \theta_{k+1} = \theta_k + \alpha F(\theta_k)^{-1}\nabla_\theta \mathcal{L}(\theta_k, \lambda_k) \label{eqn: primal update}\\
    \quad \lambda_{k+1} = \mathcal{P}_{\left[0, \frac{2}{\delta}\right]}[\lambda_k - \beta J_c(\theta_k)] \label{eqn: dual_update}
\end{align}
where $\alpha$ and $\beta$ denote the primal and dual learning rates respectively, $\delta$ is the Slater parameter from Assumption~\ref{assumption: slater}, and $\Pi_{\Lambda}$ projects onto the set $\Lambda$. 

The central challenge is that computing $\nabla_\theta \mathcal{L}(\theta_k, \lambda_k)$, $F(\theta_k)$, and $J_c(\theta_k)$ exactly requires knowledge of the transition dynamics $P$ and occupancy measure $\nu^{\pi_{\theta_k}}$, which are unavailable. We therefore work with approximate updates:
\begin{align}
\theta_{k+1} = \theta_k + \alpha \omega_k, \quad \lambda_{k+1} = \mathcal{P}_{\left[0, \frac{2}{\delta}\right]}[\lambda_k - \beta \eta^k_c] \label{eq:approx_update}
\end{align}
where $\omega_k$ estimates the natural policy gradient (NPG) direction $F(\theta_k)^{-1}\nabla_\theta \mathcal{L}(\theta_k, \lambda_k)$, and $\eta^k_c$ estimates $J_c(\theta_k)$. The remainder of this section details how we construct these estimates efficiently.

\subsection{Key Algorithmic Innovations}

Before presenting the detailed estimation procedures, we highlight two critical innovations that enable order-optimal regret in the unichain setting.

\subsubsection{Multi-Level Monte Carlo for Unichain MDPs}

To overcome the sample complexity bottleneck, we employ Multi-Level Monte Carlo (MLMC) estimation. The key insight is that MLMC achieves the bias of averaging $T_{\max}$ samples while requiring only $O(\log T_{\max})$ samples in expectation. This is accomplished through a telescoping structure that combines estimates across geometric levels.

At each inner iteration $h$, we sample a geometric random variable $Q^k_h \sim \text{Geom}(1/2)$ and construct trajectory length $l_{kh} = (2^{Q^k_h} - 1) \cdot \mathbb{I}(2^{Q^k_h} \leq T_{\max}) + 1$. The estimator combines a coarse baseline ($j=0$) with a telescoped correction at the finest affordable level. This structure dramatically reduces the effective batch size compared to prior work: while \citet{ganesh2025regret} requires batch sizes of $O(\sqrt{T})$ to achieve order-optimal regret in unconstrained unichain MDPs, our MLMC-based approach needs only $\mathcal{O}(\log T)$.

The theoretical foundation for this efficiency is established through two lemmas. First, we characterize the mean-squared error of time-averaged estimates under unichain dynamics:

\begin{lemma}[MSE of Markovian sample average for Unichain MDPs]
\label{lemma: mse unichain markov chain}
Let $(Z_t)_{t \ge 0}$ be a unichain Markov chain with stationary distribution $d_Z$. Let $g : \mathcal{Z} \to \mathbb{R}^d$ satisfy
\[
    \int g(z)\, d_Z(z) = 0, \quad \|g(z)\| \leq \sigma \quad \forall z \in \mathcal{Z},
\]
Let \(C_1 = C_{\mathrm{tar}}\) if \(Z_0 \in \mathcal{S}_R^\theta\) and \(C_1 = C\) otherwise. Then for all $N \geq 1$,
\[
\mathbb{E} \left\| \frac{1}{N} \sum_{t=0}^{N-1} g(Z_t) \right\|^2
\leq \frac{16 C_1^2 \sigma^2}{N}.
\]
\end{lemma}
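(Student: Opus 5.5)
We will use the Poisson equation. Since the chain is unichain and $g$ is centered under $d_Z$, the equation
\[
h(z) - (Ph)(z) = g(z)
\]
has a solution. We can take
\[
h(z) = \mathbb{E}_z\Big[\sum_{t=0}^{T_{z^\star}-1} g(Z_t)\Big]
\]
for a reference state $z^\star$ in the recurrent class, or equivalently average $z^\star$ over $d_Z$. Centering then gives $\|h(z)\|\le \sigma\,\mathbb{E}_z[T_{z^\star}]$ up to additive constants. Averaging the reference point against $d_Z$, as in the definition of $C_{\mathrm{tar}}$, should give the following bounds:
\begin{itemize}
\item $\sup_{z\in\mathcal{S}_R}\|h(z)\|\lesssim 2C_{\mathrm{tar}}\sigma$ on the recurrent class;
\item $\sup_z\|h(z)\|\lesssim 2(C_{\mathrm{hit}}+C_{\mathrm{tar}})\sigma = 2C\sigma$ on transient states, using the strong Markov property at the hitting time $T_\theta$.
\end{itemize}
This is the step where the two constants $C_{\mathrm{tar}}$ and $C$ enter. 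If the chain starts in $\mathcal{S}_R$, it never leaves it, so only values of $h$ on $\mathcal{S}_R$ are ever evaluated. That is why $C_1=C_{\mathrm{tar}}$ suffices in that case.

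Next we decompose the sum:
\[
\sum_{t=0}^{N-1} g(Z_t) = \sum_{t=0}^{N-1}\big(h(Z_t)-Ph(Z_t)\big) = \sum_{t=1}^{N} M_t + h(Z_0) - Ph(Z_{N-1}),
\]
where $M_t = h(Z_t) - Ph(Z_{t-1})$ is a martingale difference sequence with respect to the natural filtration.

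Applying $\|a+b+c\|^2\le 3(\|a\|^2+\|b\|^2+\|c\|^2)$ gives three terms:
\begin{itemize}
\item \emph{Martingale term.} By orthogonality of increments, $\mathbb{E}\|\sum_t M_t\|^2=\sum_t\mathbb{E}\|M_t\|^2\le N(2\sup\|h\|)^2$ over the visited states.
\item \emph{Boundary terms.} These are bounded by $(\sup\|h\|)^2$ each.
\end{itemize}
With $\sup\|h\|\le 2C_1\sigma$ this gives
\[
\mathbb{E}\Big\|\sum_t g(Z_t)\Big\|^2 \le c\,C_1^2\sigma^2 (N+2).
\]
Dividing by $N^2$ and using $N\ge 1$ yields a bound of order $C_1^2\sigma^2/N$.

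To land on the constant 16, we will tighten the bookkeeping:
\begin{itemize}
\item Use $\mathbb{E}\|M_t\|^2\le \mathbb{E}\|h(Z_t)\|^2$, since conditional variance is at most the conditional second moment. This gives $N\|h\|_\infty^2$ for the martingale part.
\item Split as $2\|\sum_t M_t\|^2 + 2\|h(Z_0)-Ph(Z_{N-1})\|^2$, which gives at most $2N\|h\|^2_\infty + 8\|h\|^2_\infty \le 10N\|h\|_\infty^2$.
\item Combine this with $\|h\|_\infty\le C_1\sigma$ and some slack: if the Poisson bound comes out as $\|h\|_\infty\le \sqrt{8/5}\,C_1\sigma$ or thereabouts, we still get $16 C_1^2\sigma^2/N$.
\end{itemize}

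The main obstacle is the sharp sup-norm bound on the Poisson solution in terms of $C_{\mathrm{tar}}$ without any aperiodicity. The plan there is the following:
\begin{itemize}
\item Write $h(z)=\sum_{z'}d_Z(z')\,\mathbb{E}_z\big[\sum_{t<T_{z'}}g(Z_t)\big]$ plus a constant. This is valid because each term $\mathbb{E}_z[\sum_{t<T_{z'}} g]$ solves the Poisson equation up to an additive constant, by the renewal identity $\mathbb{E}_{z'}[\sum_{t<\tau^+_{z'}}g]=0$, which uses centering and Kac's formula.
\item Bound each term by $\sigma\,\mathbb{E}_z[T_{z'}]$, so that the $d_Z$-average is bounded by $\sigma C^\theta_{\mathrm{tar}}$ (using the random-target lemma, which makes this quantity independent of the starting state within $\mathcal{S}_R$).
\item For transient $z$, add $\sigma\,\mathbb{E}_z[T_\theta]\le\sigma C_{\mathrm{hit}}$.
\end{itemize}
This construction avoids any mixing-time argument, so periodicity causes no trouble.
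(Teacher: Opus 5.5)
Your proof is correct, and its skeleton matches the paper's. Both solve the Poisson equation $(I-P)h=g$, write $\sum_{t<N}g(Z_t)$ as a boundary term plus a martingale, use orthogonality of the increments, and divide by $N^2$. Where you depart is in how $h$ is built and bounded.

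\textbf{How the two routes differ.} The paper sets $h=\sum_{t\ge0}P^tg$ and controls it through the Ces\`aro total-variation bounds of Lemmas~\ref{lemma: non recurrent time average} and~\ref{lemma: recurrent time average}. You instead use the excursion representation $h=\sum_{z'}d_Z(z')h_{z'}$ with $h_{z'}(z)=\mathbb{E}_z[\sum_{t<T_{z'}}g(Z_t)]$. This gives $\|h(z)\|\le\sigma\sum_{z'}d_Z(z')\mathbb{E}_z[T_{z'}]$. That quantity is exactly $\sigma C^\theta_{\mathrm{tar}}$ on $\mathcal{S}_R^\theta$, by the random-target identity, which needs only irreducibility of the class. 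Off the class it is at most $\sigma(C_{\mathrm{hit}}+C_{\mathrm{tar}})$, by the strong Markov property at $T_\theta$.

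\textbf{What each route buys.} Yours is more robust under periodicity, which the paper explicitly allows.
\begin{itemize}
\item When the recurrent class is periodic, the series $\sum_t P^tg$ need not converge; a two-cycle with $g=\pm1$ already breaks it.
\item The paper's step $\sigma\sum_t\|P^t(z,\cdot)-d_Z\|_{\mathrm{TV}}\le C_1\sigma$ is not what the cited lemmas give. They bound the total variation of the averaged sum, not the sum of total variations, and the latter is infinite for periodic chains.
\item Your $h$ is well defined and sharply bounded with no mixing argument at all.
\end{itemize}
The paper's route buys brevity by reusing existing lemmas. Your sharper increment bound $\mathbb{E}\|M_t\|^2\le\|h\|_\infty^2$ also yields $(2N+8)\|h\|_\infty^2\le 10NC_1^2\sigma^2$. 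This leaves slack under the constant $16$, which the paper reaches via the cruder $8(N+1)$.

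\textbf{Two small fixes.}
\begin{itemize}
\item Drop the ``plus a constant'' and the factor-$2$ hedge. Each $h_{z'}$ solves the Poisson equation exactly, because $h_{z'}(z')=0$ and the cycle formula with centering gives $g(z')+Ph_{z'}(z')=\mathbb{E}_{z'}[\sum_{t<\tau^+_{z'}}g(Z_t)]=0$. No additive constant is needed. You do need $\|h\|\le C_1\sigma$ rather than $2C_1\sigma$ to stay under $16$.
\item Your telescoping is off by one. It should read $\sum_{t=1}^{N-1}M_t+h(Z_0)-Ph(Z_{N-1})$, or $\sum_{t=1}^{N}M_t+h(Z_0)-h(Z_N)$ as in the paper. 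This does not affect the bound.
\end{itemize}
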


Building on this, we show that the MLMC estimator achieves optimal bias-variance tradeoffs:

\begin{lemma}[MLMC Estimator Properties]
\label{lemma: mlmc statements}
Consider a unichain Markov chain $(Z_t)_{t\ge0}$ with a unique stationary distribution. Let $g^j$ be the time-averaged estimate of a function $\nabla F$ using a trajectory of length $2^j$. If $Q \sim \text{Geom}(1/2)$, the MLMC estimator defined as:
\begin{align*}
    g_{\text{MLMC}} = g^0 + \mathbb{I}\{2^Q \le T_{\max}\} \cdot 2^Q (g^Q - g^{Q-1})
\end{align*}
satisfies the following inequalities:
\begin{align}
    &\mathbb{E} \left[g_{\mathrm{MLMC}}\right] = \mathbb{E} \left[g^{\lfloor \log_2 T_{\max} \rfloor}\right] \\
    &\mathbb{E}\!\left[\|\nabla F(x) - g_{\mathrm{MLMC}}\|^2\right]\!\leq\!\mathcal{O}\!\left(\sigma^2 C_1^2 \log T_{\max}\!+ \delta^2\right) \\
    &\|\nabla F(x) - \mathbb{E}[g_{\mathrm{MLMC}}]\|^2 \leq \mathcal{O}\!\left(\sigma^2 C_1^2 T_{\max}^{-1} + \delta^2\right).
\end{align}
where \(C_1\) is \(C_\mathrm{tar}\) or \(C\) depending on whether the initial state is within the recurrent class \(\mathcal{S}_R^\theta\).
\end{lemma}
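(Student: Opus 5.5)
We follow the standard MLMC analysis (as in the ergodic case), substituting Lemma~\ref{lemma: mse unichain markov chain} for the exponential-mixing MSE bound. Throughout we read $\delta$ as a uniform bias bound on the per-sample quantities, i.e., $\|\nabla F(x) - \mathbb{E}_{d_Z}[\hat g(Z)]\| \le \delta$, where $\hat g$ is the per-sample estimator averaged in $g^j$. We also read the geometric variable with $\Pr(Q=j)=2^{-j}$ for $j\ge1$, so that $\mathbb{E}[\mathbb{I}\{2^Q\le T_{\max}\}2^Q X_Q]=\sum_{j=1}^{J}X_j$ with $J=\lfloor\log_2 T_{\max}\rfloor$.

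\textbf{Unbiasedness.} Condition on $Q$ and use independence of $Q$ from the trajectory. Then
$\mathbb{E}[g_{\mathrm{MLMC}}]=\mathbb{E}[g^0]+\sum_{j=1}^{J}2^{-j}2^j\,\mathbb{E}[g^j-g^{j-1}]$.
This sum telescopes to $\mathbb{E}[g^J]$, which is the first claim.

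\textbf{Bias.} Write $\bar g=\mathbb{E}_{d_Z}[\hat g]$. Then
\[
\|\nabla F(x)-\mathbb{E}[g^J]\|^2\le 2\delta^2+2\|\mathbb{E}[g^J]-\bar g\|^2 .
\]
By Jensen, the second term is at most $\mathbb{E}\|g^J-\bar g\|^2$. Apply Lemma~\ref{lemma: mse unichain markov chain} to the centered function $\hat g-\bar g$, whose norm is at most $2\sigma$, with $N=2^J\ge T_{\max}/2$. This gives $\mathcal{O}(C_1^2\sigma^2/T_{\max})$, the third claim.

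The choice of $C_1$ depends only on whether $Z_0\in\mathcal{S}_R^\theta$. Each $g^j$ uses the first $2^j$ samples of the same trajectory started at $Z_0$, so the same $C_1$ applies at every level.

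\textbf{Variance.} Decompose
\[
\mathbb{E}\|\nabla F-g_{\mathrm{MLMC}}\|^2\le 3\delta^2+3\,\mathbb{E}\|g^0-\bar g\|^2+3\,\mathbb{E}\|\mathbb{I}\{2^Q\le T_{\max}\}2^Q(g^Q-g^{Q-1})\|^2 .
\]
The middle term is $\mathcal{O}(C_1^2\sigma^2)$ by the lemma with $N=1$. For the last term, condition on $Q$:
\[
\sum_{j=1}^J 2^{-j}4^j\,\mathbb{E}\|g^j-g^{j-1}\|^2\le\sum_{j=1}^J 2^{j+1}\left(\mathbb{E}\|g^j-\bar g\|^2+\mathbb{E}\|g^{j-1}-\bar g\|^2\right).
\]
By Lemma~\ref{lemma: mse unichain markov chain} the bracket is at most $16C_1^2(2\sigma)^2(2^{-j}+2^{-j+1})$. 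Each summand is therefore $\mathcal{O}(C_1^2\sigma^2)$, and summing over $J\le\log_2 T_{\max}$ levels gives $\mathcal{O}(\sigma^2C_1^2\log T_{\max})$.

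\textbf{Main obstacle.} All mixing information enters only through Lemma~\ref{lemma: mse unichain markov chain}, so the main care is ensuring that lemma applies to every level. Concretely:
\begin{itemize}
\item the function must be centered under the recurrent-class stationary law $d_Z$;
\item the bound must be valid for every prefix length $N=2^j$ from the fixed start $Z_0$;
\item the $C_1$ constant must be the one matching $Z_0$.
\end{itemize}
Transient states then affect only $C_1$, replacing $C_{\mathrm{tar}}$ by $C$ when $Z_0\notin\mathcal{S}_R^\theta$, and do not change the rates.
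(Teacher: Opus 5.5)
Your proposal is correct and follows the same route as the paper. The mean is handled by telescoping under $\Pr(Q=j)=2^{-j}$. The second moment comes from the unichain MSE lemma applied at each level: the factor $2^{-j}\cdot 4^j$ against the $\mathcal{O}(2^{-j})$ MSE gives $\mathcal{O}(C_1^2\sigma^2)$ per level, summed over $\lfloor\log_2 T_{\max}\rfloor$ levels. The bias comes from Jensen plus the same lemma at $N=2^{\lfloor\log_2 T_{\max}\rfloor}\ge T_{\max}/2$.

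The differences are cosmetic. You use a three-way split where the paper uses nested two-way splits, and you make explicit the Jensen step and the $N\ge T_{\max}/2$ bound that the paper leaves implicit. Your factor $2\sigma$ from centering is unnecessary under the paper's hypothesis $\|\mathbb{E}_{d_Z}[\nabla F(x,Z)]-\nabla F(x,Z_t)\|\le\sigma$.
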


\paragraph{Technical Novelty}
The primary contribution of Lemma~\ref{lemma: mse unichain markov chain} is the extension of MLMC estimation beyond ergodic settings to the more general unichain case. Unlike ergodic chains where exponential mixing directly governs bias and variance through the mixing time $t_\mathrm{mix}$ \citep{beznosikov2023first, ganesh2024sharper, xu2025global}, unichain MDPs exhibit transient states and periodicity that prevent global uniform exponential mixing. Our analysis exploits the Poisson equation decomposition:
\begin{align}
    g(Z_t) = h(Z_t) - \mathbb{E}\left[h(Z_{t+1}) | \mathcal{F}_t\right], \quad (I - P)h = g
\end{align}
where \(\mathcal{F}_t\) denotes the filtration up to time \(t\). By telescoping this relation and leveraging uniform bounds on \(\|h\|_\infty\), we characterize bias and variance through the unichain constants \(C_\mathrm{tar}\) and \(C\) alone, without requiring exponential mixing.

\subsubsection{Handling Transient States via Explicit Burn-in}

The critic and NPG estimation procedures can be interpreted as stochastic linear systems driven by Markovian noise. In unichain settings, the kernel of the critic matrix $\mathbf{A}_g(\theta)$ depends on which transient states are induced by $\pi_\theta$, and thus varies with $\theta$. This violates the kernel inclusion condition necessary for tight bias control, which holds naturally in ergodic MDPs.

We resolve this issue by introducing an explicit burn-in phase at the beginning of each epoch. Before sampling for critic or NPG updates in epoch $k$, we execute $B$ steps under $\pi_{\theta_k}$, allowing the chain to reach the recurrent class $\mathcal{S}^{\theta_k}_R$. Define $T^{\theta_k}$ as the first hitting time to this recurrent class. The following lemma quantifies the impact of burn-in on regret:

\begin{lemma}[Regret decomposition under burn-in]
\label{lemma: regret-burnin}
Let $\mathcal{E}_B := \bigcap_{k=0}^{K-1} \{ T^{\theta_k} \le B \}$ denote the event that, in every epoch, the burn-in of length $B$ under policy $\pi_{\theta_k}$ reaches the recurrent class. Then the expected regret admits the decomposition
\begin{align*}
    \mathbb{E}[\mathrm{Reg}_T] \le \mathbb{E}[\mathrm{Reg}_T \mid \mathcal{E}_B] + \mathcal{O}\!\left( \frac{T^2 2^{-B/(2C_{\mathrm{hit}})}}{\log T} \right)
\end{align*}
\end{lemma}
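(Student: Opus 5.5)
Split the expectation according to whether $\mathcal{E}_B$ holds:
\[
\mathbb{E}[\mathrm{Reg}_T] = \mathbb{E}[\mathrm{Reg}_T \mid \mathcal{E}_B]\Pr(\mathcal{E}_B) + \mathbb{E}[\mathrm{Reg}_T \mathbb{I}(\mathcal{E}_B^c)].
\]
The first term is at most $\mathbb{E}[\mathrm{Reg}_T \mid \mathcal{E}_B]$ because regret is computed from bounded rewards. Here we take $\mathrm{Reg}_T=\sum_t (J_r^* - r(s_t,a_t))$, with per-step terms in $[-1,1]$; if the first term is negative, we instead bound the product by the conditional expectation plus an $O(T)\Pr(\mathcal{E}_B^c)$ correction. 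For the second term, $|r|\le 1$ gives $|\mathrm{Reg}_T|\le 2T$ deterministically, so it is at most $2T\Pr(\mathcal{E}_B^c)$. It therefore suffices to show
\[
\Pr(\mathcal{E}_B^c)\le \mathcal{O}\bigl(T\,2^{-B/(2C_{\mathrm{hit}})}/\log T\bigr).
\]
Combined with the factor $2T$, this yields the claimed term.

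A union bound over epochs gives $\Pr(\mathcal{E}_B^c)\le \sum_{k=0}^{K-1}\Pr(T^{\theta_k}>B)$. With $K=\mathcal{O}(T/\log T)$ epochs, we need a uniform per-epoch tail bound $\Pr(T^{\theta_k}>B)\le 2^{-B/(2C_{\mathrm{hit}})}$ up to a constant. To get it, condition on $\mathcal{F}_k$, the history up to the start of epoch $k$. This fixes $\theta_k$ and the starting state $s$ of the burn-in. Since the burn-in is run under the fixed policy $\pi_{\theta_k}$, $T^{\theta_k}$ is the hitting time of $\mathcal{S}_R^{\theta_k}$ for a time-homogeneous chain from $s$, with $\mathbb{E}_s^{\theta_k}[T_{\theta_k}]\le C_{\mathrm{hit}}$ uniformly over $s$ and $\theta_k$.

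The main step is converting the uniform expectation bound into a geometric tail. This uses a standard restart argument. By Markov's inequality, $\Pr_s(T_\theta>2C_{\mathrm{hit}})\le 1/2$ for every starting state $s$. Split $[0,B]$ into $m=\lfloor B/(2C_{\mathrm{hit}})\rfloor$ blocks of length $2C_{\mathrm{hit}}$. On the event of not having hit by the end of a block, apply the strong Markov property at the block boundary: the current state is some $s'$, and the max over states in the definition of $C^\theta_{\mathrm{hit}}$ applies again. This gives $\Pr_s(T_\theta>B)\le 2^{-m}\le 2\cdot 2^{-B/(2C_{\mathrm{hit}})}$. The bound is uniform in $s$ and $\theta$, so it survives taking expectation over $\mathcal{F}_k$.

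Combining the pieces,
\[
\mathbb{E}[\mathrm{Reg}_T\mathbb{I}(\mathcal{E}_B^c)]\le 2T\cdot K\cdot 2\cdot 2^{-B/(2C_{\mathrm{hit}})}=\mathcal{O}\bigl(T^2 2^{-B/(2C_{\mathrm{hit}})}/\log T\bigr).
\]

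Two details need care, and I would handle both explicitly. The first is that $\theta_k$ is random and depends on past trajectories. The conditioning on $\mathcal{F}_k$ together with the uniformity of $C_{\mathrm{hit}}$ over $\theta$ takes care of this. The second is the degenerate case $C_{\mathrm{hit}}=0$: there are then no transient states, $\mathcal{E}_B$ holds almost surely, and the bound is trivial with the convention $2^{-\infty}=0$.
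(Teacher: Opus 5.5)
Your proposal is correct and follows essentially the same route as the paper: split on $\mathcal{E}_B$, bound the regret on $\mathcal{E}_B^c$ by $O(T)$, take a union bound over the $K=\Theta(T/\log T)$ epochs, and use the per-epoch tail bound $\Pr(T^{\theta_k}>B)\le 2^{-B/(2C_{\mathrm{hit}})}$ up to a constant. The only difference is that the paper simply cites Lemma~10 of \citet{ganesh2025regret} for this tail bound, whereas you derive it (with a harmless factor of $2$) from Markov's inequality plus a restart argument; you also make explicit two points the paper leaves implicit, namely the conditioning on $\mathcal{F}_k$ to handle the random $\theta_k$ and the case of a negative conditional regret.
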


Choosing $B = c \log T$ with $c > 4C_{\mathrm{hit}}$ ensures the second term becomes negligible for large $T$. Conditional on $\mathcal{E}_B$, each epoch effectively begins within the recurrent class, and the Strong Markov Property enables analyzing critic and NPG updates entirely within this class where kernel inclusion holds.

\paragraph{Technical Novelty.}
To minimize the probability of the initial state being outside the recurrent class, \citet{ganesh2025regret} employed $\mathcal{O}(\sqrt{T})$ burn-in steps. However, directly applying this in the CMDP setting yields a total burn-in cost of $KB = \Theta(T^{3/2}/\log T)$, which precludes order-optimal regret. Our analysis demonstrates that logarithmic burn-in $\mathcal{O}(\log T)$ suffices when combined with polynomial decay bounds. This drastically reduces sample complexity while preserving kernel inclusion, allowing our MLMC-based critic and NPG updates to achieve order-optimal scaling in large-scale unichain CMDPs.

\subsection{Estimation Procedures}

We now detail the construction of $\omega_k$ and $\eta^k_c$ using a Multi-Level Monte Carlo actor-critic framework, building off \citet{xu2025global}. The algorithm operates over $K$ epochs (outer loops), with epoch $k$ beginning from parameters $(\theta_k, \lambda_k)$. For expositional clarity, we present critic estimation first, as the resulting estimates feed into the NPG computation.

\subsubsection{Critic Estimation}

The critic estimation phase pursues two objectives at epoch $k$: (i) estimating the average reward $J_g(\theta_k)$ for $g \in \{r, c\}$, and (ii) estimating the value function $V^{\pi_{\theta_k}}_g$. 

The average reward can be characterized as the solution to the optimization problem:
\begin{align*}
    \min_{\eta \in \mathbb{R}} R_g(\theta_k, \eta) := \frac{1}{2} \mathbb{E}\left[(g(s,a) - \eta)^2 \right]
\end{align*}
where the expectation is over the state-action occupancy measure. For the value function, we adopt a linear approximation architecture: we assume $V^{\pi_{\theta_k}}_g(\cdot)$ is well-approximated by $\hat{V}_g(\zeta^{\theta_k}_g, \cdot) := \langle \phi_g(\cdot), \zeta^{\theta_k}_g \rangle$, where $\phi_g : \mathcal{S} \to \mathbb{R}^m$ is a feature map satisfying $\|\phi_g(s)\| \leq 1$ for all $s \in \mathcal{S}$. The critic parameter $\zeta^{\theta_k}_g \in \mathbb{R}^m$ is obtained via:
\begin{align*}
\min_{\zeta \in \mathbb{R}^m} E_g(\theta_k, \zeta) := \mathbb{E} \left[\frac{1}{2} \left(V^{\pi_{\theta_k}}_g(s) - \hat{V}_g(\zeta, s)\right)^2 \right]
\end{align*}
where the expectation is taken over the stationary distribution \(d^{\pi_{\theta_k}}\). 

The gradients of these objectives can be expressed as:
\begin{align}
    \nabla_\eta R_g(\theta_k, \eta) &= \mathbb{E}_{(s, a) \sim \nu^{\pi_{\theta_k}}} \left[\eta - g(s,a)\right], \forall \eta \in \mathbb{R} \label{eqn: grad_avg}\\
    \nabla_\zeta E_g(\theta_k, \zeta) &= \mathbb{E}_{(s, a) \sim \nu^{\pi_{\theta_k}}} \left(\zeta^\top \phi_g(s) - Q^{\pi_{\theta_k}}_g(s,a)\right) \nonumber \\&\qquad \phi_g(s), \forall \zeta \in \mathbb{R}^m \label{eqn: grad_critic}
\end{align}
Substituting the Bellman equation~\eqref{eqn: Bellman Equation} in equation~\eqref{eqn: grad_critic}\ yields:
\begin{align}
    &\nabla_\zeta E_g(\theta_k, \zeta) = \mathbb{E}_{(s, a) \sim \nu^{\pi_{\theta_k}}} \Big[ \Big(\zeta^\top \phi_g(s) - g(s,a) \nonumber \\
    &\quad + J_g(\theta_k) - \mathbb{E}_{s' \sim P(\cdot|s,a)}[V^{\pi_{\theta_k}}_g(s')]\Big) \phi_g(s) \Big] \label{eqn: grad_critic_bellman}
\end{align}

Since exact computation of these gradients is infeasible without knowledge of $P$ and $\nu^{\pi_{\theta_k}}_g$, we perform $H$ inner loop iterations starting from $\eta^k_{g,0} = 0$ and $\zeta^k_{g,0} = 0$:
\begin{align}
    \eta^k_{g,h+1} &= \eta^k_{g,h} - c_\gamma \gamma_\xi \hat{\nabla}_\eta R_g(\theta_k, \eta^k_{g,h}) \nonumber \\ 
    \zeta^k_{g,h+1} &= \zeta^k_{g,h} - \gamma_\xi \hat{\nabla}_\zeta E_g(\theta_k, \xi^k_{g,h}) \label{eqn: critic_update}
\end{align}
where $c_\gamma$ is a constant, $\gamma_\xi$ is the learning rate, $h \in \{0, \ldots, H-1\}$, and $\hat{\nabla}_\eta R_g(\theta_k, \eta^k_{g,h})$, $\hat{\nabla}_\zeta E_g(\theta_k, \xi^k_{g,h})$ denote estimates of the corresponding gradients. Here we define $\xi^k_{g,h} \triangleq [(\eta^k_{g,h})^\top, (\zeta^k_{g,h})^\top]^\top$. Note that the gradient $\nabla_\zeta E_g(\theta_k, \zeta^k_{g,h})$ in~\eqref{eqn: grad_critic_bellman} depends on both $\zeta^k_{g,h}$ and $J_g(\theta_k)$, making its estimate a function of both $\eta^k_{g,h}$ and $\zeta^k_{g,h}$. After $H$ iterations, we obtain $\eta^k_{g,H}$ and $\zeta^k_{g,H}$ as estimates of $J_g(\theta_k)$ and $\zeta^{\theta_k}_g$, respectively.

We can write the updates~\eqref{eqn: critic_update} compactly as:
\begin{align}
    \xi^k_{g,h+1} = \xi^k_{g,h} - \gamma_\xi v_g(\theta_k, \xi^k_{g,h}), \quad v_g(\theta_k, \xi^k_{g,h}) \triangleq \nonumber \\ 
    [c_\gamma \hat{\nabla}_\eta R_g(\theta_k, \eta^k_{g,h}), (\hat{\nabla}_\zeta E_g(\theta_k, \xi^k_{g,h}))^\top]^\top \label{eqn: critic_compact}
\end{align}

For a given pair $(k,h)$, let $\mathcal{T}_{kh} = \{(s^t_{kh}, a^t_{kh}, s^{t+1}_{kh})\}_{t=0}^{l_{kh}-1}$ denote a trajectory of length $l_{kh}$ induced by $\pi_{\theta_k}$. From a single transition $z^j_{kh} = (s^j_{kh}, a^j_{kh}, s^{j+1}_{kh})$, we construct an estimate of $v_g(\theta_k, \xi^k_{g,h})$ following~\eqref{eqn: grad_avg} and~\eqref{eqn: grad_critic_bellman}:
\begin{align}
    &v_g(\theta_k, \xi^k_{g,h}; z^j_{kh}) = \mathbf{A}_{g}(\theta_k; z_{kh}^{j})\xi^k_{g,h} - \mathbf{b}_{g}(\theta_k; z_{kh}^{j}), \label{eqn: single_sample_critic}\\
    &\mathbf{A}_{g} = \begin{bmatrix} c_\gamma & 0 \\ \phi_g(s^j_{kh}) & \phi_g(s^j_{kh})(\phi_g(s^j_{kh}) - \phi_g(s^{j+1}_{kh}))^\top \end{bmatrix}, \nonumber \\
    &\mathbf{b}_{g} = \begin{bmatrix} c_\gamma g(s^j_{kh}, a^j_{kh}) \\ g(s^j_{kh}, a^j_{kh})\phi_g(s^j_{kh}) \end{bmatrix}. \nonumber
\end{align}

To reduce variance while maintaining unbiasedness, we apply the MLMC framework. At inner iteration $h$, we sample $Q^k_h \sim \text{Geom}(1/2)$ and generate a trajectory of length $l_{kh} = (2^{Q^k_h} - 1) \cdot \mathbb{I}(2^{Q^k_h} \leq T_{\max}) + 1$, where $T_{\max}$ is a predetermined maximum. For each level $j \in \{0, Q^k_h - 1, Q^k_h\}$, we compute:
\begin{align}
    v^j_{g,kh} = \frac{1}{2^j} \sum_{t=0}^{2^j - 1} v_g(\theta_k, \xi^k_{g,h}; z^t_{kh}) \label{eqn: time_avg_critic}
\end{align}
The MLMC estimate combines these levels via:
\begin{align}
    &\hat{v}_g(\theta_k, \xi^k_{g,h}) \nonumber \\
    &= v^0_{g,kh} + \begin{cases} 2^{Q^k_h}(v^{Q^k_h}_{g,kh} - v^{Q^k_h - 1}_{g,kh}), & 2^{Q^k_h} \leq T_{\max} \\ 0, & \text{otherwise} \end{cases}, \label{eqn: mlmc_critic}
\end{align}
where the expected number of samples is $\mathbb{E}[l_{kh}] = O(\log T_{\max})$. The MLMC estimator achieves bias comparable to naively averaging $T_{\max}$ samples while using only $O(\log T_{\max})$ samples in expectation. Moreover, since the geometric distribution does not require explicit knowledge of hitting times $C_{\mathrm{hit}}$ or target times $C_{\mathrm{tar}}$, (only an upper bound), our approach eliminates the need for these constants to be known a priori.

\subsubsection{Natural Policy Gradient (NPG) Estimation}

The critic estimation phase outputs $\xi^k_{g,H} = [(\eta^k_{g,H})^\top, (\zeta^k_{g,H})^\top]^\top$, where $\eta^k_{g,H}$ and $\zeta^k_{g,H}$ estimate $J_g(\theta_k)$ and the critic parameter $\zeta^{\theta_k}_g$, respectively. For notational convenience, we write $\xi^k_{g,H}$ as $\xi^k_g = [(\eta^k_g)^\top, (\zeta^k_g)^\top]^\top$. We estimate the NPG $\omega^*_{g,\theta_k}$ (defined in~\eqref{eqn: optimal npg direction}) through an $H$-step inner loop initialized at $\omega^k_{g,0} = 0$:
\begin{align}
    \omega^k_{g,h+1} = \omega^k_{g,h} - \gamma_\omega \hat{\nabla}_\omega f_g(\theta_k, \omega^k_{g,h}, \xi^k_g),
    \label{eqn: npg_update}
\end{align}
for \(h \in \{0, \ldots, H-1\}\), where $\hat{\nabla}_\omega f_g(\theta_k, \omega^k_{g,h}, \xi^k_g)$ is an MLMC-based estimate of $\nabla_\omega f_g(\theta_k, \omega^k_{g,h})$, and $f_g$ is defined in~\eqref{eqn: npg convex optimization}. To construct this estimate, we generate a $\pi_{\theta_k}$-induced trajectory $\mathcal{T}_{kh} = \{(s^t_{kh}, a^t_{kh}, s^{t+1}_{kh})\}_{t=0}^{l_{kh}-1}$ of length $l_{kh} = (2^{P^k_h} - 1) \cdot \mathbb{I}(2^{P^k_h} \leq T_{\max}) + 1$, where $P^k_h \sim \text{Geom}(1/2)$. For a single transition $z^j_{kh} = (s^j_{kh}, a^j_{kh}, s^{j+1}_{kh})$, we define:
\begin{align}
    \hat{\nabla}_\omega f_g(\theta_k, \omega^k_{g,h}, \xi^k_g; z^j_{kh}) \nonumber \\
    = \hat{F}(\theta_k; z^j_{kh})\omega^k_{g,h} - \hat{\nabla}_\theta J_g(\theta_k, \xi^k_g; z^j_{kh}) \label{eq:single_npg}
\end{align}
where
\begin{align}
    &\hat{F}(\theta_k; z^j_{kh}) \nonumber \\ 
    &= \nabla_\theta \log \pi_{\theta_k}(a^j_{kh}|s^j_{kh}) \otimes \nabla_\theta \log \pi_{\theta_k}(a^j_{kh}|s^j_{kh}), \label{eq:fisher_est} \\
    &\hat{\nabla}_\theta J_g(\theta_k, \xi^k_g; z^j_{kh})  \nonumber \\ 
    &= \hat{A}^{\pi_{\theta_k}}_g(\xi^k_g; z^j_{kh}) \nabla_\theta \log \pi_{\theta_k}(a^j_{kh}|s^j_{kh}), \label{eq:pg_est} \\
    &\hat{A}^{\pi_{\theta_k}}_g(\xi^k_g; z^j_{kh}) \nonumber \\ 
    &= g(s^j_{kh}, a^j_{kh}) - \eta^k_g + (\zeta^k_g)^\top (\phi_g(s^{j+1}_{kh}) - \phi_g(s^j_{kh})). \label{eq:adv_est}
\end{align}
where the advantage estimate in~\eqref{eq:adv_est} is a temporal difference (TD) error that relies on $\xi^k_g$ from the critic phase. We construct the MLMC estimate analogously to the critic case. For each level $j \in \{0, P^k_h - 1, P^k_h\}$, we compute:
\begin{align}
    u^j_{g,kh} = \frac{1}{2^j} \sum_{t=0}^{2^j - 1} \hat{\nabla}_\omega f_g(\theta_k, \omega^k_{g,h}, \xi^k_g; z^t_{kh}) \label{eqn: time_avg_npg}
\end{align}
The MLMC estimate is then:
\begin{align}
    &\hat{\nabla}_\omega f_g(\theta_k, \omega^k_{g,h}, \xi^k_g) \nonumber \\ 
    &= u^0_{g,kh} + \begin{cases} 2^{P^k_h}(u^{P^k_h}_{g,kh} - u^{P^k_h - 1}_{g,kh}), & 2^{P^k_h} \leq T_{\max} \\ 0, & \text{otherwise} \end{cases} \label{eqn: mlmc_npg}
\end{align}
Applying this estimate in~\eqref{eqn: npg_update} over $H$ iterations yields the NPG estimate $\omega^k_{g,H}$. For notational simplicity, we denote $\omega^k_{g,H}$ as $\omega^k_g$.

\subsection{Parameter Updates}

Having obtained $\omega^k_g$ for $g \in \{r,c\}$ and $\eta^k_c$ from the estimation procedures, we form the combined NPG estimate $\omega_k = \omega^k_r + \lambda_k \omega^k_c$, where $\eta^k_c$ estimates $J_c(\theta_k)$. At epoch $k$, the policy and dual parameters are updated according to~\eqref{eq:approx_update}:
\begin{align}
\theta_{k+1} = \theta_k + \alpha \omega_k, \quad \lambda_{k+1} = \mathcal{P}_{\left[0, \frac{2}{\delta}\right]}\left[\lambda_k - \beta \eta^k_c\right] \label{eq:final_update}
\end{align}

The complete algorithm, incorporating burn-in, MLMC estimation for critic and NPG, and the primal-dual updates, is presented in Algorithm~\ref{alg: pdnac_bi}.

%% file: Sections/assumptions.tex
\section{Assumptions}
\label{sec: assumptions}
In this section, we formally state the assumptions required for our analysis. Our assumptions are natural extensions of those used in the unconstrained setting \citep{ganesh2025regret} to the constrained MDP framework.

\subsection{Critic Approximation}

We employ function approximation for both the reward and cost critics. For the critics, we use $\hat{V}_g(\zeta^g, s) = (\zeta^g)^\top \phi^g(s)$, where $\zeta^g \in \mathbb{R}^{m_g}$ are the critic parameters and $\phi^g : \mathcal{S} \to \mathbb{R}^{m_g}$ is a feature mapping for \(g = \{r, c\}\). We also assume that the feature vectors are bounded and linearly independent from each other \citep{ganesh2025regret}. The quality of these approximations is characterized by the following error measures:

\begin{definition}[Critic Approximation Errors]
\label{def:critic_approx}
We define the critic approximation error for \(g \in \{r, c\}\) as
\begin{equation}
    \epsilon_{\mathrm{app}}^g := \sup_{\theta \in \Theta} \inf_{\zeta^g \in \mathbb{R}^{m_g}} \frac{1}{2} \mathbb{E}_{s \sim d^{\pi_\theta}} \left[ \left( V_g^{\pi_\theta}(s) - \hat{V}_g(\zeta^g, s) \right)^2 \right],
\end{equation}
\end{definition}

These quantities measure the inherent approximation error resulting from the choice of feature space. A well-designed feature map yields small (or even vanishing) approximation error. This definition is standard in the actor-critic literature \citep{suttle2023beyond,chen2023finite,patel2024towards,ganesh2024sharper,xu2025global,wang2024non}.

For each policy $\pi_\theta$, we define the critic feature matrices
\begin{equation}
    M_\theta^g := \mathbb{E}_\theta \left[ \phi^g(s) \left( \phi^g(s) - \phi^g(s') \right)^\top \right]
\end{equation}
where the expectation $\mathbb{E}_\theta$ is taken over $s \sim d^{\pi_\theta}$ and $s' \sim P^{\pi_\theta}(s, \cdot)$. These matrices appear in the temporal-difference updates for the critics and play a crucial role in convergence analysis.

\begin{assumption}[Critic Matrix Lower Bounds]
\label{assumption: critic_matrix}
There exist constants $\lambda_g > 0$ such that for all $\theta \in \Theta$:
\begin{equation}
    x^\top M_\theta^g x \geq \lambda_g \|x\|^2, \quad \forall x \in \ker(M_\theta^r)^\perp,
\end{equation}
\end{assumption}

\noindent \textbf{Comments on Assumption~\ref{assumption: critic_matrix}.} This assumption is substantially weaker than the commonly imposed requirement of strict positive-definiteness, which appears in nearly all actor-critic works for ergodic MDPs \citep{suttle2023beyond, patel2024towards, wang2024non, panda2025two}. In the unichain setting, strict positive-definiteness of the critic matrices cannot be guaranteed in general, as the kernel structure depends on the transient states under the policy (see Section 4.2 in \citet{ganesh2025regret}). Our analysis accommodates this by working with projected iterates on the orthogonal complement of the kernel. This flexibility is essential for handling the more general unichain assumption, which permits both transient states and periodic behavior.

\subsection{Policy Parameterization}

We consider a parameterized policy class $\Pi = \{\pi_\theta : \theta \in \Theta\}$ where $\Theta \subseteq \mathbb{R}^d$. The following assumptions on the policy parameterization are standard in the policy gradient literature.

\begin{assumption}[Bounded and Lipschitz Score Function]
\label{assumption: score bounds}
For all $\theta, \theta_1, \theta_2 \in \Theta$ and $(s, a) \in \mathcal{S} \times \mathcal{A}$, the following hold:
\begin{align}
    \|\nabla_\theta \log \pi_\theta(a|s)\| \leq G_1 \\
    \|\nabla_\theta \log \pi_\theta(a|s) - \nabla_\theta \log \pi_{\theta'}(a|s)\| \leq G_2 \|\theta - \theta'\|
\end{align}
\end{assumption}

\begin{assumption}[Fisher Non-Degeneracy]
\label{assumption: fisher}
There exists a constant $\mu > 0$ such that $F(\theta) \succeq \mu I_d$ for all $\theta \in \Theta$, where $I_d$ denotes the $d \times d$ identity matrix.
\end{assumption}

\noindent \textbf{Comments on Assumptions~\ref{assumption: score bounds}--\ref{assumption: fisher}.} These are standard assumptions in the policy gradient literature \citep{liu2020improved, fatkhullin2023stochastic,masiha2022stochastic,mondal2024improved,wang2024non,suttle2023beyond,ganesh2025order,ganesh2025regret}. Assumption~\ref{assumption: score bounds} stipulates that the score function is both bounded and Lipschitz-continuous, which is frequently used in error decomposition and policy gradient estimation. Assumption~\ref{assumption: fisher} requires that the eigenvalues of the Fisher information matrix are uniformly bounded away from zero, ensuring that the natural policy gradient direction is well-defined. These assumptions have been verified for various policy classes, including Gaussian and Cauchy distributions with parameterized means and clipping, as well as certain neural parameterizations \citep{liu2020improved,fatkhullin2023stochastic}.

The expressivity of the policy class is characterized by the following error measure:

\begin{definition}[Policy Approximation Error]
\label{def:policy_bias}
Define $\epsilon_{\mathrm{bias}}$ as the least upper bound on the transferred compatible function approximation error:
\begin{align}
    \epsilon_{\mathrm{bias}} := \sup_{\theta \in \Theta} L_{\nu^{\pi_\theta^*}}(\omega_\theta^*; \theta),
\end{align}
\end{definition}
\noindent \textbf{Comments on $\epsilon_{\mathrm{bias}}$.} The term $\epsilon_{\mathrm{bias}}$ is standard in the literature on parameterized policy gradient methods \citep{agarwal2020optimality,fatkhullin2023stochastic,masiha2022stochastic,mondal2024improved,suttle2023beyond,ganesh2025order,ganesh2025regret,xu2019sample} and reflects the expressivity of the chosen policy class. When the policy class is sufficiently expressive to represent any stochastic policy, such as with softmax parameterization over a tabular space, we have $\epsilon_{\mathrm{bias}} = 0$ \citep{agarwal2021theory}. For more restrictive parameterizations that do not cover all stochastic policies, $\epsilon_{\mathrm{bias}} > 0$. Nevertheless, this bias is often negligible when using rich neural network parameterizations \citep{wang2019neural}. In the constrained setting, we take the maximum over both errors to ensure uniform convergence guarantees.

\noindent \textbf{Notation.} For convenience, we define $\epsilon_{\mathrm{app}} := \max\{\epsilon_{\mathrm{app}}^r, \epsilon_{\mathrm{app}}^c\}$ and $\lambda := \min\{\lambda_r, \lambda_c\}$ to simplify the presentation of our main results.

%% file: Sections/regret_analysis.tex
\section{Theoretical Analysis}
\label{sec:analysis}

In this section, we provide the convergence analysis of Algorithm~\ref{alg: pdnac_bi}. Our analysis proceeds in three steps: first, we establish the convergence of the critic parameters using the Multi-Level Monte Carlo (MLMC) estimator; second, we bound the estimation error of the Natural Policy Gradient (NPG) direction; finally, we combine these results to establish global regret bounds for the reward objective and cumulative constraint violation.

\subsection{Convergence of MLMC Critic Estimation}

A central challenge in unichain CMDPs is the presence of transient states, which can introduce significant bias if not handled correctly. Our algorithm utilizes a burn-in period $B$ to ensure the process reaches the recurrent class $\mathcal{S}_R^{\theta_k}$ (Assumption~\ref{assumption: unichain}), followed by an MLMC estimator to compute the critic updates. We first show that under the Slater condition (Assumption~\ref{assumption: slater}) and the critic matrix lower bound (Assumption~\ref{assumption: critic_matrix}), the critic parameters $\xi_{g, H}^k$ converge to the optimal projected parameters $\xi_g^*(\theta_k)$.

\begin{theorem}[Unichain Critic Convergence with MLMC]
    \label{theorem: unichain_critic_final}
    Consider the critic estimation subroutine in Algorithm 1 for a unichain CMDP. Suppose the assumptions of Section~\ref{sec: assumptions} hold. Then conditioned on the event $\mathcal{E}_B$, the iterates satisfy:
    \begin{align*}
        \mathbb{E}_k [\|\Pi(\xi_{g, H}^k - \xi^*_g(\theta_k))\|^2 \mid \mathcal{E}_B] \nonumber &\leq \tilde{\mathcal{O}} \left(\frac{c_\gamma^2}{\lambda^2 T^2} + \frac{c_\gamma^4 C_{\mathrm{tar}}^2 }{\lambda^4 T_\mathrm{max}}\right) \\
        \|\Pi(\mathbb{E}_k[\xi_{g, H}^k \mid \mathcal{E}_B] - \xi^*_g(\theta_k))\|^2 &\leq \tilde{\mathcal{O}} \left(\frac{c_\gamma^2}{\lambda^2 T^2} + \frac{c_\gamma^4 C_{\mathrm{tar}}^2 }{\lambda^4 T_\mathrm{max}}\right)
    \end{align*}
\end{theorem}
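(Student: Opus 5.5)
Conditioned on $\mathcal{E}_B$, the burn-in has placed the chain in the recurrent class $\mathcal{S}_R^{\theta_k}$ before any critic sample is drawn. By the strong Markov property, the inner loop is therefore a stochastic linear recursion $\xi_{h+1}=\xi_h-\gamma_\xi(\hat{\mathbf{A}}_h\xi_h-\hat{\mathbf{b}}_h)$ driven by a Markov chain started inside $\mathcal{S}_R^{\theta_k}$. Here $\hat{\mathbf{A}}_h,\hat{\mathbf{b}}_h$ are the MLMC averages of $\mathbf{A}_g(\theta_k;z)$ and $\mathbf{b}_g(\theta_k;z)$ from \eqref{eqn: mlmc_critic}. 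Their stationary means are $\bar{\mathbf{A}}_g(\theta_k)=\begin{bmatrix} c_\gamma & 0\\ \mathbb{E}\phi_g & M^g_{\theta_k}\end{bmatrix}$ and $\bar{\mathbf{b}}_g(\theta_k)$. The target $\xi^*_g(\theta_k)$ solves $\bar{\mathbf{A}}_g\xi=\bar{\mathbf{b}}_g$ modulo $\ker\bar{\mathbf{A}}_g$, and $\Pi$ projects onto $\ker(\bar{\mathbf{A}}_g)^\perp$. The plan is to apply the generic stochastic linear recursion result (appendix) with bias and variance parameters taken from Lemma~\ref{lemma: mlmc statements} using $C_1=C_{\mathrm{tar}}$. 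This is precisely why the bound involves $C_{\mathrm{tar}}$ and not $C$.

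\textbf{Step 1: restricted strong monotonicity.} Using Assumption~\ref{assumption: critic_matrix}, I will show $x^\top\bar{\mathbf{A}}_g x\ge \tfrac{\lambda}{2}\|x\|^2$ on $\ker(\bar{\mathbf{A}}_g)^\perp$ for a suitable choice of $c_\gamma$. The $\eta$-block is handled by $c_\gamma$, and the off-diagonal term $\eta\,\mathbb{E}[\phi]^\top\zeta$ is absorbed via Young's inequality once $c_\gamma\gtrsim 1/\lambda$. I also need kernel inclusion: every sampled $\mathbf{A}_g(\theta_k;z)$ with $s$ recurrent annihilates $\ker\bar{\mathbf{A}}_g$. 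This holds because, in the recurrent class, kernel directions of $M^g_{\theta}$ correspond to features that are constant along the trajectory's support. Conditioning on $\mathcal{E}_B$ supplies exactly this property. Consequently $\Pi$ commutes with the recursion, $\Pi\xi_{h+1}=\Pi\xi_h-\gamma_\xi\Pi(\hat{\mathbf{A}}_h\xi_h-\hat{\mathbf{b}}_h)$, and the kernel component of the error never enters the dynamics.

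\textbf{Step 2: noise bounds.} We have $\|\mathbf{A}_g(\cdot;z)\|\le O(c_\gamma)$ and $\|\mathbf{b}_g\|\le O(c_\gamma)$. Applying Lemma~\ref{lemma: mlmc statements} to $g(z)=\mathbf{A}_g(z)-\bar{\mathbf{A}}_g$ and $\mathbf{b}_g(z)-\bar{\mathbf{b}}_g$ with $\sigma=O(c_\gamma)$ and $\delta=0$ gives second-moment bounds $O(c_\gamma^2C_{\mathrm{tar}}^2\log T_{\max})$ and bias bounds $O(c_\gamma^2C_{\mathrm{tar}}^2/T_{\max})$. The same lemma also needs the bound $\|\xi_h\|$ bounded in expectation, which follows from the recursion being contractive on the projected space.

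\textbf{Step 3: recursion.} Expand $\|\Pi(\xi_{h+1}-\xi^*)\|^2$ and take the conditional expectation given $\mathcal{F}_h$. The drift gives a factor $(1-\gamma_\xi\lambda)$, the variance contributes $\gamma_\xi^2 c_\gamma^2C_{\mathrm{tar}}^2\log T_{\max}$, and the bias cross term contributes $\gamma_\xi\cdot\frac{1}{\lambda}c_\gamma^2C_{\mathrm{tar}}^2/T_{\max}$ after Young's inequality. The MLMC samples at different $h$ use fresh trajectories continuing the same chain. Because each restart state is still in $\mathcal{S}_R^{\theta_k}$, the bounds apply uniformly in $h$. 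Unrolling with $\gamma_\xi=\Theta(\log T/(\lambda H))$ and $H=\tilde\Theta(\cdot)$ yields $(1-\gamma_\xi\lambda)^H\|\Pi\xi^*\|^2\lesssim c_\gamma^2/(\lambda^2T^2)$, where $\|\xi^*\|=O(c_\gamma/\lambda)$. The remaining terms are a variance term $\gamma_\xi c_\gamma^2C_{\mathrm{tar}}^2\log T_{\max}/\lambda$ and a bias term $c_\gamma^4 C_{\mathrm{tar}}^2/(\lambda^4T_{\max})$. With $H$ chosen so that $\gamma_\xi\lesssim 1/T_{\max}$ up to logarithmic factors, the variance term is dominated by the bias term, which gives the first inequality. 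For the second inequality I take expectations of the linear recursion directly: the mean error evolves as $\Pi\mathbb{E}e_{h+1}=(I-\gamma_\xi\bar{\mathbf{A}})\Pi\mathbb{E}e_h+\gamma_\xi\cdot$(MLMC bias plus the correlation between $\hat{\mathbf{A}}_h$ and $\xi_h$). The correlation term is controlled by Cauchy--Schwarz using the first bound, and the result has the same order.

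The main obstacle is Step 1's kernel inclusion together with handling $\hat{\mathbf{A}}_h$ and $\xi_h$ under Markovian dependence. If exact inclusion fails for the $\eta$-coupling row, my fallback is to decompose $\xi$ into the $\eta$ coordinate, whose update is a scalar contraction independent of $\zeta$, and $\zeta$. I would then treat the $\eta$ error as an additional bias in the $\zeta$ recursion.
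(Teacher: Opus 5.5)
Your proposal follows the paper's proof structure. The paper also uses subspace positive definiteness of $\mathbf{A}_g(\theta_k)$ once $c_\gamma \gtrsim 1/\lambda$, kernel inclusion for transitions out of $\mathcal{S}_R^{\theta_k}$, and the MLMC bounds of Lemma~\ref{lemma: mlmc statements} with $C_1 = C_{\mathrm{tar}}$. It then runs the projected linear recursion with $\gamma_\xi = \Theta(\log T/(\lambda H))$ and $\|\xi^*_g(\theta_k)\| = O(c_\gamma/\lambda)$. The only structural difference is that the paper cites Theorem~\ref{theorem: unichain stochastic linear recursion} where you unroll the recursion by hand. Your fallback for the $\eta$-row is unnecessary: $\ker\mathbf{A}_g(\theta)=\{0\}\times\ker M^g_\theta$, and every sampled matrix from a recurrent state annihilates it, as in Lemma~\ref{lemma: kernel inclusion}.

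The one real divergence is the variance term in the first inequality. Applied with $\gamma_\xi = 8\log T/(\lambda H)$, the paper's recursion bound carries $\frac{c_\gamma^4C_{\mathrm{tar}}^2}{\lambda^4}\bigl(\frac{1}{H}+\frac{1}{T_{\max}}\bigr)$; see its appendix version of this theorem. The statement you were given drops the $1/H$. You correctly see that $\gamma_\xi\lambda^{-1}(\sigma_P^2\|\xi^*\|^2+\sigma_q^2)$ does not decay in $T_{\max}$. It must therefore be suppressed by taking $H\gtrsim T_{\max}$ up to logarithmic factors, and that is exactly what makes the first inequality true as stated. Be aware that this regime is incompatible with the choices $H=\Theta(\log T)$ and $T_{\max}=\Theta(T)$ used later in the paper. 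Under those choices only an $O(1)$ second-moment bound survives, which is in fact all the NPG analysis uses.

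Your second inequality does not need the large-$H$ condition. In the mean recursion, the coupling term is the conditional bias of $\hat{\mathbf{A}}_h$ acting on $\Pi\xi_h$, of size $\delta_P\bigl(\sqrt{\mathbb{E}\|\Pi(\xi_h-\xi^*)\|^2}+\|\xi^*\|\bigr)$. A uniform-in-$h$ bound $O(c_\gamma^2/\lambda^2)$ on the second moment already yields $c_\gamma^4C_{\mathrm{tar}}^2/(\lambda^4T_{\max})$, so you do not need your first inequality at $h=H$.

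Two minor corrections:
\begin{itemize}
\item The per-step variance and bias contributions of $\hat{\mathbf{A}}_h$ carry an extra factor $\|\xi^*\|^2=O(c_\gamma^2/\lambda^2)$ that you omit, although your final bias term includes it.
\item Control of $\|\Pi\xi_h\|$ is not needed by the MLMC lemma. It enters through absorbing $\gamma_\xi^2\sigma_P^2\|\Pi(\xi_h-\xi^*)\|^2$ into the drift, which requires the step-size condition $\gamma_\xi\lesssim\lambda/(c_\gamma^2C_{\mathrm{tar}}^2\log T_{\max})$ that the paper imposes explicitly.
\end{itemize}
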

The proof (detailed in Appendix~\ref{sec: critic analysis}) relies on characterizing the MLMC estimator's bias-variance tradeoff. By selecting a maximum trajectory length $T_{\max}$, the MLMC estimator achieves a bias comparable to a rollout of length $T_{\max}$ while maintaining low variance. We then use Theorem~\ref{theorem: unichain stochastic linear recursion} to bound the final critic error using stochastic linear recursions.

\subsection{Convergence of Natural Policy Gradient Estimates}

Building on the critic convergence, we analyze the NPG update. The NPG direction $\omega_k$ is estimated by solving a ridge regression problem involving the Fisher Information Matrix (FIM) and the gradient of the value function.

\begin{theorem}[NPG Convergence for Unichain CMDPs]
\label{theorem: npg_convergence}
Consider Algorithm~\ref{alg: pdnac_bi} under the setting of Theorem~\ref{theorem: unichain_critic_final} with step size $\gamma_\omega = \frac{8 \log T}{\mu H}$. Then, conditioned on the event $\mathcal{E}_B$:
\begin{align*}
    \mathbb{E}_k &[\|\omega_g^k - \omega_{g,k}^*\|^2] \\ 
    &\leq \mathcal{O}\left(\frac{G_1^2 C^2 c_\gamma^2}{\mu^2 T^2 \lambda^2} + \frac{G_1^6 C_{\mathrm{tar}}^2 c_\gamma^4 C^2}{\mu^4 \lambda^4} + \frac{G_1^2 \epsilon_{\mathrm{app}}}{\mu^2}\right)
\end{align*}
\begin{align*}
    |\mathbb{E}_k &[\omega_g^k] - \omega_{g,k}^*\|^2 \\ 
    &\leq \tilde{\mathcal{O}}\left(\frac{G_1^2 c_\gamma^2 C_{\mathrm{tar}}^4}{\mu^2 \lambda^2 T^2} + \frac{G_1^{10} C_{\mathrm{tar}}^4 c_\gamma^2 C^2}{\mu^6 T_{\mathrm{max}} \lambda^4} + \frac{G_1^2 \epsilon_{\mathrm{app}}}{\mu^2}\right)
\end{align*}
\end{theorem}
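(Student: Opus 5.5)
The NPG inner loop \eqref{eqn: npg_update} is a stochastic linear recursion $\omega^k_{g,h+1} = \omega^k_{g,h} - \gamma_\omega(\hat{F}_h \omega^k_{g,h} - \hat{\nabla}_\theta J_{g,h})$. Here both $\hat F_h$ and $\hat{\nabla}_\theta J_{g,h}$ are MLMC averages over a trajectory that, conditioned on $\mathcal{E}_B$ and by the strong Markov property, starts in the recurrent class $\mathcal{S}_R^{\theta_k}$. I would therefore prove the statement by invoking Theorem~\ref{theorem: unichain stochastic linear recursion}, exactly as in the proof of Theorem~\ref{theorem: unichain_critic_final}, with matrix $F(\theta_k)\succeq \mu I$ (Assumption~\ref{assumption: fisher}). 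In this case the kernel is trivial, so no projection is needed. The fixed point is not quite $\omega^*_{g,k}$. Because the advantage estimate \eqref{eq:adv_est} uses $\xi^k_g$ rather than the true $(J_g(\theta_k), V_g^{\pi_{\theta_k}})$, the recursion targets $\bar\omega^k_g := F(\theta_k)^{-1}\bar b_g(\xi^k_g)$, where $\bar b_g(\xi) = \mathbb{E}_{\nu^{\pi_{\theta_k}}}[\hat A_g(\xi;z)\nabla\log\pi_{\theta_k}]$. I would then split
\[
\|\omega^k_g-\omega^*_{g,k}\|^2 \le 2\|\omega^k_g-\bar\omega^k_g\|^2 + 2\|\bar\omega^k_g-\omega^*_{g,k}\|^2,
\]
and similarly for the bias of $\mathbb{E}_k[\omega^k_g]$.

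For the first term (the recursion error given the critic), I would bound the MLMC mean and variance using Lemma~\ref{lemma: mlmc statements} with $C_1=C_{\mathrm{tar}}$. The per-sample bounds are $\|\hat F\|\le G_1^2$ and $\|\hat\nabla J_g\|\le G_1(2+|\eta^k_g|+2\|\zeta^k_g\|)$, and the latter is controlled through the second moment of $\xi^k_g$ from Theorem~\ref{theorem: unichain_critic_final}. This gives variance $\tilde{\mathcal{O}}(G_1^4 C_{\mathrm{tar}}^2\log T_{\max}(1+\|\omega\|^2))$ and bias $\mathcal{O}(G_1^4 C_{\mathrm{tar}}^2 T_{\max}^{-1}(1+\|\omega\|^2))$. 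Theorem~\ref{theorem: unichain stochastic linear recursion} with $\gamma_\omega = 8\log T/(\mu H)$ then yields a contraction term $(1-\mu\gamma_\omega/2)^H\|\bar\omega\|^2 \le T^{-4}\|\bar\omega\|^2$, which is absorbed into the $1/T^2$ term. It also yields a variance term of order $\gamma_\omega\cdot\mathrm{variance}/\mu$ and a bias term of order $\mathrm{bias}/\mu^2$. Choosing $H$, and hence $T_{\max}$, appropriately produces the $G_1^6 C_{\mathrm{tar}}^2(\cdot)/\mu^4$ and $1/T_{\max}$ contributions. Because $\bar\omega^k_g$ depends on $\xi^k_g$, which is random but independent of the NPG-phase samples given $\mathcal{F}_k$, I would condition on $\xi^k_g$ first and take outer expectations afterward.

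For the second term (critic-induced error), I would write $\bar b_g(\xi^k_g)-\nabla J_g(\theta_k) = \mathbb{E}_\nu[(J_g-\eta^k_g + (\zeta^k_g)^\top(\phi(s')-\phi(s)) - (V(s')-V(s)))\nabla\log\pi]$. Then $\|\bar\omega^k_g-\omega^*_{g,k}\|^2 \le \mu^{-2}G_1^2\,\mathbb{E}[\cdot]^2$, which splits into the critic estimation error and the approximation error $\epsilon_{\mathrm{app}}$. This gives the $G_1^2\epsilon_{\mathrm{app}}/\mu^2$ term and a term of order $G_1^2\mu^{-2}\times$ (the bound of Theorem~\ref{theorem: unichain_critic_final}). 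The main obstacle here is that the critic only converges in the projected norm $\|\Pi(\cdot)\|$, so I must show that the component of $\xi^k_g-\xi^*_g$ in $\ker$ does not affect the TD advantage. For $v\in\ker(M_\theta^g)$, one has $v^\top M v=\tfrac12\mathbb{E}_{d}[(v^\top\phi(s)-v^\top\phi(s'))^2]=0$ by stationarity. Hence $v^\top\phi$ is constant along transitions on the recurrent class, so $(\zeta)^\top(\phi(s')-\phi(s))$ is unchanged $\nu$-almost surely. The bound then transfers with a factor $C^2$ for the norm of $\xi^*_g$, since $\|V\|_\infty\lesssim C$.

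For the bias bound, I would repeat the same argument with $\mathbb{E}_k[\omega^k_g]$, using the mean recursion. Jensen's inequality pushes the expectation inside, the MLMC bias (not variance) enters, and the critic bias bound from the second display of Theorem~\ref{theorem: unichain_critic_final} is used. Since the cross term $\hat F_h\omega_h$ couples the randomness, the mean recursion picks up a covariance term. I would bound it with Cauchy–Schwarz using the already-proved second-moment bound, which produces the extra $C_{\mathrm{tar}}^4$ and $G_1^{10}/\mu^6$ factors in the second display.
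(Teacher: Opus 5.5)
Your outline is sound and uses the same ingredients as the paper: Theorem~\ref{theorem: unichain stochastic linear recursion}, Lemma~\ref{lemma: mlmc statements}, the three-term TD decomposition (Bellman term vanishes, approximation term at most $4G_1^2\epsilon_{\mathrm{app}}$), and kernel invariance on recurrent transitions. The difference is how you organize them. The paper uses no intermediate target. It applies the recursion theorem once, with $P=F(\theta_k)$, $q=\nabla_\theta J_g(\theta_k)$ and fixed point $\omega^*_{g,k}$, and folds all critic-induced error into the $q$-bias parameters. Thus $\delta_q^2$ carries $G_1^2\|\Pi(\xi^k_g-\xi^*_g(\theta_k))\|^2+G_1^2\epsilon_{\mathrm{app}}$ plus the MLMC truncation term. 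For (C5), $\bar\delta_q^2$ carries $\|\Pi(\mathbb{E}_k[\xi^k_g]-\xi^*_g(\theta_k))\|^2$, obtained by pushing $\mathbb{E}_k$ through the TD advantage, which is affine in $\xi$.

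You instead run the recursion conditionally on the critic output toward $\bar\omega^k_g$, so the only bias inside the recursion is MLMC truncation. You then charge the critic separately through $\|\bar\omega^k_g-\omega^*_{g,k}\|\le\mu^{-1}\|\bar b_g(\xi^k_g)-\nabla_\theta J_g(\theta_k)\|$. For the bias bound you use the same affinity to get $\mathbb{E}_k[\bar\omega^k_g]=F(\theta_k)^{-1}\bar b_g(\mathbb{E}_k[\xi^k_g])$.

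The paper's route is shorter: one invocation and no extra triangle inequality. Yours is more careful. With your conditioning, (C1)--(C5) hold with constants that are deterministic given the critic phase. The paper's conditions instead have $\xi^k_g$-dependent right-hand sides, and its (C5) averages over randomness generated before the recursion starts, so its single invocation tacitly relies on the conditioning you make explicit. Your identity $v^\top M_\theta^g v=\tfrac12\mathbb{E}[(v^\top\phi_g(s)-v^\top\phi_g(s'))^2]$ also gives a self-contained replacement for the paper's appeal to Lemma~13 of \citet{ganesh2025regret}.

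A few steps need tightening.

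\textbf{Projected critic norm in the variance bound.} Your per-sample bound $\|\hat\nabla_\theta J_g\|\le G_1(2+|\eta^k_g|+2\|\zeta^k_g\|)$ uses the unprojected $\zeta^k_g$. Its kernel component is not controlled by Theorem~\ref{theorem: unichain_critic_final}. Apply your kernel-invariance argument here too, as the paper does. This gives $|\hat A_g(\xi^k_g;z)|\le\mathcal{O}(1+\|\Pi\xi^k_g\|)$ on recurrent transitions, and hence $\|\bar\omega^k_g\|\lesssim\mu^{-1}G_1(1+\|\Pi\xi^k_g\|)$.

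\textbf{Independence claim.} The NPG samples are not independent of $\xi^k_g$ given $\mathcal{F}_k$, because the NPG phase starts from the last critic-phase state. Condition on the whole critic-phase history instead. This suffices because the MLMC bounds with $C_1=C_{\mathrm{tar}}$ hold uniformly over starting states in $\mathcal{S}^{\theta_k}_R$.

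\textbf{Cross term in the mean recursion.} Route it through the tower property, $\mathbb{E}[(\hat F_h-F(\theta_k))\omega_h]=\mathbb{E}[(\mathbb{E}_h[\hat F_h]-F(\theta_k))\omega_h]$. Then only the conditional bias $\delta_P=\mathcal{O}(G_1^2C_{\mathrm{tar}}T_{\max}^{-1/2})$ multiplies $(\mathbb{E}\|\omega_h\|^2)^{1/2}$. Cauchy--Schwarz with $\sigma_P$ would lose the $T_{\max}^{-1}$ decay. The second display of Theorem~\ref{theorem: unichain stochastic linear recursion}, applied conditionally, already does this.

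\textbf{No tuning of $H$.} $H$ and $T_{\max}$ are independent parameters, and nothing needs to be chosen. With $\gamma_\omega=8\log T/(\mu H)$, the variance contribution is $(\log T/H)$ times a constant. That is exactly the non-decaying middle term of the first display.
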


This result (proven in Appendix~\ref{sec: npg analysis}) demonstrates that the error in the NPG direction is controlled by the critic approximation error $\epsilon_{\mathrm{app}}$ and terms that decay with the number of inner loop iterations $H = \Theta(\log T)$ and the effective trajectory length $T_{\max}$. Similar to the critic, we use stochastic linear recursions to bound NPG direction error.

\subsection{Regret and Constraint Violation Bounds}
Finally, we present the main result of this work: the finite-time regret analysis for the unichain CMDP. By leveraging the primal-dual framework, we bound the regret of the reward function and the cumulative violation of the constraint.

\begin{theorem}[Main Regret Bounds]
\label{theorem: main_regret}
Consider Algorithm 1 with loop sizes $H = \Theta(\log T)$ and $K = \Theta(T/\log T)$, and step sizes $\alpha, \beta = \Theta(1/\sqrt{T})$. Under the assumptions in Section~\ref{sec: assumptions}, the expected regret and cumulative constraint violation satisfy:
\begin{align*}
    &\mathbb{E}[\mathrm{Reg}_T] = \tilde{\mathcal{O}}\left( T(\sqrt{\epsilon_{\mathrm{bias}}} + \sqrt{\epsilon_{\mathrm{app}}}) + C_{\mathrm{tar}}^2C \sqrt{T} \right) \\
    &\mathbb{E}\left[ \sum_{t=0}^{T-1} J_c(\theta_t) \right] = \tilde{\mathcal{O}}\left( T(\sqrt{\epsilon_{\mathrm{bias}}} + \sqrt{\epsilon_{\mathrm{app}}}) + C_{\mathrm{tar}}^2C \sqrt{T} \right)
\end{align*}
\end{theorem}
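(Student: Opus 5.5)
We prove Theorem~\ref{theorem: main_regret} by combining a primal--dual Lyapunov analysis with the per-epoch estimation bounds of Theorems~\ref{theorem: unichain_critic_final} and~\ref{theorem: npg_convergence}. The first step is to apply Lemma~\ref{lemma: regret-burnin} with $B = c\log T$ and $c > 4C_{\mathrm{hit}}$. This makes the failure term $\mathcal{O}(T^2 2^{-B/(2C_{\mathrm{hit}})}/\log T) = o(1)$, so we may condition on $\mathcal{E}_B$ throughout. The same conditioning works for the constraint violation, since $|c|\le 1$ gives a crude bound of $T$ on the event $\mathcal{E}_B^c$.

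Next we split regret and violation into two pieces. The first is the epoch-level gap $\sum_k H\,\mathbb{E}(\cdot)$, with $H\log T_{\max}$ samples per epoch, so $T \approx K\cdot\tilde{\Theta}(1)$ samples in total. The second is the within-epoch Markovian deviation of the realized $r(s_t,a_t)$ from $J_r(\theta_k)$. We bound the deviation piece using Poisson-equation telescoping exactly as in Lemma~\ref{lemma: mse unichain markov chain}. Each epoch contributes $\mathcal{O}(C)$ in expectation, since $\|V\|_\infty \lesssim C$ on the unichain, giving $\mathcal{O}(KC) = \tilde{\mathcal{O}}(C\sqrt{T})$ after accounting for the burn-in cost $KB$. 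Here I would double-check that $KB$ is absorbed. Naively it is $\Theta(T)$ with $K=\Theta(T/\log T)$, so I would instead count regret per epoch as $(\text{epoch length})\times(J^*_r - J_r(\theta_k))$. The burn-in samples then contribute nothing beyond this optimality gap.

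For the optimality gap we use the standard NPG performance-difference argument for the Lagrangian. We take $\mathbb{E}_{s\sim d^{\pi^*}}[\mathrm{KL}(\pi^*(\cdot|s)\|\pi_{\theta_k}(\cdot|s))]$ as a potential. The smoothness from Assumption~\ref{assumption: score bounds} together with the average-reward performance difference lemma gives
\[
J_r^* - J_r(\theta_k) + \lambda_k\big(J_c^*-J_c(\theta_k)\big) \le \sqrt{\epsilon_{\mathrm{bias}}} + G_1\|\mathbb{E}_k\omega_k - \omega_k^*\| + \tfrac{\alpha G_2}{2}\mathbb{E}_k\|\omega_k\|^2 + \tfrac{1}{\alpha}(\Phi_k - \Phi_{k+1}).
\]
On the dual side, the projected update on $[0,2/\delta]$ gives
\[
\sum_k \lambda_k (J_c(\theta_k)-\cdot) \le \tfrac{\lambda^2}{2\beta} + \beta\sum_k(\eta_c^k)^2 + \text{(bias of }\eta^k_c).
\]
We plug in the bias and second-moment bounds from Theorem~\ref{theorem: npg_convergence}, with $\lambda_k \le 2/\delta$ so that $\omega_k = \omega_r^k + \lambda_k\omega^k_c$ inherits both bounds. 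We also plug in Theorem~\ref{theorem: unichain_critic_final} for $\eta^k_c$, choosing $T_{\max} = T$. With $\alpha,\beta = \Theta(1/\sqrt T)$, summing over $K$ epochs yields $K(\sqrt{\epsilon_{\mathrm{bias}}}+\sqrt{\epsilon_{\mathrm{app}}}) + \sqrt{T}(\cdot)$. The $C_{\mathrm{tar}}^2 C$ factor enters through the square root of the bias term of Theorem~\ref{theorem: npg_convergence}.

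For the constraint violation we follow the standard Slater trick. Using Assumption~\ref{assumption: slater}, strong duality bounds $\lambda^* \le 1/\delta$. We then evaluate the combined inequality at $\lambda = 2/\delta$ when $\sum_k J_c(\theta_k)<0$, and the same $\tilde{\mathcal{O}}$ rate follows; the projection radius $2/\delta > \lambda^*$ is what makes this work.

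The main obstacle is the primal step. Its difficulty lies in controlling the cross term $\lambda_k(J_c^*-J_c(\theta_k))$ together with the second-moment term $\alpha\mathbb{E}\|\omega_k\|^2$. The latter only decays as $\mathcal{O}(1)$ rather than with $T$, because the NPG variance term lacks $1/T_{\max}$. I would keep that term multiplied by $\alpha$ so that it contributes only $K\alpha = \tilde{\mathcal{O}}(\sqrt T)$. I would use only the bias bound, not the MSE, inside the linear term $G_1\|\mathbb{E}_k\omega_k-\omega^*_k\|$, which decays as $T_{\max}^{-1/2} = T^{-1/2}$.
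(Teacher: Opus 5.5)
\textbf{Gap: the within-epoch deviation term.} Your bound on $\mathbb{E}\sum_{k}\sum_{t\in\mathcal{I}_k}\bigl(J_r(\theta_k)-r(s_t,a_t)\bigr)$ does not give the claimed rate. Telescoping the Poisson equation inside each epoch leaves a boundary term $V^{\pi_{\theta_k}}_r(s^{\mathrm{last}}_k)-V^{\pi_{\theta_k}}_r(s^{\mathrm{first}}_k)$ of size $\mathcal{O}(C)$, so the sum over epochs is $\mathcal{O}(KC)$. With $K=\Theta(T/\log T)$ this is $\tilde{\Theta}(CT)$, which is linear in $T$. Your identity $\mathcal{O}(KC)=\tilde{\mathcal{O}}(C\sqrt{T})$ would require $K=\tilde{\mathcal{O}}(\sqrt{T})$. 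The paper flags exactly this obstruction: the MLMC design forces many short epochs, so the per-epoch $\mathcal{O}(C)$ bookkeeping of earlier analyses no longer yields a sublinear bound. Charging burn-in samples to (epoch length)$\times(J^*_r-J_r(\theta_k))$ handles the optimality-gap part, but it does nothing for these boundary terms.

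\textbf{Missing idea: telescope across epochs, not within them.} The last state of epoch $k$ is the first state of epoch $k+1$. Hence the boundary terms pair up into differences $V^{\pi_{\theta_{k}}}_r(s)-V^{\pi_{\theta_{k+1}}}_r(s)$ at the shared boundary state. What remains is an $\mathcal{O}(C)$ contribution from the two ends plus a zero-mean martingale. Bound each paired difference by a Lipschitz-in-$\theta$ constant $L_V$ of the value function times $\|\theta_{k+1}-\theta_k\|=\alpha\|\omega_k\|$. The total is then $\mathcal{O}\bigl(L_V\,\alpha\sum_k\mathbb{E}\|\omega_k\|\bigr)+\mathcal{O}(C)$, which is $\tilde{\mathcal{O}}(\sqrt{T})$. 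This uses that $\mathbb{E}\|\omega_k\|$ is $T$-independent, which follows from the second-moment bound of Theorem~\ref{theorem: npg_convergence} together with $\|\omega_k^*\|\le\|\nabla_\theta\mathcal{L}(\theta_k,\lambda_k)\|/\mu$. So the small primal step $\alpha=\Theta(1/\sqrt{T})$, not the epoch structure, is what makes this term sublinear.

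\textbf{Caveat on the Lipschitz constant.} The paper takes $L_V=(1+2C)G_1$, but its auxiliary lemma differentiates only through $\pi_\theta(\cdot|s)$ and ignores the $\theta$-dependence of $Q^{\pi_\theta}_r$. You would therefore need a genuine proof of the Lipschitz property. The constant may be larger, but any $T$-independent constant preserves the rate.

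\textbf{What matches the paper.} The rest of your plan follows the paper's route through Lemmas 4.6 and G.6 of \citet{xu2025global}. This covers the KL-potential Lagrangian argument, using the bias bound in the linear term and keeping the second moment multiplied by $\alpha$. It also covers projected dual descent driven by the bias of $\eta^k_c$, and the Slater argument at $\lambda=2/\delta$ for the violation, applied to the deterministic quantity $\mathbb{E}\sum_k J_c(\theta_k)$.
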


\paragraph{Proof Sketch of Theorem~\ref{theorem: main_regret}}
We provide the full proof in Appendix~\ref{sec: unichain cmdp regret analysis proofs}. The proof begins by decomposing the total regret into a Lagrangian primal-dual gap and a drift term associated with the value function. We analyze this decomposition conditioned on the high-probability event $\mathcal{E}_B$, which guarantees that the burn-in period $B = \Theta(\log T)$ successfully places the system within the recurrent class. Inside this event, the unichain MDP behaves effectively like an ergodic chain on the recurrent states. We utilize the convergence bounds derived for the MLMC-based critic (Theorem~\ref{theorem: unichain_critic_final}) and the NPG estimator (Theorem~\ref{theorem: npg_convergence}) to bound the optimization error of the Lagrangian function by $\tilde{\mathcal{O}}(1/\sqrt{T})$ plus the inherent approximation errors $\epsilon_{\text{bias}}$ and $\epsilon_{\text{app}}$. Crucially, the stability of the dual updates ensures that the constraint violation is bounded by the regret of the Lagrangian plus a term decaying as $O(1/\sqrt{T})$. Combining the rapid decay of the burn-in failure probability with the finite-time convergence of the primal-dual iterates yields the final regret and violation bounds, which depend on the unichain complexity constants $C_{\text{hit}}$ and $C_{\text{tar}}$ rather than the mixing time.

%% file: Sections/conclusion.tex
\section{Conclusion.}
In this paper, we provided the first order-optimal regret analysis for infinite-horizon average-reward Constrained MDPs with general policy and critic parameterizations under the unichain assumption. By developing a Multi-Level Monte Carlo estimator specifically designed for unichain dynamics and combining it with a logarithmic burn-in strategy, we avoided reliance on mixing-time oracles and restrictive ergodicity assumptions. Our Primal-Dual Natural Actor-Critic algorithm achieves $\tilde{\mathcal{O}}(\sqrt{T})$ regret and constraint violation, establishing the first theoretical guarantees for general parameterized CMDPs in the unichain setting.

%% file: Sections/Appendix/related_work.tex
\section{Related Works}
\label{sec: related work}
\subsection{Average-Reward Constrained MDPs.}
Average-reward CMDPs have been studied extensively, starting from tabular and model-based formulations in the ergodic setting, where exponential mixing ensures aperiodicity and irreducibility. Early work leveraged linear programming or posterior sampling techniques to obtain $\tilde{\mathcal{O}}(\sqrt{T})$ regret guarantees \citep{chen2022learning, singh2020learning,agarwal2022regret,agarwal2022concave}. To handle larger state spaces, subsequent efforts shifted toward model-free, policy-based algorithms. In the tabular setting, \citet{ding2020natural} and \citet{wei2022provably} established convergence guarantees for natural policy gradient and primal-dual methods, while \citet{ghosh2023achieving} achieved sublinear regret with zero constraint violation for linear CMDPs.

The setting with general policy parametrization is substantially more challenging. Without the constraints, the problem has been recently studied in \cite{bai2024regret,patel2024towards,ganesh2024sharper}. 
\citet{bai2024learning} provided the first global convergence guarantees for primal-dual policy gradient methods, though with a suboptimal rate of $\tilde{\mathcal{O}}(T^{-1/5})$. More recently, \citet{xu2025global} achieved the order-optimal $\tilde{\mathcal{O}}(1/\sqrt{T})$ rate using a Natural Actor-Critic (NAC) framework combined with variance-reduced gradient estimation. However, all of these results rely critically on an ergodicity assumption, which ensures uniform mixing times and bounded hitting times across policies. This assumption excludes unichain MDPs with transient states, where the induced stationary distribution and mixing behavior depend on the policy. Our work follows the primal-dual NPG paradigm of \citet{xu2025global} but removes the ergodicity requirement, extending order-optimal guarantees to the unichain setting.

\subsection{Unichain and Non-Ergodic MDPs.}
Relaxing ergodicity is a longstanding challenge in average-reward reinforcement learning. The unichain assumption, which permits transient states and periodic behavior, is the weakest condition under which the average-reward Bellman equations are well-defined \citep{puterman2014markov}. In tabular settings, value-based methods such as UCRL2 \citep{jaksch2010near} and its refinements \citep{zhang2023sharper} naturally handle unichain dynamics. More recently, \citet{agrawal2024optimistic} and \citet{li2025stochastic} achieved near-optimal guarantees for unichain MDPs using optimistic or generative-model-based approaches, though these methods do not extend to general policy parameterizations.

For unichain methods with general function approximation, progress has been limited. The most closely related work is \citet{ganesh2025regret}, which established $\tilde{\mathcal{O}}(\sqrt{T})$ regret bounds for unconstrained unichain MDPs. Their analysis relies on large batch sizes $B = \Theta(\sqrt{T})$ to mitigate periodicity and control bias. While effective in unconstrained settings, such batching is incompatible with constrained optimization, where delayed dual updates can lead to significant constraint violation. Our work addresses this gap by developing variance-reduced gradient estimators tailored to unichain dynamics, enabling batch sizes of only $\mathcal{O}(\log T)$ and supporting stable primal-dual updates in CMDPs.

%% file: Sections/Appendix/algorithm.tex
\section{Algorithm}
\begin{algorithm}[H]
\caption{Unichain Primal-Dual Natural Actor-Critic with Burn-In (PDNAC-BI)}
\label{alg: pdnac_bi}
\begin{algorithmic}[1]
\STATE \textbf{Input:} Initial parameters $\theta_0$, $\lambda_0 = 0$, learning rates $\alpha, \beta, \gamma_\xi, \gamma_\omega$, critic parameter $c_\gamma$, initial state $s_0 \sim \rho$, loop sizes $K, H$, burn-in length $B$, maximum trajectory length $T_{\max}$
\FOR{$k = 0, 1, \ldots, K-1$}
    \STATE $\xi^k_{g,0} \leftarrow 0$, $\omega^k_{g,0} \leftarrow 0$
    \FOR{$b = 0, 1, \ldots, B-1$}
        \STATE Sample action $a^b_{k} \sim \pi_{\theta_k}(\cdot|s^b_{k})$
        \STATE Observe next state $s^{b+1}_{k} \sim P(\cdot|s^t_{kh}, a^t_{k})$
    \ENDFOR 
    \STATE $s_0 \leftarrow s^{B}_{k}$
    \FOR{$h = 0, 1, \ldots, H-1$}
        \STATE $s^0_{kh} \leftarrow s_0$
        \STATE Draw $Q^k_h \sim \text{Geom}(1/2)$, set $l_{kh} \leftarrow (2^{Q^k_h} - 1)\cdot \mathbb{I}(2^{Q^k_h} \leq T_{\max}) + 1$
        \FOR{$t = 0, 1, \ldots, l_{kh} - 1$}
            \STATE Sample action $a^t_{kh} \sim \pi_{\theta_k}(\cdot|s^t_{kh})$
            \STATE Observe $s^{t+1}_{kh} \sim P(\cdot|s^t_{kh}, a^t_{kh})$ and $g(s^t_{kh}, a^t_{kh})$
        \ENDFOR
        \STATE Compute MLMC estimate $\hat{v}_g(\theta_k, \xi^k_{g,h})$ using~\eqref{eqn: mlmc_critic}
        \STATE Update $\xi^k_{g,h+1} \leftarrow \xi^k_{g,h} - \gamma_\xi \hat{v_g}(\theta_k, \xi^k_{g,h})$
        \STATE $s_0 \leftarrow s^{l_{kh}}_{kh}$
    \ENDFOR
    \STATE $\xi^k_g \leftarrow \xi^k_{g,H}$
    \FOR{$h = 0, 1, \ldots, H-1$}
        \STATE $s^0_{kh} \leftarrow s_0$
        \STATE Draw $P^k_h \sim \text{Geom}(1/2)$, set $l_{kh} \leftarrow (2^{P^k_h} - 1)\cdot \mathbb{I}(2^{P^k_h} \leq T_{\max}) + 1$
        \FOR{$t = 0, 1, \ldots, l_{kh} - 1$}
            \STATE Sample action $a^t_{kh} \sim \pi_{\theta_k}(\cdot|s^t_{kh})$
            \STATE Observe $s^{t+1}_{kh} \sim P(\cdot|s^t_{kh}, a^t_{kh})$ and $g(s^t_{kh}, a^t_{kh})$
        \ENDFOR
        \STATE Compute $\hat{\nabla}_\omega f_g(\theta_k, \omega^k_{g,h}, \xi^k_g)$ using~\eqref{eqn: mlmc_npg}
        \STATE Update $\omega^k_{g,h+1} \leftarrow \omega^k_{g,h} - \gamma_\omega \hat{\nabla}_\omega f_g(\theta_k, \omega^k_{g,h}, \xi^k_g)$
        \STATE $s_0 \leftarrow s^{l_{kh}}_{kh}$
    \ENDFOR
    \STATE $\omega^k_g \leftarrow \omega^k_{g,H}$

    \STATE $\omega_k \leftarrow \omega^k_r + \lambda_k \omega^k_c$
    \STATE $\theta_{k+1} \leftarrow \theta_k + \alpha \omega_k$
    \STATE $\lambda_{k+1} \leftarrow \mathcal{P}_{[0, 2/\delta]}[\lambda_k - \beta \eta^k_c]$
\ENDFOR
\STATE \textbf{Return:} $\{\theta_k\}_{k=0}^K$
\end{algorithmic}
\end{algorithm}

%% file: Sections/Appendix/stochastic_linear_recursion.tex
\section{Unichain Stochastic Linear Recursions}
\label{sec: unichain stochastic linear recursions}
Consider a general stochastic linear recursion of the form:
\begin{equation}
    x_{h+1} = x_h - \bar{\beta}(P_h x_h - q_h)
\end{equation}
where $P_h$ and $q_h$ are noisy estimates of $P \in \mathbb{R}^{n \times n}$ and $q \in \mathbb{R}^n$, respectively, and $h \in \{0, \dots, H-1\}$. Let $\mathbb{E}_h$ denote the conditional expectation given the history up to step $h$.

\subsection*{Conditions (C1--C9)}
Assume that the following conditions hold for all $h$:
\begin{itemize}
    \item[\textbf{(C1)}] Variance of $P_h$: $\mathbb{E}_h [\|P_h - P\|^2] \leq \sigma_P^2$
    \item[\textbf{(C2)}] Bias of $P_h$: $\|\mathbb{E}_h [P_h] - P\|^2 \leq \delta_P^2$
    \item[\textbf{(C3)}] Variance of $q_h$: $\mathbb{E}_h [\|q_h - q\|^2] \leq \sigma_q^2$
    \item[\textbf{(C4)}] Bias of $q_h$: $\|\mathbb{E}_h [q_h] - q\|^2 \leq \delta_q^2$
    \item[\textbf{(C5)}] Total Bias of $q_h$: $\|\mathbb{E} [q_h] - q\|^2 \leq \bar{\delta}_q^2$
    \item[\textbf{(C6)}] Norm bound on $P$: $\|P\| \leq \Lambda_P$
    \item[\textbf{(C7)}] Norm bound on $q$: $\|q\| \leq \Lambda_q$
    \item[\textbf{(C8)}] Subspace Positive Definiteness: $x^\top P x \geq \lambda_P \|x\|^2$ for all $x \in \text{ker}(P)^\perp$
    \item[\textbf{(C9)}] Kernel Inclusion: $\text{ker}(P) \subseteq \text{ker}(P_h)$
\end{itemize}

\begin{theorem}[Theorem 4, \citep{ganesh2025regret}]
\label{theorem: unichain stochastic linear recursion}
Consider the recursion (1) and suppose (C1)--(C9) hold. Further, let $\delta_P \leq \lambda_P/8$. Then, the following bounds hold for all $H \geq 1$:

\begin{equation*}
    \mathbb{E} [\|\Pi(x_H - x^*)\|^2] \leq \mathcal{O} \left( \exp \left( -\frac{H \bar{\beta}\lambda_P}{4} \right) R_0^2 + \bar{\beta} \lambda_P^{-1}\left( \sigma_P^2 \lambda_P^{-2} \Lambda_q^2 + \sigma_q^2\right) + \lambda_P^{-2} \left(\delta_P^2 \lambda_P^{-2} \Lambda_q^2 + \delta_q^2\right) \right)
\end{equation*}

\begin{equation*}
    \|\Pi(\mathbb{E}[x_H] - x^*)\|^2 \leq \mathcal{O} \left( \exp \left( -\frac{H\bar{\beta}\lambda_P}{4} \right) R_0^2 + \left(\lambda_P^{-2} + \bar{\beta} \lambda_P^{-1}\right)\left( \delta_P^2 R_0^2 + \delta_P^2 \Lambda_P^2 \lambda_P^{-2} + \bar{\delta}_q^2\right)\right)
\end{equation*}
where $R_0 = \|\Pi(x_0 - x^*)\|^2$, $x^* = P^\dagger q$, and $\Pi$ denotes the projection operator onto the space $\text{ker}(P)^\perp$.
\end{theorem}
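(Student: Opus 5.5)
The plan is to run the classical one-step contraction argument for linear stochastic approximation, but only on the subspace $\ker(P)^\perp$. The kernel inclusion (C9) makes the dynamics along $\ker(P)$ irrelevant.

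\textbf{Step 1: a projected error recursion.} Let $e_h := \Pi(x_h - x^*)$. By (C9), $P_h v = 0$ for every $v \in \ker(P)$, so $P_h x_h = P_h \Pi x_h$. Since $x^* = P^\dagger q \in \ker(P)^\perp$, we also have $\Pi x^* = x^*$, hence $P_h x_h = P_h(e_h + x^*)$. I will use $Px^* = q$, which holds whenever $q \in \mathrm{range}(P)$; this is the case in our critic and NPG applications. Otherwise the residual $(I - PP^\dagger)q$ has to be tracked as an extra bias term. Projecting the recursion then gives
\begin{equation*}
e_{h+1} = (I - \bar\beta \Pi P) e_h - \bar\beta \Pi\big[(P_h - P)(e_h + x^*) - (q_h - q)\big].
\end{equation*}
This is a closed recursion in $e_h$, which lives in $\ker(P)^\perp$.

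\textbf{Step 2: a one-step contraction in mean square.} I would expand $\|e_{h+1}\|^2$ and take $\mathbb{E}_h$.
\begin{itemize}
\item \emph{Deterministic part.} Using (C8) for $e_h \in \ker(P)^\perp$ and (C6), we get $\|(I - \bar\beta\Pi P)e_h\|^2 \le (1 - 2\bar\beta\lambda_P + \bar\beta^2\Lambda_P^2)\|e_h\|^2 \le (1 - \bar\beta\lambda_P)\|e_h\|^2$, provided $\bar\beta \le \lambda_P/\Lambda_P^2$.
\item \emph{Cross term.} Only the conditional mean of the noise enters it, i.e.\ $(\mathbb{E}_h P_h - P)(e_h + x^*) - (\mathbb{E}_h q_h - q)$. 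By (C2), (C4), its norm is at most $\delta_P\|e_h\| + \delta_P\|x^*\| + \delta_q$, with $\|x^*\| \le \Lambda_q/\lambda_P$ by (C7)--(C8). Because $\delta_P \le \lambda_P/8$, the $\delta_P\|e_h\|^2$ piece is absorbed into the contraction. Young's inequality on the rest, $2\bar\beta\, a\, b \le \tfrac{\bar\beta\lambda_P}{4} a^2 + \tfrac{4\bar\beta}{\lambda_P} b^2$, produces $\bar\beta\lambda_P^{-1}(\delta_P^2\Lambda_q^2\lambda_P^{-2} + \delta_q^2)$.
\item \emph{Quadratic noise term.} By (C1), (C3) it is bounded by $3\bar\beta^2\big(\sigma_P^2\|e_h\|^2 + \sigma_P^2\Lambda_q^2\lambda_P^{-2} + \sigma_q^2\big)$. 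The $\sigma_P^2\|e_h\|^2$ piece is absorbed for $\bar\beta \lesssim \lambda_P/\sigma_P^2$.
\end{itemize}
Taking total expectations, this yields
\begin{equation*}
\mathbb{E}\|e_{h+1}\|^2 \le (1 - \tfrac{\bar\beta\lambda_P}{4})\,\mathbb{E}\|e_h\|^2 + \bar\beta^2(\sigma_P^2\lambda_P^{-2}\Lambda_q^2 + \sigma_q^2) + \bar\beta\lambda_P^{-1}(\delta_P^2\lambda_P^{-2}\Lambda_q^2 + \delta_q^2).
\end{equation*}
Unrolling and summing the geometric series (sum $\le 4/(\bar\beta\lambda_P)$) gives the first bound. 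The initial error decays as $(1-\bar\beta\lambda_P/4)^H \le \exp(-H\bar\beta\lambda_P/4)$, the variance contributes $\bar\beta\lambda_P^{-1}(\cdot)$, and the bias contributes $\lambda_P^{-2}(\cdot)$.

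\textbf{Step 3: the bias bound $\|\Pi(\mathbb{E}[x_H]-x^*)\|^2$.} Take full expectations in the Step 1 recursion. The term $\mathbb{E}[(P_h-P)e_h] = \mathbb{E}[(\mathbb{E}_h P_h - P)e_h]$ has norm at most $\delta_P\sqrt{\mathbb{E}\|e_h\|^2}$. The terms $\Pi(\mathbb{E}P_h - P)x^*$ and $\Pi(\mathbb{E}q_h - q)$ are controlled by $\delta_P\Lambda_q/\lambda_P$ and by (C5), respectively. The matrix $(I - \bar\beta\Pi P)$ contracts by a factor $(1-\bar\beta\lambda_P/2)$ on $\ker(P)^\perp$. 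Hence $\|\mathbb{E}e_{h+1}\| \le (1-\bar\beta\lambda_P/2)\|\mathbb{E}e_h\| + \bar\beta(\delta_P\sqrt{\mathbb{E}\|e_h\|^2} + \delta_P\Lambda_q\lambda_P^{-1} + \bar\delta_q)$. I then plug in a uniform-in-$h$ version of the Step 2 bound for $\mathbb{E}\|e_h\|^2$ (using $\le R_0^2$ plus stationary terms), unroll, and square. This produces the factor $(\lambda_P^{-2} + \bar\beta\lambda_P^{-1})(\delta_P^2R_0^2 + \delta_P^2\Lambda_P^2\lambda_P^{-2} + \bar\delta_q^2)$.

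\textbf{Main obstacle.} The delicate step is the bookkeeping in Step 2. Two things must happen together:
\begin{itemize}
\item The bias term $\delta_P\|e_h\|$ and the variance term $\sigma_P^2\|e_h\|^2$, both multiplicative in the error, must be absorbed into the contraction without destroying it. This is exactly where $\delta_P \le \lambda_P/8$ and a small step size $\bar\beta$ are used.
\item The projection must commute correctly with $P_h$. This works only because (C9) lets us replace $x_h$ by $\Pi x_h$.
\end{itemize}
I would first verify the absorption constants carefully, and then handle the possible residual $(I-PP^\dagger)q$ as an additional $\delta_q$-type term if $q \notin \mathrm{range}(P)$.
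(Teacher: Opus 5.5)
\emph{Steps 1--2: correct.} The paper does not prove this statement; it imports it from \citet{ganesh2025regret}. Your Steps 1--2 are the standard projected-contraction argument, and they are correct:
\begin{itemize}
\item (C9) lets you replace $x_h$ by $\Pi x_h$;
\item $x^*=P^\dagger q$ lies in $\ker(P)^\perp$;
\item (C8) gives the contraction;
\item the multiplicative noise is absorbed using $\delta_P\le\lambda_P/8$ and a small step.
\end{itemize}
The step-size restriction you add is necessary, and it is simply missing from the displayed statement. For instance, with $n=1$, $P_h\equiv P=\lambda$, $q_h\equiv q$ and $\bar\beta\lambda=3$, one gets $e_{h+1}=-2e_h$. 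Your range caveat is also a real implicit hypothesis; it is automatic for the NPG recursion since $F(\theta_k)\succeq\mu I_d$.

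\emph{The gap: the last sentence of Step 3.} Let
\begin{equation*}
\mathcal{V}:=\bar\beta\lambda_P^{-1}(\sigma_P^2\lambda_P^{-2}\Lambda_q^2+\sigma_q^2)+\lambda_P^{-2}(\delta_P^2\lambda_P^{-2}\Lambda_q^2+\delta_q^2)
\end{equation*}
be the stationary part of your Step 2 bound, so that $\mathbb{E}\|e_h\|^2\lesssim R_0^2+\mathcal{V}$. Here $R_0$ is read as the norm $\|\Pi(x_0-x^*)\|$, as you implicitly do. Substituting this into $\bar\beta\delta_P\sqrt{\mathbb{E}\|e_h\|^2}$, unrolling and squaring gives
\begin{equation*}
\|\mathbb{E}e_H\|^2\lesssim e^{-H\bar\beta\lambda_P}R_0^2+\lambda_P^{-2}\bigl(\delta_P^2R_0^2+\delta_P^2\Lambda_q^2\lambda_P^{-2}+\bar\delta_q^2\bigr)+\lambda_P^{-2}\delta_P^2\,\mathcal{V}.
\end{equation*}
This is not the displayed bound, for two reasons:
\begin{itemize}
\item The cross term $\lambda_P^{-2}\delta_P^2\mathcal{V}$ carries $\sigma_q^2$, $\sigma_P^2$ and $\delta_q^2$, and has no counterpart in the statement.
\item Your $x^*$-term involves $\Lambda_q$ (through $\|x^*\|\le\Lambda_q/\lambda_P$), not $\Lambda_P$.
\end{itemize}
Your unsquared recursion gives the factor $\lambda_P^{-2}$ rather than $\lambda_P^{-2}+\bar\beta\lambda_P^{-1}$; that part is fine and only sharper.

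\emph{Why no sharper bookkeeping can fix this.} The displayed bias bound is false as written.
\begin{itemize}
\item \emph{The $\sigma_q$ issue.} Take $n=1$, $P=\lambda$, $q=0$, $x_0=0$, $P_h=\lambda+\delta\,\mathrm{sign}(x_h)$ with $\delta=\lambda/8$, and $q_h=\pm\sigma$ with i.i.d.\ fair signs. Then (C1)--(C9) hold with $\lambda_P=\Lambda_P=\lambda$, $\delta_P=\sigma_P=\delta$, $\sigma_q=\sigma$, and $\delta_q=\bar\delta_q=\Lambda_q=R_0=0$. We get $\mathbb{E}x_{h+1}=(1-\bar\beta\lambda)\mathbb{E}x_h-\bar\beta\delta\,\mathbb{E}|x_h|$, so $\mathbb{E}x_H<0$ for $H\ge2$. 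The recursion is positively homogeneous in $(x,\sigma)$, so for any fixed admissible $\bar\beta$, $|\mathbb{E}x_H|$ grows linearly in $\sigma$. The right-hand side of the statement does not depend on $\sigma$.
\item \emph{The $\Lambda_P$ issue.} Take $P_h\equiv\lambda+\delta$, $q_h\equiv q$, $x_0=x^*$. Then $|x_H-x^*|\to\delta q/(\lambda(\lambda+\delta))$, which is unbounded in $q$. The stated right-hand side involves $\Lambda_P$ but not $\Lambda_q$, so it stays bounded.
\end{itemize}

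\emph{What to state instead.} State the bound your argument actually proves: include the extra $\lambda_P^{-2}\delta_P^2\mathcal{V}$ term and use $\Lambda_q$ in place of $\Lambda_P$. This corrected form is in fact what the paper uses downstream. In its NPG bias bound, the terms $\frac{G_1^4C_{\mathrm{tar}}^2}{\mu^2T_{\max}}(\cdots)$ are exactly $\lambda_P^{-2}\delta_P^2$ times initial-error and second-moment quantities.
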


%% file: Sections/Appendix/supporting_lemmas.tex
\section{Unichain MLMC Estimator}
\label{sec: unichain mlmc estimator proofs}
\begin{lemma}[Lemma 2, \citep{ganesh2025regret}]
    \label{lemma: non recurrent time average}
    Let Assumption~\ref{assumption: unichain} hold and consider any $\theta \in \Theta$. Then, the following bound holds:
    \begin{equation}
        \left\| \frac{1}{t} \sum_{i=1}^{t} (P^{\pi_\theta})^i(s_0, \cdot) - d^{\pi_\theta}(\cdot) \right\|_{\mathrm{TV}} \leq \frac{C}{t}, \quad \forall s_0 \in \mathcal{S}.
    \end{equation}
\end{lemma}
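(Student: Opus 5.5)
We prove the bound by a coupling/Poisson-type argument built on hitting times, with the stationary distribution as the target; this follows the approach of \citet{ganesh2025regret}. Fix $\theta$ and write $P=P^{\pi_\theta}$, $d=d^{\pi_\theta}$, $R=\mathcal{S}^\theta_R$. For any test function $f$ with $\|f\|_\infty\le 1$, the TV norm equals half the supremum of $\bigl|\frac1t\sum_{i=1}^t (P^i f)(s_0) - d^\top f\bigr|$. So it suffices to bound this quantity uniformly in $f$ and $s_0$ by $C/t$, up to the factor absorbed by the TV convention.

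\emph{Step 1: reduce to the recurrent class.} Let $T_\theta$ be the hitting time of $R$ from $s_0$. Split $\sum_{i=1}^t f(s_i)$ into the indices $i\le T_\theta$ and those after. The first part contributes at most $\mathbb{E}_{s_0}[T_\theta]\le C_{\mathrm{hit}}$ in absolute value, after centering by $d^\top f$ and using $|f - d^\top f|\le 2$, which gives a constant factor. For the remaining indices, condition on $s_{T_\theta}=x\in R$ and apply the strong Markov property. What remains is a sum of at most $t$ steps started from $x\in R$, minus the missing steps; the missing steps again cost at most $\mathbb{E}[T_\theta]$.

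\emph{Step 2: within the recurrent class.} Start from $x\in R$ and draw an independent target state $s'\sim d$. Run the chain from $x$ until its hitting time $T_{s'}$ of $s'$. After $T_{s'}$, the chain is at $s'$, which has law $d$ because the target was drawn from $d$ independently. By the strong Markov property, the post-$T_{s'}$ process is then a stationary chain, so its partial sums have expectation exactly (number of steps)$\cdot d^\top f$. The pre-$T_{s'}$ segment and the truncation at time $t$ each contribute at most $2\,\mathbb{E}[T_{s'}]$. Averaging over $s'\sim d$ gives at most $\sum_{s'} d(s')\mathbb{E}_x[T_{s'}]= C^\theta_{\mathrm{tar}}$. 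Here the key subtlety is that the stationary chain restarted at $s'\sim d$ must be the chain from $s'$, and $d$-averaging of $\delta_{s'}P^j$ gives $d$. Both facts follow from stationarity.

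\emph{Step 3: combine.} We obtain
\[
\Bigl|\sum_{i=1}^t (P^if)(s_0) - t\,d^\top f\Bigr| \le c_0\bigl(C^\theta_{\mathrm{hit}}+C^\theta_{\mathrm{tar}}\bigr)
\]
for a small absolute constant $c_0$. Dividing by $t$ and taking the supremum over $\theta$ yields $C/t$. To match the stated constant 1, the centering bookkeeping must be done carefully, for instance with $f$ taking values in $[0,1]$ and the TV taken as a supremum over events. Otherwise the result holds with an absolute constant, which is harmless for the later $\mathcal{O}(\cdot)$ statements.

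The main obstacle is Step 2: making rigorous the claim that hitting a $d$-distributed target state yields exactly stationary behaviour, in the possibly periodic case. There we cannot appeal to convergence of $P^i$ and must rely only on Cesàro and stationarity identities. I would handle this by writing everything in expectation via Wald-type identities, $\mathbb{E}\sum_{i=T_{s'}}^{T_{s'}+m-1} f(s_i) = m\,d^\top f$ after averaging over $s'$. This step needs independence of $s'$ from the chain and the bound $\mathbb{E}_x[T_{s'}]<\infty$, which holds on the recurrent class.
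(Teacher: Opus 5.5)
The paper gives no argument of its own here (the bound is imported from \citet{ganesh2025regret}), and your proof is correct. It is the standard stationary-stopping-time argument for Ces\`aro averages; it uses only stationarity, so periodicity causes no trouble. In compressed form: for any randomized stopping time $\tau$ with $s_\tau \sim d^{\pi_\theta}$, the shifted window $\{\tau+1,\dots,\tau+t\}$ visits any event $A$ exactly $t\,d^{\pi_\theta}(A)$ times in expectation. Pairing its last $\tau\wedge t$ indices with the first $\tau\wedge t$ indices of $\{1,\dots,t\}$ then gives $\bigl|\sum_{i=1}^t (P^{\pi_\theta})^i(s_0,A) - t\,d^{\pi_\theta}(A)\bigr| \le \mathbb{E}_{s_0}[\tau\wedge t]$. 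Your Steps 1--2 amount to taking $\tau$ to be ``hit $\mathcal{S}^\theta_R$, then hit an independent $d^{\pi_\theta}$-distributed target,'' so $\mathbb{E}_{s_0}[\tau]\le C^\theta_{\mathrm{hit}}+C^\theta_{\mathrm{tar}}$. Doing the pairing this way, rather than bounding the head and tail segments separately, gives the constant exactly $1$. The one step you use without comment is that $\sum_{s'} d^{\pi_\theta}(s')\,\mathbb{E}_x[T_{s'}]$ is the same for every $x\in\mathcal{S}^\theta_R$. This follows from the random target lemma for the irreducible chain restricted to the closed class, and you need it because the entry state $s_{T_\theta}$ is random while $C^\theta_{\mathrm{tar}}$ is defined from a fixed recurrent state.
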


\begin{lemma}[Lemma 6, \citep{ganesh2025regret}]
    \label{lemma: recurrent time average}
    Let Assumption~\ref{assumption: unichain} hold and consider any $\theta \in \Theta$. Then, the following bound holds:
    \begin{equation}
        \left\| \frac{1}{t} \sum_{i=1}^{t} (P^{\pi_\theta})^i(s_0, \cdot) - d^{\pi_\theta}(\cdot) \right\|_{\mathrm{TV}} \leq \frac{C_\mathrm{tar}}{t}, \quad \forall s_0 \in \mathcal{S}_R^\theta.
    \end{equation}
\end{lemma}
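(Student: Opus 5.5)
The statement immediately above is Lemma~\ref{lemma: recurrent time average}, cited from \citet{ganesh2025regret}. The plan is to prove it with a regeneration (coupling-from-a-random-target) argument based on the target-time constant $C_{\mathrm{tar}}$. Throughout, fix $\theta$ and write $P=P^{\pi_\theta}$ and $d=d^{\pi_\theta}$. Fix $s_0\in\mathcal{S}_R^\theta$.

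\textbf{Step 1: reduce to a single test function.} Since $\|\mu-\nu\|_{\mathrm{TV}}=\sup_{\|f\|_\infty\le 1}\tfrac12|\mu f-\nu f|$, it suffices to bound
\[
\Big|\tfrac1t\textstyle\sum_{i=1}^t \mathbb{E}_{s_0}[f(s_i)] - d^\top f\Big|
\]
for a fixed $f$ with $\|f\|_\infty\le 1$. Let $\bar f=f-d^\top f$; then $\|\bar f\|_\infty\le 2$.

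\textbf{Step 2: run an independent target chain.} Draw $S'\sim d$ independently of the chain started at $s_0$, and let $\tau=T_{S'}$ be the first time the chain hits $S'$. By definition, $\mathbb{E}[\tau]=\sum_{s'}d(s')\mathbb{E}_{s_0}[T_{s'}]=C^\theta_{\mathrm{tar}}\le C_{\mathrm{tar}}$.

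The key structural fact I will use is the random-target lemma: the quantity $\sum_{s'}d(s')\mathbb{E}_s[T_{s'}]$ does not depend on the starting state $s$ within the recurrent class. From it I want the following statement: the post-$\tau$ chain, $(s_{\tau+j})_{j\ge0}$, can be coupled with a stationary chain $(y_j)_{j\ge0}$ with $y_0\sim d$, so that $s_{\tau+j}=y_j$ for all $j$.

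I expect this to be the main obstacle. The hitting point $S'$ is drawn from $d$, but conditioning on $\tau$ and on the path up to $\tau$ could in principle skew its law. The first thing I would try is to avoid any pathwise claim and argue in expectation via an occupation-measure identity. Concretely, I would show that
\[
\mathbb{E}_{s_0}\Big[\textstyle\sum_{i<\tau}\delta_{s_i}\Big]
\]
differs from $\mathbb{E}_{y\sim d}\big[\sum_{i<\tau'}\delta_{y_i}\big]$, where $\tau'$ is the analogous target time for a stationary start, by a signed measure whose total mass is at most $C_{\mathrm{tar}}$.

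If that route fails, the fallback is the Poisson equation. Take $h(s)=\sum_{s'}d(s')\mathbb{E}_s[\text{visits to }\bar f\text{-weighted path before }T_{s'}]$, i.e.\ $h(s)=\mathbb{E}\big[\sum_{i<T_{S'}}\bar f(s_i)\mid s_0=s\big]$. I would verify that $(I-P)h=\bar f$ on the recurrent class, and that $\|h\|_\infty\le 2C_{\mathrm{tar}}$ there, using the random-target lemma together with the fact that the chain started in the recurrent class never leaves it.

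\textbf{Step 3: telescope.} Summing $Ph-h=-\bar f$ gives
\[
\textstyle\sum_{i=1}^t P^i\bar f = Ph-P^{t+1}h .
\]
Hence
\[
\Big|\textstyle\sum_{i=1}^t \mathbb{E}_{s_0}\bar f(s_i)\Big|\le 2\|h\|_{\infty,\mathcal{S}_R}\le 4C_{\mathrm{tar}} .
\]
Dividing by $t$ and applying the $\tfrac12$ from the TV definition gives a bound of order $C_{\mathrm{tar}}/t$.

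Matching the constant $1$ exactly requires a sharper bound on $h$. Specifically, I would use the oscillation of $h$ over the recurrent class rather than its sup norm, which is legitimate because $h$ is only defined up to an additive constant. That oscillation is at most $\max_{s,s'}|h(s)-h(s')|$, and I expect to bound it by $C_{\mathrm{tar}}\cdot\mathrm{osc}(\bar f)/2$. Periodicity causes no trouble anywhere in this argument, because only Cesàro averages appear.
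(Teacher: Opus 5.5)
\textbf{The gap is in the constant-recovery step.} As written, your Step 3 proves the bound with constant $2C_{\mathrm{tar}}$, and the inequality you plan to use to get down to $C_{\mathrm{tar}}$ is false. Take a two-state chain with $P(1,2)=p$, $P(2,1)=q$ and $f=\mathbf{1}_{\{1\}}$. Then $d=(q,p)/(p+q)$ and $C^\theta_{\mathrm{tar}}=1/(p+q)$, and your $h$ is $h(1)=p/(p+q)^2$, $h(2)=-q/(p+q)^2$. So $\mathrm{osc}(h)=C^\theta_{\mathrm{tar}}\,\mathrm{osc}(\bar f)$, not half of it. Re-centering cannot help, since all solutions of the Poisson equation differ by constants on the class. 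In fact the Cesàro TV distance from state $1$ equals $\tfrac{C_{\mathrm{tar}}}{t}\,d(2)\,\lambda(1-\lambda^t)$ with $\lambda=1-p-q$, which can be made arbitrarily close to $C_{\mathrm{tar}}/t$. So the lemma's constant is sharp, and your claimed inequality would yield the false bound $C_{\mathrm{tar}}/(2t)$.

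You do not need the factor $1/2$: the bound $\mathrm{osc}_{\mathcal{S}_R^\theta}(h)\le C^\theta_{\mathrm{tar}}\,\mathrm{osc}(f)$ suffices, and it is easy. For $s\in\mathcal{S}_R^\theta$ and $m=\min f$,
\[
h(s)=\mathbb{E}_s\Big[\textstyle\sum_{i<T_{S'}}\big(f(s_i)-m\big)\Big]-\big(d^\top f-m\big)\,\mathbb{E}_s[T_{S'}].
\]
By the random-target lemma, $\mathbb{E}_s[T_{S'}]=C^\theta_{\mathrm{tar}}$ for every $s\in\mathcal{S}_R^\theta$, so the second term is constant. The first term lies in $[0,\mathrm{osc}(f)\,C^\theta_{\mathrm{tar}}]$. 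Since $s_1,s_{t+1}\in\mathcal{S}_R^\theta$, you get $|Ph(s_0)-P^{t+1}h(s_0)|\le\mathrm{osc}(h)\le 2C_{\mathrm{tar}}$ for $\|f\|_\infty\le 1$. With the factor $\tfrac12$ this gives exactly $C_{\mathrm{tar}}/t$.

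One further point on the Poisson route: when you verify $(I-P)h=\bar f$, the one-step recursion breaks at the state $s=S'$, where $T_{S'}=0$. There you need the cycle identity $\mathbb{E}_s[\sum_{i=0}^{T_s^+-1}\bar f(s_i)]=\mathbb{E}_s[T_s^+]\,d^\top\bar f=0$, with $T_s^+$ the return time.

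\textbf{The Step 2 obstacle is not real, and the route you set aside is cleaner.} The paper gives no proof of this lemma and only cites it; the coupling route is the standard stationary-time bound for Cesàro averages. Use the filtration $\mathcal{G}_j=\sigma(S',s_0,\dots,s_j)$. The chain is still Markov for $\mathcal{G}$ because $S'$ is independent of it, and $\tau=T_{S'}$ is a $\mathcal{G}$-stopping time. By the strong Markov property, conditionally on $\mathcal{G}_\tau$ the process $(s_{\tau+j})_{j\ge 0}$ is the chain started at $s_\tau=S'$. Unconditionally, it is therefore a Markov chain started from $d$, i.e.\ stationary in law.

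You never need this process to be independent of $\tau$ or of the pre-$\tau$ path, because for every set $A$ the pathwise identity
\[
\sum_{i=1}^{t}\mathbf{1}_A(s_i)-\sum_{i=1}^{t}\mathbf{1}_A(s_{\tau+i})=\sum_{i=1}^{\tau}\mathbf{1}_A(s_i)-\sum_{i=t+1}^{t+\tau}\mathbf{1}_A(s_i)\in[-\tau,\tau]
\]
holds. Taking expectations gives $\bigl|\tfrac1t\sum_{i=1}^t P^i(s_0,A)-d(A)\bigr|\le \mathbb{E}_{s_0}[\tau]/t$, with the exact constant and no Poisson equation or cycle identity. The random-target lemma enters only to identify $\mathbb{E}_{s_0}[\tau]$ with $C^\theta_{\mathrm{tar}}$ for an arbitrary $s_0\in\mathcal{S}_R^\theta$. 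It is not what produces the coupling, as you suggested.
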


\begin{lemma}[\citep{ganesh2025regret}, Bias of Markovian Sample Average]
    Let Assumption~\ref{assumption: unichain} hold, and let $f : \mathcal{S} \to \mathbb{R}^d$ satisfy $\|f(s)\| \le C_f$ for all $s \in \mathcal{S}$, for some constant $C_f > 0$. Then, the following bound holds:
    \[
        \left\| \mathbb{E} \left[ \frac{1}{B} \sum_{i=1}^B f(s_i) \right] - \mu \right\| \le \frac{\sqrt{d} C_f C}{B}
    \]
where $\mathbb{E}$ denotes the expectation of the Markov chain $\{s_i\}$ induced by $\pi_\theta$ starting from any $s_0 \in \mathcal{S}$ and $\mu = \mathbb{E}_{s \sim d^{\pi_\theta}}[f(s)]$. Furthermore, if $s_1 \in \mathcal{S}_R^\theta$, then
\[
\left\| \mathbb{E} \left[ \frac{1}{B} \sum_{i=1}^B f(s_i) \right] - \mu \right\| \le \frac{\sqrt{d} C_f C_{\mathrm{tar}}}{B}.
\]
\end{lemma}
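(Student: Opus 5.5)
The statement to prove is the bias bound for the Markovian sample average: $\|\mathbb{E}[\frac1B\sum_{i=1}^B f(s_i)]-\mu\|\le \sqrt d\,C_f C/B$ in general, and with $C_{\mathrm{tar}}$ in place of $C$ when $s_1\in\mathcal S_R^\theta$. The plan is to reduce the vector-valued claim to the total-variation bounds of Lemma~\ref{lemma: non recurrent time average} and Lemma~\ref{lemma: recurrent time average}, working one coordinate at a time.

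First I write the expectation in terms of the transition kernel. Since $s_i$ given $s_0$ has law $(P^{\pi_\theta})^i(s_0,\cdot)$,
\begin{align*}
\mathbb{E}\Big[\frac1B\sum_{i=1}^B f(s_i)\Big]-\mu=\sum_{s\in\mathcal S}\Big(\frac1B\sum_{i=1}^B (P^{\pi_\theta})^i(s_0,s)-d^{\pi_\theta}(s)\Big)f(s).
\end{align*}
If $s_0$ is random, I condition on $s_0$ first and use Jensen's inequality (convexity of the norm) at the end.

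Next I treat each coordinate $j\in\{1,\dots,d\}$ separately. Let $\Delta(s)=\frac1B\sum_i (P^{\pi_\theta})^i(s_0,s)-d^{\pi_\theta}(s)$. Since $|f_j(s)|\le\|f(s)\|\le C_f$, we get $|\sum_s\Delta(s)f_j(s)|\le C_f\sum_s|\Delta(s)|$. Up to the factor-2 convention, $\sum_s|\Delta(s)|$ is the total-variation distance, and Lemma~\ref{lemma: non recurrent time average} bounds it by $C/B$ with $t=B$. The paper's constants implicitly absorb the factor 2; under the convention $\|\cdot\|_{\mathrm{TV}}=\sum|\cdot|$ it disappears, otherwise it goes into the constant. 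Summing the squared coordinate bounds over $j$ gives a norm bound of $\sqrt d\,C_f C/B$.

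For the recurrent case I use the hypothesis that $s_1\in\mathcal S_R^\theta$. If the statement is read as $s_0\in\mathcal S_R^\theta$, I apply Lemma~\ref{lemma: recurrent time average} directly. If it is genuinely $s_1$, I re-index: the sum $\frac1B\sum_{i=1}^B f(s_i)$ is then a chain started at the recurrent state $s_1$, whose laws are $(P^{\pi_\theta})^{i-1}(s_1,\cdot)$ for $i-1=0,\dots,B-1$. This shifted window differs from the lemma's window $i=1,\dots,B$ only by the endpoint terms $\delta_{s_1}$ and $(P^{\pi_\theta})^B(s_1,\cdot)$. Together they contribute total variation at most $2/B$, which is absorbed into $C_{\mathrm{tar}}/B$ when $C_{\mathrm{tar}}\ge1$, or else into the constant. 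Conditioning on $s_1$ and using closedness of the recurrent class gives the claim.

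The main obstacle is this bookkeeping: the index shift and the TV normalization. I would handle both by stating the bound up to an absolute constant, matching the $\mathcal O$ usage elsewhere in the paper.
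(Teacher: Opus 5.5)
Your proposal is correct and is essentially the argument behind this lemma, which the paper imports from \citet{ganesh2025regret} without proof: write the bias as $\sum_s \Delta(s) f(s)$, bound each coordinate by $C_f\|\Delta\|_1$ via Lemma~\ref{lemma: non recurrent time average} or Lemma~\ref{lemma: recurrent time average}, and collect the factor $\sqrt d$. That factor is actually superfluous, since the triangle inequality bounds the vector directly by $C_f\|\Delta\|_1$.

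Your hedge on constants is genuinely needed rather than a matter of convention. For the deterministic two-state cycle (one action) with $f=\pm 1$, one has $C=C_{\mathrm{tar}}=1/2$ but bias exactly $1/B$ for odd $B$. So the literal constant fails when $d=1$, and under the $\ell_1$ convention Lemma~\ref{lemma: recurrent time average} itself fails on this chain. In the $s_1$ reading, your additive $O(1/B)$ shift term can be absorbed only because $C^\theta_{\mathrm{tar}}\ge 1-\min_{s\in\mathcal{S}_R^\theta}d^{\pi_\theta}(s)\ge 1/2$ whenever $\mathcal{S}_R^\theta$ has at least two states; the one-state case gives zero bias.
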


\begin{lemma}[\citep{ganesh2025regret} Variance of Markovian Sample Average]
    Let Assumption 1 hold, and let $f : \mathcal{S} \to \mathbb{R}^d$ satisfy $\|f(s)\| \le C_f$ for all $s \in \mathcal{S}$, for some constant $C_f > 0$. Then, the following bounds hold:
    \[
        \mathbb{E} \left\| \frac{1}{B} \sum_{i=1}^B f(s_i) - \mu \right\|^2 \le \frac{C_f^2 + 2\sqrt{d}C_f^2 C}{B}
    \]
    where $\mathbb{E}$ denotes the expectation of the Markov chain $\{s_i\}$ induced by $\pi_\theta$ starting from any $s_0 \in \mathcal{S}$ and $\mu = \mathbb{E}_{s \sim d^{\pi_\theta}}[f(s)]$. Furthermore, if $s_1 \in \mathcal{S}_R^\theta$, then
    \[
    \mathbb{E} \left\| \frac{1}{B} \sum_{i=1}^B f(s_i) - \mu \right\|^2 \le \frac{C_f^2 + 2\sqrt{d}C_f^2 C_{\mathrm{tar}}}{B}.
    \]
\end{lemma}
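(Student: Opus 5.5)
The claim is the variance bound for the Markovian sample average quoted from \citet{ganesh2025regret}. I would reduce it to a bias computation via the Poisson equation, as in the discussion after Lemma~\ref{lemma: mse unichain markov chain}. Let $g(s) := f(s)-\mu$, which has zero mean under $d^{\pi_\theta}$ and satisfies $\|g\|\le 2C_f$. Write $P:=P^{\pi_\theta}$ for the induced transition kernel.

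\textbf{Step 1: expand the square.}
\[
\mathbb{E}\Big\|\sum_{i=1}^B g(s_i)\Big\|^2=\sum_{i}\mathbb{E}\|g(s_i)\|^2+2\sum_{i<j}\mathbb{E}\langle g(s_i),g(s_j)\rangle .
\]
The diagonal part is at most $B\,\mathbb{E}\|f(s_i)-\mu\|^2$. For the stated constant I would center more carefully and bound it by $BC_f^2$. If that fails, I would accept a constant factor, since only the order is used later.

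\textbf{Step 2: condition the cross terms.} Conditioning on $s_i$ and applying the Markov property gives
\[
\sum_{j>i}\mathbb{E}\langle g(s_i),g(s_j)\rangle=\mathbb{E}\Big\langle g(s_i),\sum_{m=1}^{B-i}(P^m g)(s_i)\Big\rangle .
\]
The inner sum is $m$ times the deviation of the time-averaged kernel from stationarity, integrated against $f$. Coordinatewise,
\[
\Big\|\sum_{m=1}^{n}(P^m g)(s)\Big\| = n\Big\|\Big(\tfrac1n\sum_{m=1}^n P^m(s,\cdot)-d^{\pi_\theta}\Big)f\Big\|\le n\sqrt d\, C_f\,\Big\|\tfrac1n\sum_m P^m(s,\cdot)-d^{\pi_\theta}\Big\|_{\mathrm{TV}} .
\]

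\textbf{Step 3: apply the time-average lemmas.} By Lemma~\ref{lemma: non recurrent time average}, the right-hand side above is at most $\sqrt d C_f C$, uniformly in $n$ and $s$. Each cross term is therefore at most $C_f\cdot\sqrt d C_f C$. Summing over $i$ gives $2B\sqrt d C_f^2 C$. Dividing by $B^2$ yields the first claim.

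\textbf{Step 4: the recurrent case.} If $s_1\in\mathcal{S}_R^\theta$, the chain never leaves the closed recurrent class, so every $s_i$ lies in $\mathcal{S}_R^\theta$. Lemma~\ref{lemma: recurrent time average} then replaces $C$ by $C_{\mathrm{tar}}$.

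\textbf{Main obstacle.} The delicate point is constant bookkeeping. One issue is using $\|f\|\le C_f$ rather than $\|g\|\le 2C_f$ in the TV-duality step. This works because the TV difference of two probability measures annihilates constants, so the bound $\sqrt d\, C_f\,\|\cdot\|_{\mathrm{TV}}$ follows from coordinatewise duality. A second issue is whether the diagonal term yields exactly $C_f^2$. If it does not, I would replace the diagonal bound with $\mathbb{E}\|f(s_i)-\mu\|^2\le C_f^2+\|\mu\|^2$ and absorb the excess into the constant.
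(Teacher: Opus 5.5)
The paper gives no proof of this statement (it is quoted from \citet{ganesh2025regret}), so the comparison is with the statement itself. Your route is sound and gives the right order: expand the square, use the Markov property to write the off-diagonal part as $2\sum_i\mathbb{E}\langle g(s_i),h_i(s_i)\rangle$ with $h_i:=\sum_{m=1}^{B-i}P^m g$, and control $h_i$ by the Ces\`aro bound of Lemma~\ref{lemma: non recurrent time average}. It is linear in $C$ and uses only norms of \emph{sums} of $P^m(s,\cdot)-d^{\pi_\theta}$, so periodicity is harmless.

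The gap is in the constants, which Step~3 asserts exactly.
\begin{itemize}
\item In Step~3 the outer factor is $\|g(s_i)\|=\|f(s_i)-\mu\|$, which is only bounded by $2C_f$. Your remark that TV differences annihilate constants lets you replace $g$ by $f$ \emph{inside} $h_i$, but it does nothing for $g(s_i)$.
\item Splitting $g(s_i)=f(s_i)-\mu$ does not help: it leaves a term $-\langle\mu,\mathbb{E}h_i(s_i)\rangle$ of no definite sign.
\item The diagonal cannot be bounded by $BC_f^2$ either, since a single $\mathbb{E}\|f(s_i)-\mu\|^2$ can be close to $4C_f^2$. Your fallback $\mathbb{E}\|f(s_i)-\mu\|^2\le C_f^2+\|\mu\|^2$ is false, because $\langle\mathbb{E}f(s_i),\mu\rangle$ may be negative.
\item The duality step $|(\nu-d^{\pi_\theta})f_\ell|\le C_f\|\nu-d^{\pi_\theta}\|_{\mathrm{TV}}$ needs $\|\cdot\|_{\mathrm{TV}}$ to be the full $\ell_1$ norm. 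The quoted $C/t$ bound, however, can only hold under the half-$\ell_1$ normalization. So the tail bound $\sup_{i,s}\|h_i(s)\|\le\sqrt d\,C_fC$ is off by up to a factor $2$.
\end{itemize}

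No bookkeeping will produce the displayed constant, because the inequality as stated is false. Take $\mathcal{S}=\{t,r\}$ with $P(t,t)=1-p$, $P(t,r)=p$, $r$ absorbing, $f(t)=-1$, $f(r)=1$, and $s_0=t$. Then $d=1$, $C_f=1$, $\mu=1$, $C_{\mathrm{hit}}=1/p$ and $C_{\mathrm{tar}}=0$. Moreover $\frac1B\sum_{i=1}^B f(s_i)-\mu=-2M/B$, where $M=\min(G,B)$ and $\Pr(G\ge n)=(1-p)^n$. For $p=1/100$ and $B=200$ one finds $\mathbb{E}[M^2]\approx 1.18\cdot 10^4$. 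The left-hand side is therefore about $1.18$, while the right-hand side is $201/200$.

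The same example shows where the intermediate steps break:
\begin{itemize}
\item $|h_1(t)|=2\sum_{m=1}^{199}(1-p)^m\approx 171$, which exceeds your claimed $\sqrt d\,C_fC=100$.
\item The $\ell_1$ distance in the Ces\`aro lemma is about $2(C-1)/t$, above the quoted $C/t$.
\end{itemize}
Adding $P(r,t)=\epsilon\downarrow 0$ makes the chain irreducible, with $C_{\mathrm{hit}}=0$ and $C_{\mathrm{tar}}=1/(p+\epsilon)$. This shows the recurrent-case bound fails too.

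What your argument does prove is
$\mathbb{E}\|\frac1B\sum_i f(s_i)-\mu\|^2\le\bigl(\|g\|_\infty^2+2\|g\|_\infty\sup_{i,s}\|h_i(s)\|\bigr)/B\le(4C_f^2+8\sqrt d\,C_f^2C)/B$,
with $C_{\mathrm{tar}}$ in place of $C$ in the recurrent case. The $\sqrt d$ can even be dropped by testing against $u^\top f$ with $\|u\|\le 1$ instead of working coordinatewise. This is the correct order $\mathcal{O}\bigl((C_f^2+\sqrt d\,C_f^2C)/B\bigr)$. Since the paper only ever uses such bounds inside $\mathcal{O}(\cdot)$, you should state and prove that version rather than the exact constant.
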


\begin{lemma}[MSE of Markovian sample average for Unichain MDP]
Let $(Z_t)_{t \ge 0}$ be a unichain Markov chain with stationary distribution $d_Z$.
Let $g : \mathcal Z \to \mathbb R^d$ satisfy
\[
    \int g(z)\, d_Z(z) = 0, \quad \|g(z)\| \leq \sigma \quad \forall z \in \mathcal Z,
\]
Let \(C_1 = C_{\mathrm{tar}}\) if \(Z_0 \in \mathcal{S}_R^\theta\) and \(C_1 = C\) otherwise. Then for all $N \geq 1$,
\[
\mathbb E \left\| \frac{1}{N} \sum_{t=0}^{N-1} g(Z_t) \right\|^2
\leq \frac{16 C_1^2 \sigma^2}{N}.
\]
\end{lemma}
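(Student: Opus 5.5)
Following the technical-novelty paragraph after Lemma~\ref{lemma: mlmc statements}, we argue through the Poisson equation. Let $P$ be the transition kernel of $(Z_t)$. We define
\[
h(z) := \sum_{t=0}^{\infty} \bigl(\mathbb{E}_z[g(Z_t)] - 0\bigr),
\]
interpreted as a Cesàro limit if periodicity prevents convergence. We then verify that $h$ solves $(I-P)h = g$. The first key step is a uniform bound $\|h\|_\infty \le 2C_1\sigma$, or at least $\|h(z)-h(z')\|\le 2C_1\sigma$ on the states the chain can visit.

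We obtain this bound by a coupling/hitting-time argument, since the constants are defined through hitting times rather than mixing. Writing $h(z) - h(z') $ via the first hitting time of a state drawn from $d_Z$, we get
\[
\|h(z) - \textstyle\sum_{z'} d_Z(z')h(z')\| \le \sigma\,\mathbb{E}_z[T_{d}]
\]
for a suitably normalized $h$. Here $\mathbb{E}_z[T_d]\le C_{\mathrm{tar}}$ for recurrent $z$ and $\le C_{\mathrm{hit}}+C_{\mathrm{tar}}=C$ otherwise. We normalize $h$ to have $d_Z$-mean zero. Alternatively, summing Lemma~\ref{lemma: non recurrent time average}/\ref{lemma: recurrent time average}-style bounds gives control of partial sums $\sum_{i\le t}(P^i g)(z)$ uniformly by $C_1\sigma$, which is all we actually need. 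To sidestep periodicity, we will work directly with truncated sums in that case.

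Next we telescope. Write
\[
g(Z_t) = h(Z_t) - Ph(Z_t) = \bigl[h(Z_{t+1}) - Ph(Z_t)\bigr] + \bigl[h(Z_t) - h(Z_{t+1})\bigr].
\]
The first bracket $M_{t+1}$ is a martingale difference with $\|M_{t+1}\|\le 2\|h\|_\infty$. The second telescopes, so
\[
\sum_{t=0}^{N-1} g(Z_t) = \sum_{t=1}^{N} M_t + h(Z_0) - h(Z_N).
\]
Using $\|a+b\|^2\le 2\|a\|^2+2\|b\|^2$ and orthogonality of martingale increments, we get
\[
\mathbb{E}\Bigl\|\sum g(Z_t)\Bigr\|^2 \le 2N(2\|h\|_\infty)^2 + 2(2\|h\|_\infty)^2.
\]
With $\|h\|_\infty\le C_1\sigma$ this is at most $16NC_1^2\sigma^2$ when $N\ge1$, since $8N+8\le 16N$. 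Dividing by $N^2$ yields the claim.

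When $Z_0$ is recurrent, the whole trajectory stays in $\mathcal{S}_R^\theta$. The supremum of $\|h\|$ is then needed only over recurrent states, where it is controlled by $C_{\mathrm{tar}}$. In the transient case we bound $\|h(Z_0)\|$ by $C\sigma$, and after hitting we again use the recurrent bound.

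The main obstacle is the first step: rigorously defining $h$ and bounding it by $C_1\sigma$ in the periodic unichain case without exponential mixing. Our first attempt is the explicit hitting-time representation
\[
h(z) = \mathbb{E}_z\Bigl[\sum_{t<T_{s^*}} g(Z_t)\Bigr] \quad \text{for a reference recurrent state } s^*.
\]
Averaging over $s^*\sim d_Z$ gives a bound directly in terms of $C_{\mathrm{tar}}$ and $C_{\mathrm{hit}}$, and this representation does not require aperiodicity.
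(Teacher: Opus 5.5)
Your argument is correct. Its skeleton matches the paper's: the Poisson equation $(I-P)h=g$, the decomposition $\sum_{t<N}g(Z_t)=h(Z_0)-h(Z_N)+\sum_{t=1}^N M_t$, orthogonality of the martingale increments, and $8N+8\le 16N$ once $\|h\|_\infty\le C_1\sigma$. The real difference is how you obtain a bounded Poisson solution, and here your route is the sounder one.

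The paper sets $h(z)=\sum_{t\ge0}\mathbb{E}[g(Z_t)\mid Z_0=z]$ and claims absolute convergence from $\sigma\sum_t\|P^t(z,\cdot)-d_Z\|_{\mathrm{TV}}\le C_1\sigma$, citing Lemmas~\ref{lemma: non recurrent time average} and~\ref{lemma: recurrent time average}. Those lemmas bound the TV norm of the partial sum $\sum_{i\le t}(P^i(z,\cdot)-d_Z)$, not the sum of the TV norms. For a periodic chain $\|P^t(z,\cdot)-d_Z\|_{\mathrm{TV}}$ does not tend to zero, so the series need not converge.

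Your regenerative representation $h(z)=\sum_{s^*}d_Z(s^*)\,\mathbb{E}_z\bigl[\sum_{t<T_{s^*}}g(Z_t)\bigr]$ avoids this problem:
\begin{itemize}
\item Each summand solves the Poisson equation. Away from $s^*$ this follows by one-step analysis using $h_{s^*}(s^*)=0$. At $s^*$ it follows from the cycle identity $\mathbb{E}_{s^*}\bigl[\sum_{t<\tau^+}g(Z_t)\bigr]=\mathbb{E}_{s^*}[\tau^+]\sum_z d_Z(z)g(z)=0$, with $\tau^+$ the return time.
\item The $d_Z$-average coincides with the $d_Z$-centered solution you also describe.
\item It gives $\|h(z)\|\le\sigma\sum_{s'}d_Z(s')\mathbb{E}_z[T_{s'}]$. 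The sum equals $C^\theta_{\mathrm{tar}}$ for every recurrent $z$, by the random-target lemma applied on the closed recurrent class. From a transient $z$ it is at most $C_{\mathrm{hit}}+C_{\mathrm{tar}}$, by the strong Markov property at the hitting time of $\mathcal{S}_R^\theta$.
\end{itemize}
None of this needs aperiodicity.

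The paper's route is shorter because it reuses the Cesàro lemmas. It can be repaired by keeping the sum inside the TV norm and passing to a Cesàro limit, which is your stated fallback. Your route ties the bound directly to the definitions of $C_{\mathrm{tar}}$ and $C_{\mathrm{hit}}$. It is valid in the periodic regime, which is exactly what the unichain assumption is meant to admit.

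When you write it up, commit to the hitting-time construction and drop the hedged alternatives. In particular, drop the preliminary bound $\|h\|_\infty\le 2C_1\sigma$, which would only give the constant $64$ instead of $16$.
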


\begin{proof}
    Fix $z \in \mathcal Z$. Since $\int g \, d_Z = 0$ and $\|g\|_\infty \leq \sigma$, we have
    \begin{align}
        \left\| \sum_{t=0}^{T-1} \mathbb E[ g(Z_t) \mid Z_0 = z ] \right\|
        &= \left\| \sum_{t=0}^{T-1} \int g(u) \left(P^t(z, \mathrm{d}u) - d_Z(\mathrm{d}u) \right)\right\| \\
        &\leq \sum_{t=0}^{T-1} \int |g(u)| \, \left| P^t(z, \mathrm{d}u) - d_Z(\mathrm{d}u)\right|\\
        &\leq \sigma \sum_{t=0}^{T-1} \| P^t(z,\cdot) - d_Z \|_{\mathrm{TV}} \\
        &\leq C_1\sigma
    \end{align}
which holds by either Lemma~\ref{lemma: non recurrent time average} or Lemma~\ref{lemma: recurrent time average} depending on if \(Z_0 \in \mathcal{S}_{R}^\theta\). Define
\begin{align}
    h(z) = \sum_{t=0}^{\infty} \mathbb E[g(Z_t) \mid Z_0 = z ] .
\end{align}
The above bound implies that the series converges absolutely and that
\begin{align}
    \| h \|_\infty \leq C_1 \sigma
\end{align}
Standard unichain theory implies that $h$ satisfies the Poisson equation
\begin{align}
    (I - P) h = g, \quad P h(z) = \mathbb E[ h(Z_{t+1}) \mid Z_t = z ] .
\end{align}
where \(I\) is the identity matrix and \(P\) is the transition kernel. Using the Poisson equation,
\begin{align}
    g(Z_t) = h(Z_t) - \mathbb E[ h(Z_{t+1}) \mid \mathcal F_t ].
\end{align}
where \(\mathcal{F}_t\) is the filtration of the Markov Chain. Summing from $t = 0$ to $N - 1$ gives
\begin{align}
    S_N &= \sum_{t=0}^{N-1} g(Z_t) \\
    &= \sum_{t=0}^{N-1} \left( h(Z_t) - \mathbb E[ h(Z_{t+1}) \mid \mathcal F_t ] \right) \\
    &= h(Z_0) - h(Z_N) + \sum_{t=0}^{N-1} \left( h(Z_{t+1}) - \mathbb E[ h(Z_{t+1}) \mid \mathcal F_t ] \right) .
\end{align}
We define \(B_N = h(Z_0) - h(Z_N)\) and \(M_{t+1} = h(Z_{t+1}) - \mathbb E[ h(Z_{t+1}) \mid \mathcal F_t]\). Then
\begin{align}
    S_N = B_N + \sum_{t=1}^{N} M_t ,
\end{align}
and $(M_t)$ is a martingale difference sequence. Since $\| h \|_\infty \leq C_1 \sigma$, we have
\begin{align}
    \mathbb{E}\left[\left\| B_N \right\|^2\right] \leq \mathbb{E}\left[\left(\| h(Z_0) \| + \| h(Z_N) \|\right)^2\right] \leq 4 C_1^2 \sigma^2 .
\end{align}
Moreover,
\begin{align}
\mathbb{E}\left[\| M_t \|^2\right] &\leq \mathbb{E} \left[\left(\| h(Z_t) \| + \| \mathbb E[ h(Z_t) \mid \mathcal F_{t-1} ] \|\right)^2\right] \leq 4\| h \|_\infty^2 \leq 4 C_1^2 \sigma^2
\end{align}
Since martingale differences are orthogonal,
\begin{align}
\mathbb E \left\| \sum_{t=1}^{N} M_t \right\|^2
= \sum_{t=1}^{N} \mathbb E \| M_t \|^2
\leq 4 N C_1^2 \sigma^2 .
\end{align}

Using $\|a + b\|^2 \leq 2 \|a\|^2 + 2 \|b\|^2$,
\begin{align}
    \mathbb E \| S_N \|^2 \leq 2 \mathbb E \| B_N \|^2 + 2 \mathbb E \left\| \sum_{t=1}^{N} M_t \right\|^2 \leq 16 N C_1^2 \sigma^2 .
\end{align}
Dividing by $N^2$ gives
\begin{align}
    \mathbb E \left\| \frac{1}{N} \sum_{t=0}^{N-1} g(Z_t) \right\|^2
\leq \frac{16 C_1^2 \sigma^2}{N} 
\end{align}
\end{proof}

\begin{lemma}
    Consider a time-homogeneous, unichain Markov chain \((Z_t)_{t \geq 0}\) with a unique stationary distribution \(d_Z\) and a mixing time \(t_{\mathrm{mix}}\). Let \(C_1 = C_{\mathrm{tar}}\) if \(Z_0 \in \mathcal{S}_R^\theta\) and \(C_1 = C\) otherwise. Assume that \(\nabla F(x)\) is an arbitrary gradient and \(\nabla F(x, Z)\) denotes an estimate of \(\nabla F(x)\). Let \(\|\mathbb{E}_{d_Z} \left[\nabla F(x, Z)\right] - \nabla F(x)\|^2 \leq \delta^2\) and \(\|\mathbb{E}_{d_Z} \left[\nabla F(x, Z)\right] - \nabla F(x, Z_t)\|^2 \leq \sigma^2\) for all \(t \geq 0\). If \(Q \sim \mathrm{Geom}\left(\frac{1}{2}\right)\), then the following MLMC estimator
    \begin{align}
        g_{\mathrm{MLMC}} = g^0 + \begin{cases}
            2^Q \left(g^Q - g^{Q-1}\right) & \text{if } 2^{Q} \leq T_{\mathrm{max}} \\
            0 & \text{otherwise}
        \end{cases} \quad \text{where } g^{j} = \frac{1}{2^j} \sum_{t=1}^{2^j} \nabla F(x, Z_t)
    \end{align}
    satisfies the following inequalities:
    \begin{enumerate}
        \item \(\mathbb{E}\left[g_{\mathrm{MLMC}}\right] = \mathbb{E}\left[g^{\left\lfloor \log T_{\mathrm{max}} \right\rfloor}\right]\)
        \item \(\mathbb{E} \left[\| \nabla F(x) - g_{\mathrm{MLMC}}\|^2\right] \leq \mathcal{O}(\sigma^2 C_1^2 \log_2 T_{\mathrm{max}} + \delta^2)\)
        \item \(\|\nabla F(x) - \mathbb{E}\left[g_{\mathrm{MLMC}}\right]\|^2 \leq \mathcal{O}(\sigma^2 C_1^2 T_{\mathrm{max}}^{-1} + \delta^2)\)
    \end{enumerate}
\end{lemma}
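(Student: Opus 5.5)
We follow the standard MLMC argument (as in \citet{beznosikov2023first}), replacing the ergodic mixing-time bounds with the unichain MSE bound of Lemma~\ref{lemma: mse unichain markov chain}. Write $J = \lfloor \log_2 T_{\max} \rfloor$ and $p_j = \Pr(Q = j) = 2^{-j}$ for $j \ge 1$.

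\textbf{Step 1: unbiasedness up to level $J$.} By the tower property over $Q$, which is independent of the trajectory,
\[
\mathbb{E}[g_{\mathrm{MLMC}}] = \mathbb{E}[g^0] + \sum_{j=1}^{J} p_j\, 2^j\, \mathbb{E}[g^j - g^{j-1}] = \mathbb{E}[g^0] + \sum_{j=1}^{J} \mathbb{E}[g^j - g^{j-1}] .
\]
The sum telescopes to $\mathbb{E}[g^J]$, which is the first claim.

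\textbf{Step 2: bias bound (item 3).} Set $\bar g := \mathbb{E}_{d_Z}[\nabla F(x,Z)]$ and split
\[
\|\nabla F(x) - \mathbb{E}[g^J]\|^2 \le 2\delta^2 + 2\|\mathbb{E}[g^J] - \bar g\|^2 .
\]
Apply Jensen to the second term and then Lemma~\ref{lemma: mse unichain markov chain} with the centered function $\nabla F(x,\cdot) - \bar g$, whose norm is bounded by $\sigma$ by hypothesis. This gives $\|\mathbb{E}[g^J] - \bar g\|^2 \le \mathbb{E}\|g^J - \bar g\|^2 \le 16 C_1^2\sigma^2/2^J$. Since $2^J \ge T_{\max}/2$, we obtain $\mathcal{O}(C_1^2\sigma^2 T_{\max}^{-1} + \delta^2)$. (The sharper bias bound of Lemma 8 would give the same order, without the square of $C_1$.)

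\textbf{Step 3: MSE bound (item 2).} Decompose
\[
\mathbb{E}\|\nabla F(x) - g_{\mathrm{MLMC}}\|^2 \le 3\delta^2 + 3\,\mathbb{E}\|g^0 - \bar g\|^2 + 3\,\mathbb{E}\bigl\|\mathbb{I}\{2^Q\le T_{\max}\}\,2^Q (g^Q - g^{Q-1})\bigr\|^2 .
\]
The middle term is at most $\sigma^2$. For the last term, condition on $Q$:
\[
\sum_{j=1}^{J} 2^{-j}\, 4^{j}\, \mathbb{E}\|g^j - g^{j-1}\|^2 \le \sum_{j=1}^{J} 2^{j}\cdot 2\bigl(\mathbb{E}\|g^j - \bar g\|^2 + \mathbb{E}\|g^{j-1} - \bar g\|^2\bigr).
\]
By Lemma~\ref{lemma: mse unichain markov chain}, each bracket is $\mathcal{O}(C_1^2\sigma^2/2^{j})$, so each summand is $\mathcal{O}(C_1^2\sigma^2)$. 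Summing over $J = \mathcal{O}(\log T_{\max})$ levels gives $\mathcal{O}(\sigma^2 C_1^2 \log T_{\max} + \delta^2)$.

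\textbf{Main obstacle.} The hard part is justifying the use of Lemma~\ref{lemma: mse unichain markov chain} with the correct constant $C_1$ at every level. The averages $g^j$ start from $Z_1$ rather than $Z_0$, and $g^{j-1}, g^j$ share a prefix of one trajectory. The prefix sharing is harmless because we only bound each term separately and then apply $\|a-b\|^2 \le 2\|a\|^2 + 2\|b\|^2$. For the starting point, we use that the recurrent class is closed: if $Z_0 \in \mathcal{S}_R^\theta$ then $Z_1 \in \mathcal{S}_R^\theta$, so $C_{\mathrm{tar}}$ applies. Otherwise we apply the lemma to the shifted chain via the Markov property, which gives constant $C$. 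This is where the burn-in event $\mathcal{E}_B$ enters, through the dichotomy in $C_1$.
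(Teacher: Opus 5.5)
Your proof is correct and follows the paper's argument essentially step for step. You use the same telescoping identity $\Pr(Q=j)\,2^j=1$ for item 1. For item 2 you use the same level-wise split $\mathbb{E}\|g^j-g^{j-1}\|^2\le 2\mathbb{E}\|g^j-\bar g\|^2+2\mathbb{E}\|g^{j-1}-\bar g\|^2$, fed into Lemma~\ref{lemma: mse unichain markov chain}. For item 3 you use the same reduction to level $\lfloor\log_2 T_{\max}\rfloor$. Your explicit Jensen step and your handling of the shifted start $Z_1$ via closedness of $\mathcal{S}_R^\theta$ are cleaner than the paper's displayed chain, which drops the expectations on $g_{\mathrm{MLMC}}$ and $g^{\lfloor\log_2 T_{\max}\rfloor}$.

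The only slip is your parenthetical remark. Squaring the bias-of-sample-average bound gives $\mathcal{O}(d\,\sigma^2C_1^2T_{\max}^{-2})$, which is a better rate in $T_{\max}$ but still quadratic in $C_1$ and carries an extra factor $d$. So it is not ``the same order without the square of $C_1$.'' The remark is never used, so the proof stands.
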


\begin{proof}
    We begin by proving statement 1:
    \begin{align}
        \mathbb{E}\left[g_{\mathrm{MLMC}}\right] &= \mathbb{E} \left[g^0 + \sum_{j=1}^{\lfloor \log_2 T_{\mathrm{max}} \rfloor} \Pr[Q = j] 2^j \left(g^j - g^{j-1}\right)\right] \\
        &= \mathbb{E} \left[g^0 + \sum_{j=1}^{\lfloor \log_2 T_{\mathrm{max}} \rfloor} \left(g^j - g^{j-1}\right)\right] \\
        &= \mathbb{E} \left[g^0 + \left(g^{\lfloor \log_2 T_{\mathrm{max}} \rfloor} - g^{0}\right)\right] \\
        &= \mathbb{E} \left[g^{\lfloor \log_2 T_{\mathrm{max}} \rfloor}\right]
    \end{align}
    Then for statement 2, we have
    \begin{align}
        &\mathbb{E} \left[\|\mathbb{E}_{d_Z} \left[\nabla F(x, Z)\right] - g_{\mathrm{MLMC}}\|^2\right] \\
        &\leq 2 \mathbb{E} \left[\|\mathbb{E}_{d_Z} \left[\nabla F(x, Z)\right] - g^0\|^2\right] + 2\mathbb{E} \left[\|g_{\mathrm{MLMC}} - g^0\|^2\right] \\
        &= 2 \mathbb{E} \left[\|\mathbb{E}_{d_Z} \left[\nabla F(x, Z)\right] - g^0\|^2\right] + 2\sum_{j=1}^{\lfloor \log_2 T_{\mathrm{max}} \rfloor} 2^j \mathbb{E} \left[\|g^j - g^{j-1}\|^2\right] \\
        &\leq 2 \mathbb{E} \left[\|\mathbb{E}_{d_Z} \left[\nabla F(x, Z)\right] - g^0\|^2\right] \\
        &\qquad +4\sum_{j=1}^{\lfloor \log_2 T_{\mathrm{max}} \rfloor} 2^j \left(\mathbb{E} \left[\|\mathbb{E}_{d_Z}\left[\nabla F(x, Z)\right] - g^{j-1}\|^2\right] + \mathbb{E} \left[\|\mathbb{E}_{d_Z}\left[\nabla F(x, Z)\right] - g^{j}\|^2\right]\right) \\
        &\stackrel{(a)}{\leq} 16C_1^2 \sigma^2 \left[2 + 4\sum_{j=1}^{\lfloor \log_2 T_{\mathrm{max}} \rfloor} 2^j \left(\frac{1}{2^j} + \frac{1}{2^{j-1}}\right)\right] \\
        &\leq \mathcal{O}(C_1^2 \sigma^2 \log_2 T_{\mathrm{max}})
    \end{align}
    where \((a)\) uses Lemma~\ref{lemma: mse unichain markov chain}. Then we have
    \begin{align}
        \mathbb{E} \left[\| \nabla F(x) - g_{\mathrm{MLMC}}\|^2\right] &\leq 2\mathbb{E} \left[\|\mathbb{E}_{d_Z} \left[\nabla F(x, Z)\right] - g_{\mathrm{MLMC}}\|^2\right] + 2\mathbb{E} \left[\|\mathbb{E}_{d_Z} \left[\nabla F(x, Z)\right] - \nabla F(x)\|^2\right] \\
        &\leq \mathcal{O}(C_1^2 \sigma^2 \log_2 T_{\mathrm{max}} + \delta^2)
    \end{align}
    which proves statement 2. Finally, for statement 3, we have
    \begin{align}
        \| \nabla F(x) - \mathbb{E}\left[g_{\mathrm{MLMC}}\right]\|^2 &\leq 2 \|\mathbb{E}_{d_Z} \left[\nabla F(x, Z)\right] - g_{\mathrm{MLMC}}\|^2 + 2\|\mathbb{E}_{d_Z} \left[\nabla F(x, Z)\right] - \nabla F(x)\|^2 \\
        &\leq 2\delta^2 + 2 \left\|\mathbb{E}_{d_Z} \left[\nabla F(x, Z)\right] - g^{\left\lfloor \log_2 T_{\mathrm{max}} \right\rfloor}\right\|^2 \\
        &\leq \mathcal{O}(\sigma^2 C_1^2 T_{\mathrm{max}}^{-1} + \delta^2)
    \end{align}
\end{proof}

%% file: Sections/Appendix/critic_analysis.tex
\section{Critic Analysis}
\label{sec: critic analysis}
Recall that $\mathbf{A}_g(\theta) = \mathbb{E}_\theta[\mathbf{A}_g(\theta; z)]$ and $\mathbf{b}_g(\theta) = \mathbb{E}_\theta[\mathbf{b}_g(\theta; z)]$, where $\mathbb{E}_\theta$ denotes the expectation over the distribution of $z = (s, a, s')$ with $(s, a) \sim \nu^{\pi_\theta}$ and $s' \sim P(\cdot|s, a)$. For a large enough constant $c_\gamma$, Assumption~\ref{assumption: critic_matrix} implies that the critic matrix $\mathbf{A}_g(\theta)$ satisfies the following lower bound on its quadratic form: $$ \xi^\top \mathbf{A}_g(\theta)\xi \ge \frac{\lambda}{2} \|\xi\|^2 $$ for both $g \in \{r, c\}$ and all $\theta \in \Theta$, for all vectors $\xi = [\eta, \zeta]^\top$ such that $\zeta \in \text{ker}(M_\theta)^\perp$.

\begin{lemma}[Subspace Positive Definiteness for Unichain MDPs]
\label{lemma: subspace positive definiteness}
For a large enough $c_\gamma$, Assumption 4.2 implies that $\xi^\top \mathbf{A}_g(\theta)\xi \geq \frac{\lambda}{2} \|\xi\|^2$ for all $\theta \in \Theta$ and for all $\xi = [\eta, \zeta]^\top$ such that $\zeta \in \text{ker}(M_\theta)^\perp$.
\end{lemma}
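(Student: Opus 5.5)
We compute the expected critic matrix explicitly and bound the quadratic form using Assumption~\ref{assumption: critic_matrix} together with Young's inequality. This is the standard argument for ergodic actor-critic, restricted to the subspace $\ker(M_\theta)^\perp$.

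\textbf{Step 1: the expected matrix.} Taking expectations of the single-sample matrix in \eqref{eqn: single_sample_critic} under $z=(s,a,s')$ with $(s,a)\sim\nu^{\pi_\theta}$ and $s'\sim P(\cdot|s,a)$ gives
\[
\mathbf{A}_g(\theta)=\begin{bmatrix} c_\gamma & 0\\ \bar\phi_\theta & M^g_\theta\end{bmatrix},
\qquad \bar\phi_\theta:=\mathbb{E}_{s\sim d^{\pi_\theta}}[\phi_g(s)],
\]
with $\|\bar\phi_\theta\|\le 1$ since $\|\phi_g(s)\|\le 1$.

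\textbf{Step 2: expand the quadratic form.} For $\xi=[\eta,\zeta]^\top$,
\[
\xi^\top\mathbf{A}_g(\theta)\xi=c_\gamma\eta^2+\eta\,\zeta^\top\bar\phi_\theta+\zeta^\top M^g_\theta\zeta .
\]
When $\zeta\in\ker(M_\theta)^\perp$, Assumption~\ref{assumption: critic_matrix} gives $\zeta^\top M^g_\theta\zeta\ge\lambda_g\|\zeta\|^2\ge\lambda\|\zeta\|^2$, where $\lambda=\min\{\lambda_r,\lambda_c\}$.

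\textbf{Step 3: control the cross term.} By Cauchy--Schwarz and Young's inequality,
\[
|\eta\,\zeta^\top\bar\phi_\theta|\le|\eta|\|\zeta\|\le\frac{\lambda}{2}\|\zeta\|^2+\frac{1}{2\lambda}\eta^2 .
\]
Substituting into Step 2,
\[
\xi^\top\mathbf{A}_g(\theta)\xi\ge\Big(c_\gamma-\frac{1}{2\lambda}\Big)\eta^2+\frac{\lambda}{2}\|\zeta\|^2 .
\]
Choosing $c_\gamma\ge\frac{\lambda}{2}+\frac{1}{2\lambda}$ makes the coefficient of $\eta^2$ at least $\lambda/2$, so the right-hand side is at least $\frac{\lambda}{2}(\eta^2+\|\zeta\|^2)=\frac{\lambda}{2}\|\xi\|^2$. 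This lower bound on $c_\gamma$ does not depend on $\theta$ or $g$, so the result holds uniformly over $\theta\in\Theta$ and $g\in\{r,c\}$.

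\textbf{Main obstacle: which kernel appears.} Assumption~\ref{assumption: critic_matrix} is stated with $\ker(M^r_\theta)^\perp$ for both $g$, while the lemma refers to $\ker(M_\theta)^\perp$. I would read $M_\theta$ as $M^g_\theta$, or assume the reward and cost critics share features so the kernels coincide. With either reading, Step 2 applies exactly to $\zeta$ in the stated subspace.

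A secondary point is whether the lemma also requires the \emph{augmented} direction $\xi$ to lie in $\ker(\mathbf{A}_g(\theta))^\perp$, so that it matches condition (C8) of Theorem~\ref{theorem: unichain stochastic linear recursion}. The inequality just proved shows that no vector $\xi\neq 0$ with $\zeta\in\ker(M_\theta)^\perp$ lies in $\ker\mathbf{A}_g(\theta)$. Together with $c_\gamma>0$ forcing $\eta=0$ on the kernel, this identifies $\ker\mathbf{A}_g(\theta)$ with $\{0\}\times\ker M^g_\theta$. I would record this identification as a remark, since (C8) and the projection $\Pi$ are applied later in exactly this form.
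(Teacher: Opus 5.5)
Your proof is correct and follows the paper's argument: expand the quadratic form, bound the cross term by $|\eta|\|\zeta\|$, apply the critic-matrix assumption on $\ker(M_\theta)^\perp$, and take $c_\gamma$ large. The only difference is that you split the cross term with Young's inequality where the paper minimizes over $u=\eta^2/\|\xi\|^2$. Your threshold $c_\gamma\ge\frac{\lambda}{2}+\frac{1}{2\lambda}$ is exactly what that minimization gives, and it is cleaner than the paper's $\lambda+\sqrt{1/\lambda^2-1}$, which is larger and only defined for $\lambda\le 1$.
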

\begin{proof}

Note that for any $\xi = [\eta, \zeta]^\top$, the expansion of the quadratic form for the critic matrix is given by:
$$ \xi^\top \mathbf{A}_g(\theta)\xi = c_\gamma \eta^2 + \eta \zeta^\top \mathbb{E}_{\theta} [\phi_g(s)] + \zeta^\top \mathbb{E}_{\theta} \left[ \phi_g(s) [\phi_g(s) - \phi_g(s')]^\top \right] \zeta $$
Using Assumption 4.2 (which provides the lower bound $\lambda$ on the recurrent block for $\zeta \perp \text{ker}(M_\theta)$) and the fact that $\|\phi_g(s)\| \le 1$ for all $s \in \mathcal{S}$, we have:
\begin{align}
\xi^\top \mathbf{A}_g(\theta)\xi &\stackrel{(a)}{\ge} c_\gamma \eta^2 - |\eta| \|\zeta\| + \lambda \|\zeta\|^2 \\
&\ge \|\xi\|^2 \left\{ \min_{u \in [0,1]} c_\gamma u - \sqrt{u(1-u)} + \lambda(1-u) \right\} \\
&\stackrel{(b)}{\ge} \frac{\lambda}{2} \|\xi\|^2
\end{align}
where $u = \frac{\eta^2}{\eta^2 + \|\zeta\|^2}$. Here, $(a)$ is a direct consequence of the subspace property of the feature mapping and the bounded rewards/features. Finally, the inequality $(b)$ is satisfied when:
$$ c_\gamma \ge \lambda + \sqrt{\frac{1}{\lambda^2} - 1} $$
\end{proof}

\begin{lemma}[Kernel Inclusion for Unichain CMDPs]
\label{lemma: kernel inclusion}
Consider a constrained MDP with a critic matrix $\mathbf{A}_g(\theta)$, \(g \in \{r, c\}\). Let $s \in \mathcal{S}_R^\theta$ be a state in the recurrent class and $s' \sim P^{\pi_\theta}(\cdot|s)$ be the subsequent state. Then:
\begin{equation}
    \text{ker}(\mathbf{A}_g(\theta)) \subseteq \text{ker}(\mathbf{A}_g(\theta, z)), \quad \forall g \in \{r, c\}
\end{equation}
where $z = (s, a, s')$.
\end{lemma}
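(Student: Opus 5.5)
The plan is to compute $\ker(\mathbf{A}_g(\theta))$ explicitly from its block structure, and then show that any kernel vector $\xi$ makes the single-sample matrix $\mathbf{A}_g(\theta;z)$ vanish on $\xi$ whenever $s$ lies in the recurrent class. The key tool is a stationarity identity for the quadratic form of $M^g_\theta$.

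\textbf{Step 1 (structure of the kernel).} Taking expectations of the single-sample matrix in \eqref{eqn: single_sample_critic} under $\nu^{\pi_\theta}$ gives
\begin{align*}
\mathbf{A}_g(\theta)=\begin{bmatrix} c_\gamma & 0\\ \mathbb{E}_{\theta}[\phi_g(s)] & M^g_\theta\end{bmatrix}.
\end{align*}
Take $\xi=[\eta,\zeta]^\top\in\ker(\mathbf{A}_g(\theta))$. The first row gives $c_\gamma\eta=0$, so $\eta=0$. The second row then reduces to $M^g_\theta\zeta=0$. Hence $\ker(\mathbf{A}_g(\theta))=\{0\}\times\ker(M^g_\theta)$. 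In the other direction, for $\xi=[0,\zeta]^\top$ we have
\begin{align*}
\mathbf{A}_g(\theta;z)\xi=\begin{bmatrix}0\\ \phi_g(s)\big(\phi_g(s)-\phi_g(s')\big)^\top\zeta\end{bmatrix}.
\end{align*}
It therefore suffices to show that $\zeta^\top\phi_g(s)=\zeta^\top\phi_g(s')$ for every realizable transition $(s,a,s')$ with $s\in\mathcal{S}^\theta_R$.

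\textbf{Step 2 (quadratic-form identity).} Set $v(x):=\zeta^\top\phi_g(x)$. Under $s\sim d^{\pi_\theta}$ and $s'\sim P^{\pi_\theta}(s,\cdot)$, stationarity ($(P^{\pi_\theta})^\top d^{\pi_\theta}=d^{\pi_\theta}$) gives $s'\sim d^{\pi_\theta}$ as well, so $\mathbb{E}[v(s)^2]=\mathbb{E}[v(s')^2]$. Consequently
\begin{align*}
\zeta^\top M^g_\theta\zeta=\mathbb{E}_\theta\big[v(s)(v(s)-v(s'))\big]=\tfrac12\,\mathbb{E}_\theta\big[(v(s)-v(s'))^2\big].
\end{align*}
If $M^g_\theta\zeta=0$, the left side is zero. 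Hence $v(s)=v(s')$ for every pair with $d^{\pi_\theta}(s)P^{\pi_\theta}(s,s')>0$.

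\textbf{Step 3 (recurrent states carry stationary mass).} Under Assumption~\ref{assumption: unichain} the stationary distribution is supported exactly on the single recurrent class, so $d^{\pi_\theta}(s)>0$ for every $s\in\mathcal{S}^\theta_R$. Now take a realized sample $z=(s,a,s')$ with $s\in\mathcal{S}^\theta_R$. We have $\pi_\theta(a|s)P(s'|s,a)>0$, which gives $P^{\pi_\theta}(s,s')>0$. Step 2 then yields $v(s)=v(s')$, and so $\mathbf{A}_g(\theta;z)\xi=0$. This holds almost surely, and identically for $g=r$ and $g=c$, which proves the inclusion. A side remark: $s'$ automatically remains in $\mathcal{S}^\theta_R$ because recurrent classes are closed. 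This is what lets the argument be applied along the whole post-burn-in trajectory on $\mathcal{E}_B$.

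\textbf{Main obstacle.} The delicate point is Step 2. A priori $M^g_\theta$ is not symmetric, so $M^g_\theta\zeta=0$ only tells us that the vector $\mathbb{E}[\phi_g(s)(v(s)-v(s'))]$ is zero, not that each summand vanishes. The route around this is to pass through the scalar quadratic form $\zeta^\top M^g_\theta\zeta$. Stationarity turns that form into a nonnegative sum of squares, and only then do we get pointwise vanishing on the support. A second thing to check is that "support" in Step 2 covers every recurrent state. This holds precisely because the recurrent class is unique and $d^{\pi_\theta}$ charges all of it. Transient states may have zero mass, which is why the lemma fails for them and why the burn-in is needed.
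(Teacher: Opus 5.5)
Your proof is correct and follows the same route as the paper. Both identify $\ker(\mathbf{A}_g(\theta))=\{0\}\times\ker(M^g_\theta)$ and then show that the block $\phi_g(s)(\phi_g(s)-\phi_g(s'))^\top$ annihilates $\zeta$ for transitions starting in $\mathcal{S}^\theta_R$. The paper simply asserts $\phi_g(s)^\top\zeta=\phi_g(s')^\top\zeta$ ``as both $s$ and $s'$ are in $\mathcal{S}_R^\theta$'', implicitly relying on Lemma 13 of \citet{ganesh2025regret}. Your identity $\zeta^\top M^g_\theta\zeta=\tfrac12\mathbb{E}_\theta[(v(s)-v(s'))^2]$ with $v=\zeta^\top\phi_g(\cdot)$, together with $d^{\pi_\theta}>0$ on the recurrent class, actually proves that step, so your argument is self-contained.
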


\begin{proof}
    Arbitrarily pick a \(z \in \text{ker}(\mathbf{A}_g(\theta))\). Then we have \(z = [0, \zeta^\top]^\top\), with \(\zeta \in \text{ker}(M_\theta^g)\). Thus,
    \begin{align}
        \mathbf{A}_g(\theta)z = \begin{bmatrix}
            0 \\ \phi^g(s)(\phi^g(s) - \phi^g(s'))^\top \zeta
        \end{bmatrix} = \begin{bmatrix}
            0 \\ \phi^g(s)(\phi^g(s)^\top \zeta - \phi^g(s')^\top \zeta)
        \end{bmatrix} = \begin{bmatrix}
            0 \\ 0
        \end{bmatrix}
    \end{align}
    as both \(s\) and \(s'\) are in \(\mathcal{S}_{R}^\theta\). Since we picked \(z\) arbitrarily, we have
    \begin{align}
        \text{ker}(\mathbf{A}_g(\theta)) \subseteq \text{ker}(\mathbf{A}_g(\theta, z)), \quad \forall g \in \{r, c\}
    \end{align}
\end{proof}

We note that given event \(\mathcal{E}_B\), we have that all states after the burn in period are in the recurrent class \(\mathcal{S}_R^{\theta_k}\), which allows us to use Lemma~\ref{lemma: kernel inclusion}.

\begin{lemma}[Unichain MLMC Critic Error Bounds]
\label{lemma: unichain-mlmc-critic}
    Fix an outer iteration $k$, a critic index $g \in \{r,c\}$, and let $\mathbf{A}_g(\theta_k)$ and $\mathbf{b}_g(\theta_k)$ denote the exact stationary critic matrix and vector.
    Let $\mathbf{A}^{\mathrm{MLMC}}_{g,kh}$ and $\mathbf{b}^{\mathrm{MLMC}}_{g,kh}$ be the MLMC estimators defined by
    \begin{align}
        \mathbf{A}_{g, kh}^{\mathrm{MLMC}}
        &=
        \mathbf{A}_{g, kh}^0 + \mathbb{I}\{2^Q \le T_{\max}\} \cdot 2^Q \left(\mathbf{A}_{g, kh}^Q - \mathbf{A}_{g, kh}^{Q-1}
        \right), \\
        \mathbf{b}_{g, kh}^{\mathrm{MLMC}} &= \mathbf{b}_{g, kh}^0 + \mathbb{I}\{2^Q \le T_{\max}\} \cdot 2^Q \left(\mathbf{b}_{g, kh}^Q - \mathbf{b}_{g, kh}^{Q-1} \right),
    \end{align}
    where $Q \sim \mathrm{Geom}(1/2)$ and
    \begin{align}
        \mathbf{A}_{g,kh}^j = \frac{1}{2^j} \sum_{t=1}^{2^j} \mathbf{A}_g(\theta_k; z_t), \qquad \mathbf{b}_{g,kh}^j = \frac{1}{2^j} \sum_{t=1}^{2^j} \mathbf{b}_g(\theta_k; z_t),
    \end{align}
    with $\{z_t\}_{t \ge 1}$ generated by the unichain Markov chain under $\pi_{\theta_k}$. Then the following bounds hold.
    \begin{align}
        \mathbb{E}_{k,h} \left[\left\|        \mathbf{A}^{\mathrm{MLMC}}_{g,kh} - \mathbf{A}_g(\theta_k) \right\|^2 \right] &\leq \mathcal{O}\left(
        c_\gamma^2 C_{\mathrm{tar}}^2 \log T_{\max} \right), \\ 
        \mathbb{E}_{k,h} \left[ \left\| \mathbf{b}^{\mathrm{MLMC}}_{g,kh} - \mathbf{b}_g(\theta_k) \right\|^2 \right] &\leq \mathcal{O}\left( c_\gamma^2 C_{\mathrm{tar}}^2 \log T_{\max} \right) \\
        \left\|        \mathbb{E}_{k,h}\left[\mathbf{A}^{\mathrm{MLMC}}_{g,kh}\right] - \mathbf{A}_g(\theta_k) \right\|^2 &\le \mathcal{O}\left( c_\gamma^2 C_{\mathrm{tar}}^2T_{\max}^{-1} \right), \\
        \left\| \mathbb{E}_{k,h} \left[\mathbf{b}^{\mathrm{MLMC}}_{g,kh}\right] - \mathbf{b}_g(\theta_k) \right\|^2 &\le \mathcal{O}\left( c_\gamma^2 C_{\mathrm{tar}}^2T_{\max}^{-1} \right)
    \end{align}
    where \(\mathbb{E}_{kh}\) is the conditional expectation given the history for the current inner loop for outer loop iteration \(k\).
\end{lemma}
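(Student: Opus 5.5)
The plan is to obtain all four bounds directly from Lemma~\ref{lemma: mlmc statements}, applied entrywise to the matrix-valued and vector-valued sample maps $z \mapsto \mathbf{A}_g(\theta_k; z)$ and $z \mapsto \mathbf{b}_g(\theta_k; z)$. Matrices are treated as vectors in Frobenius norm, which upper bounds the operator norm. The underlying chain is $Z_t = z_t = (s^t_{kh}, a^t_{kh}, s^{t+1}_{kh})$. This chain is unichain under $\pi_{\theta_k}$, and its stationary law is $\nu^{\pi_{\theta_k}} \otimes P$. Therefore the stationary means are exactly $\mathbf{A}_g(\theta_k)$ and $\mathbf{b}_g(\theta_k)$.

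\textbf{Step 1: fixing the constants.} This means verifying the hypotheses of Lemma~\ref{lemma: mlmc statements}.
\begin{itemize}
\item Since the estimator targets the stationary expectation itself, the bias parameter is $\delta = 0$.
\item For the noise parameter $\sigma$: because $\|\phi_g\| \le 1$, $|g| \le 1$ and $c_\gamma \ge 1$, we have $\|\mathbf{A}_g(\theta_k; z)\| \le c_\gamma + 1 + 2 = \mathcal{O}(c_\gamma)$. Likewise $\|\mathbf{b}_g(\theta_k; z)\| \le c_\gamma + 1 = \mathcal{O}(c_\gamma)$.
\item The same bounds hold for their stationary means. So $\|\mathbb{E}_{\nu}[\,\cdot\,] - \cdot(z_t)\| \le \sigma = \mathcal{O}(c_\gamma)$ uniformly in $t$.
\end{itemize}

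\textbf{Step 2: identifying $C_1 = C_{\mathrm{tar}}$.} Conditioning on $\mathcal{E}_B$, the first state $s^0_{kh}$ of every inner-loop trajectory lies in $\mathcal{S}_R^{\theta_k}$. For $h = 0$ this is by the burn-in. For $h \ge 1$ it holds because the recurrent class is closed, so the chain never leaves it. By the strong Markov property, conditioning on the history up to $(k,h)$ leaves a fresh chain started in the recurrent class.

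The joint chain on triples $z_t$ inherits the hitting and target-time structure of the state chain. Its time-averaged law differs from that of the state chain only by applying the fixed kernel $\pi_{\theta_k}\otimes P$, which is a TV-contraction. Hence Lemma~\ref{lemma: recurrent time average} transfers with the same constant $C_{\mathrm{tar}}$. This in turn gives Lemma~\ref{lemma: mse unichain markov chain} with $C_1 = C_{\mathrm{tar}}$.

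\textbf{Step 3: reading off the bounds.} Statement 2 of Lemma~\ref{lemma: mlmc statements} gives the second-moment bounds $\mathcal{O}(c_\gamma^2 C_{\mathrm{tar}}^2 \log T_{\max})$. Statement 3, combined with statement 1 ($\mathbb{E}[g_{\mathrm{MLMC}}] = \mathbb{E}[g^{\lfloor\log_2 T_{\max}\rfloor}]$), gives the bias bounds $\mathcal{O}(c_\gamma^2 C_{\mathrm{tar}}^2 T_{\max}^{-1})$.

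One caveat concerns the bias step. The bias of a length-$N$ average is really controlled by the first-moment bound from the bias lemma, $\|\mathbb{E}[\frac1N\sum f]-\mu\| \le C_f C_{\mathrm{tar}}/N$, and squaring it gives $\mathcal{O}(N^{-2})$. Passing through the MSE bound instead gives only $\mathcal{O}(N^{-1})$. I would invoke the first-moment lemma directly with $N = 2^{\lfloor \log_2 T_{\max}\rfloor} \ge T_{\max}/2$, which gives the claimed $T_{\max}^{-1}$ rate or better. If a dimension factor $\sqrt{d}$ appears there, I would avoid it by applying the TV argument directly to the vector-valued map, as in the proof of Lemma~\ref{lemma: mse unichain markov chain}.

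\textbf{Main obstacle.} The step I expect to be delicate is Step 2: justifying that the conditional law given $\mathcal{E}_B$ and the inner-loop history is still a Markov chain started in $\mathcal{S}_R^{\theta_k}$. Conditioning on the event $\{T^{\theta_k} \le B\}$ could in principle distort the law of the trajectory.

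I would handle this by conditioning on the post-burn-in state $s_k^B$ itself. Given $s_k^B \in \mathcal{S}_R^{\theta_k}$, the future is independent of the burn-in path, and the event $\mathcal{E}_B$ is measurable with respect to the burn-in phases only. The only remaining entanglement is with later epochs' burn-in events. Since each of those is determined by the randomness of its own epoch, and the current epoch's data enter only through $\theta_{k+1}$, I would either absorb this dependence into the conditioning on $\mathcal{E}_B$ through the bound of Lemma~\ref{lemma: regret-burnin}, or interpret $\mathbb{E}_{k,h}$ as conditioning only on the epoch-$k$ portion of $\mathcal{E}_B$.
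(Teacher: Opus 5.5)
Your proposal is correct and follows the paper's proof. Like the paper, you bound $\|\mathbf{A}_g(\theta_k;z)\|\le c_\gamma+3$ and $\|\mathbf{b}_g(\theta_k;z)\|\le c_\gamma+1$, take $\sigma^2=\mathcal{O}(c_\gamma^2)$ and $\delta=0$, and read off statements 2--3 of Lemma~\ref{lemma: mlmc statements} with $C_1=C_{\mathrm{tar}}$, because post-burn-in samples stay in the closed class $\mathcal{S}_R^{\theta_k}$.

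Your extra care goes beyond what the paper writes, which simply applies the lemma: you transfer the bound to the triple chain by TV-contraction, use a sharper first-moment bias route, and condition only on the epoch-$k$ part of $\mathcal{E}_B$. One small nit: the triple-chain transfer gives $C_{\mathrm{tar}}+\mathcal{O}(1)$ rather than exactly $C_{\mathrm{tar}}$, because given $z_0$ the Ces\`aro sum includes the $P^0$ term and the Poisson function includes $g(z)$ itself. This is harmless inside $\mathcal{O}(\cdot)$ whenever $C_{\mathrm{tar}}$ is bounded away from zero, which the stated bound implicitly assumes anyway.
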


\begin{proof}
    We have that the subspace positive definiteness (C8) and kernel inclusion assumptions (C9) are satisfied by Lemmas~\ref{lemma: subspace positive definiteness} and~\ref{lemma: kernel inclusion}. Then for any transition \(z = (s, a, s')\), we have
    \begin{align}
        \|\mathbf{A}_g(\theta_k; z)\| &\le |c_\gamma| + \|\phi_g(s)\| + \|\phi_g(s) (\phi_g(s) - \phi_g(s'))^\top\| \leq c_\gamma + 3 = \mathcal{O}(c_\gamma), \\
        \|\mathbf{b}_g(\theta_k; z)\| &\le |c_\gamma g(s,a)| + \|g(s,a) \phi_g(s)\| \leq c_\gamma + 1 = \mathcal{O}(c_\gamma),
    \end{align}
    since \(|g(s, a)| \leq 1\) and \(\|\phi(s)\| \leq 1\) \(\forall s, a \in \mathcal{S} \times \mathcal{A}\). Then for any \(z_{kh}^j\) with \(h \geq B\), we have
    \begin{align}
        \|\mathbf{A}_g(\theta_k; z_{kh}^j) - \mathbf{A}_g(\theta_k)\|^2 &\le \mathcal{O}(c_\gamma^2), \\
        \|\mathbf{b}_g(\theta_k; z_{kh}^j) - \mathbf{b}_g(\theta_k)\|^2 &\le \mathcal{O}(c_\gamma^2)
    \end{align}
    Then setting \(\sigma^2 = \mathcal{O}(c_\gamma^2)\) and \(\delta^2 = 0\) due to the unbiased estimate, we obtain the result.
\end{proof}

\begin{theorem}[Unichain Critic Convergence with MLMC]
    Consider the critic estimation subroutine in Algorithm 1 for a unichain CMDP. Suppose the assumptions of Theorem~\ref{theorem: unichain stochastic linear recursion} hold. Then conditioned on the event $\mathcal{E}_B$, the iterates satisfy:
    \begin{equation}
        \mathbb{E}_k [\|\Pi(\xi_{g, H}^k - \xi^*_g(\theta_k))\|^2 \mid \mathcal{E}_B] \leq \tilde{\mathcal{O}} \left(\frac{c_\gamma^2}{\lambda^2 T^2} + \frac{c_\gamma^4 C_{\mathrm{tar}}^2}{\lambda^4} \left(\frac{1}{H} + \frac{1}{T_{\mathrm{max}}}\right)\right)
    \end{equation}
    \begin{equation}
        \|\Pi(\mathbb{E}_k[\xi_{g, H}^k  \mid  \mathcal{E}_B] - \xi^*_g(\theta_k))\|^2 \leq \tilde{\mathcal{O}} \left( \frac{c_\gamma^2}{T^2 \lambda^2} + \frac{c_\gamma^4 C_{\mathrm{tar}}^2}{\lambda^4 T_{\mathrm{max}}}\right)
    \end{equation}
\end{theorem}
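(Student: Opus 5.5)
The plan is to cast the critic update \eqref{eqn: critic_compact} as an instance of the stochastic linear recursion of Theorem~\ref{theorem: unichain stochastic linear recursion}. We take $x_h=\xi^k_{g,h}$, $P=\mathbf{A}_g(\theta_k)$, $q=\mathbf{b}_g(\theta_k)$, $P_h=\mathbf{A}^{\mathrm{MLMC}}_{g,kh}$, $q_h=\mathbf{b}^{\mathrm{MLMC}}_{g,kh}$, step size $\bar\beta=\gamma_\xi$, and $x^*=\xi^*_g(\theta_k)=\mathbf{A}_g(\theta_k)^\dagger\mathbf{b}_g(\theta_k)$. All arguments are conditioned on $\mathcal{E}_B$. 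By the Strong Markov Property, the chain at the start of the inner loop then lies in $\mathcal{S}_R^{\theta_k}$, and it remains there because the recurrent class is closed. Consequently every transition $z^t_{kh}$ used by the MLMC estimators has both endpoints in the recurrent class. This allows us to use the $C_{\mathrm{tar}}$ versions of the earlier lemmas throughout.

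The first step is to verify (C1)--(C9).
\begin{itemize}
\item (C8) is Lemma~\ref{lemma: subspace positive definiteness} with $\lambda_P=\lambda/2$.
\item (C9) is Lemma~\ref{lemma: kernel inclusion}. The kernel of each single-sample matrix contains $\ker(\mathbf{A}_g(\theta_k))$, so by linearity the same holds for the time averages $\mathbf{A}^j_{g,kh}$ and hence for the MLMC combination.
\item (C6) and (C7) follow from the bounds $\|\mathbf{A}_g(\theta;z)\|,\|\mathbf{b}_g(\theta;z)\|\le c_\gamma+3$, which give $\Lambda_P,\Lambda_q=\mathcal{O}(c_\gamma)$.
\item (C1)--(C4) are exactly Lemma~\ref{lemma: unichain-mlmc-critic}: $\sigma_P^2,\sigma_q^2=\mathcal{O}(c_\gamma^2C_{\mathrm{tar}}^2\log T_{\max})$ and $\delta_P^2,\delta_q^2=\mathcal{O}(c_\gamma^2C_{\mathrm{tar}}^2/T_{\max})$.
\item (C5) follows from (C4) by the tower property and Jensen, so $\bar\delta_q^2\le\delta_q^2$.
\end{itemize}
The side condition $\delta_P\le\lambda_P/8$ holds once $T_{\max}\gtrsim c_\gamma^2C_{\mathrm{tar}}^2/\lambda^2$, which we impose (for instance $T_{\max}=T$). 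The initial distance satisfies $R_0=\|\Pi(0-x^*)\|^2\le\|\mathbf{A}_g^\dagger\|^2\|\mathbf{b}_g\|^2=\mathcal{O}(c_\gamma^2/\lambda^2)$, using $\|\mathbf{A}_g^\dagger\|\le 2/\lambda$ on $\ker^\perp$ from (C8).

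The second step is to substitute these quantities into the two bounds of Theorem~\ref{theorem: unichain stochastic linear recursion}.

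For the first (mean-square) bound, the exponential term is $\exp(-H\gamma_\xi\lambda/8)R_0$. We choose $\gamma_\xi=\Theta(\log T/(\lambda H))$ with a large enough constant so that this becomes $\mathcal{O}(c_\gamma^2/(\lambda^2T^2))$. The variance term $\gamma_\xi\lambda^{-1}(\sigma_P^2\lambda^{-2}\Lambda_q^2+\sigma_q^2)$ then becomes $\tilde{\mathcal{O}}(c_\gamma^4C_{\mathrm{tar}}^2/(\lambda^4 H))$, where the $\log T\log T_{\max}$ factors are absorbed into $\tilde{\mathcal{O}}$. The bias term $\lambda^{-2}(\delta_P^2\lambda^{-2}\Lambda_q^2+\delta_q^2)$ gives $\mathcal{O}(c_\gamma^4C_{\mathrm{tar}}^2/(\lambda^4T_{\max}))$.

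For the second (bias of the mean) bound, only $\delta_P,\bar\delta_q$ and $R_0$ appear. The products $\delta_P^2R_0$, $\delta_P^2\Lambda_P^2\lambda_P^{-2}$ and $\bar\delta_q^2$, multiplied by $\lambda^{-2}+\gamma_\xi\lambda^{-1}=\mathcal{O}(\lambda^{-2})$, yield $\tilde{\mathcal{O}}(c_\gamma^4C_{\mathrm{tar}}^2/(\lambda^4T_{\max}))$ after coarsening the $\lambda$ and $c_\gamma$ powers. Together with the exponential term this gives the stated $\tilde{\mathcal{O}}(c_\gamma^2/(T^2\lambda^2))$ contribution. There is no $1/H$ term here, because no variance enters the bound on the mean.

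The main obstacle is making the conditioning on $\mathcal{E}_B$ rigorous inside Theorem~\ref{theorem: unichain stochastic linear recursion}. That theorem needs the conditional moments in (C1)--(C4) to hold given the history at each step $h$. Conditioning on $\mathcal{E}_B$ only affects the burn-in segment, which ends at a stopping time, so we would apply the Strong Markov Property at the end of the burn-in. Given $s^B_k\in\mathcal{S}_R^{\theta_k}$, the inner-loop trajectory is an ordinary Markov chain started inside the closed recurrent class, independent of the conditioning event. Hence the MLMC bounds with $C_1=C_{\mathrm{tar}}$ apply at every $h$, because each inner loop restarts from the previous endpoint, which is still recurrent. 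Kernel inclusion then also holds almost surely.
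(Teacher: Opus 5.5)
Your route is the paper's own. You cast the critic update as the recursion of Theorem~\ref{theorem: unichain stochastic linear recursion} with $P_h=\mathbf{A}^{\mathrm{MLMC}}_{g,kh}$ and $q_h=\mathbf{b}^{\mathrm{MLMC}}_{g,kh}$, obtain (C1)--(C9) from Lemmas~\ref{lemma: subspace positive definiteness}, \ref{lemma: kernel inclusion} and~\ref{lemma: unichain-mlmc-critic}, bound $R_0=\mathcal{O}(c_\gamma^2/\lambda^2)$, take $\gamma_\xi=\Theta(\log T/(\lambda H))$, and substitute. Several of your details are more careful than the paper's. You carry $\lambda_P=\lambda/2$ consistently, which is why you need a larger constant in $\gamma_\xi$ to reach $T^{-2}$. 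Your threshold $T_{\max}\gtrsim c_\gamma^2C_{\mathrm{tar}}^2/\lambda^2$ is what $\delta_P\le\lambda_P/8$ actually requires. Your bound on $\|\mathbf{A}_g^\dagger\mathbf{b}_g\|$ via (C8) is valid: for $x=\mathbf{A}_g^\dagger\mathbf{b}_g\in\ker(\mathbf{A}_g)^\perp$, $\tfrac{\lambda}{2}\|x\|^2\le x^\top\mathbf{A}_gx\le\|x\|\,\|\mathbf{b}_g\|$. The paper's inference from $\|\mathbf{A}_g(\theta_k)\|\ge\lambda/2$ does not control an inverse. One condition the paper's proof imposes when invoking the recursion theorem, and which you omit, is the step-size cap $\gamma_\xi\le\lambda/(24c_\gamma^2C_{\mathrm{tar}}^2\log T_{\max})$. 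With your choice of $\gamma_\xi$ it forces $H\gtrsim c_\gamma^2C_{\mathrm{tar}}^2\log T\,\log T_{\max}/\lambda^2$, and it should be stated.

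The step that fails as written is your treatment of the conditioning. $\mathcal{E}_B$ contains $\{T^{\theta_j}\le B\}$ for every $j>k$. Both $\theta_{k+1}$ and the state at which the next burn-in starts are functions of the epoch-$k$ trajectories, including the levels $Q^k_h$, which fix the trajectory lengths and hence the end state. Now $\Pr(T^{\theta_{k+1}}\le B\mid\theta_{k+1},s)$ equals $1$ when $s\in\mathcal{S}_R^{\theta_{k+1}}$ and can be smaller otherwise. So conditioning on $\mathcal{E}_B$ reweights the law of the epoch-$k$ inner loop: those trajectories are \emph{not} independent of the conditioning event. Consequently the Markov-chain facts behind Lemma~\ref{lemma: mlmc statements}, and the conditional moments in (C1)--(C4), are not available verbatim under $\Pr(\cdot\mid\mathcal{E}_B)$.

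The repair is to run your argument conditioned on the prefix event $\mathcal{E}_B^{(k)}:=\bigcap_{j\le k}\{T^{\theta_j}\le B\}$. This event is determined by the end of the epoch-$k$ burn-in, so the (strong) Markov property at that stopping time gives exactly the structure you describe. Then transfer to $\mathcal{E}_B$. Let $p:=\Pr(\mathcal{E}_B\mid\mathcal{E}_B^{(k)})\ge 1-K2^{-B/(2C_{\mathrm{hit}})}$. For $X\ge 0$ one has $\mathbb{E}[X\mid\mathcal{E}_B]\le\mathbb{E}[X\mid\mathcal{E}_B^{(k)}]/p$. For the mean bound, Cauchy--Schwarz gives $\|\mathbb{E}[Y\mid\mathcal{E}_B]-\mathbb{E}[Y\mid\mathcal{E}_B^{(k)}]\|^2\le\mathbb{E}[\|Y\|^2\mid\mathcal{E}_B^{(k)}]\,(1-p)/p$. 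This is $o(T^{-2})$ once $B=c\log T$ with $c$ a large enough multiple of $C_{\mathrm{hit}}$. The paper's proof skips this point entirely (it simply assumes a recurrent starting state), so the gap is shared, but your explicit independence claim is false.
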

\begin{proof}
    From the bounds on \(\|\mathbf{A}_g(\theta_k; z)\|\) and \(\|\mathbf{b}_g(\theta_k; z)\|\), we satisfy the norm bounds on \(P\) (C6) and \(q\) (C7). Additionally, Lemma~\ref{lemma: unichain-mlmc-critic}'s result satisfies (C1)-(C4). Finally, we set \(\bar{\delta}_q^2 = \delta_q^2\) to satisfy (C5). Then if we are given that the initial state is within the recurrent class, then for \(c_\gamma = \lambda + \sqrt{\frac{1}{\lambda^2} - 1}\) and \(\gamma_{\xi} = \frac{8 \log T}{\lambda H}\) with \(T_{\max} \geq \frac{8 c_\gamma^2 C_{\mathrm{tar}}^2}{\lambda}\) and \(\gamma_\xi \leq \frac{\lambda}{24 c_\gamma^2 C_{\mathrm{tar}}^2\log T_{\mathrm{max}}}\), using Theorem~\ref{theorem: unichain stochastic linear recursion} yields
    \begin{align}
        \mathbb{E}_k [\|\Pi(\xi_{g, H}^k - \xi^*_g(\theta_k))\|^2] &\leq \tilde{\mathcal{O}} \left( \frac{1}{T^2} \|\Pi(\xi_0 - \xi_g^*(\theta_k))\|^2 + \frac{\log T}{H} \lambda^{-2}\left( c_\gamma^4 C_\mathrm{tar}^2 \lambda^{-2} + c_\gamma^2 C_\mathrm{tar}^2\right) + \lambda^{-2} \left(\delta_P^2 \lambda^{-2} c_\gamma^2 + \delta_q^2\right) \right) \nonumber \\
        &\leq \tilde{\mathcal{O}} \left(\frac{1}{T^2} \|\Pi(\xi_0 - \xi_g^*(\theta_k))\|^2  + \lambda^{-4} c_\gamma^4 C_{\mathrm{tar}}^2 \left(\frac{1}{H} + \frac{1}{T_{\mathrm{max}}}\right)\right)
    \end{align}
    \begin{align}
        \|\Pi(\mathbb{E}_k[\xi_{g, H}^k] - \xi^*_g(\theta_k))\|^2 \leq \tilde{\mathcal{O}} \left( \left(\frac{1}{T^2} + \frac{c_\gamma^2 C_\mathrm{tar}^2}{\lambda^2 T_\mathrm{max}}\right) \|\Pi(\xi_0 - \xi_g^*(\theta_k))\|^2 + \frac{c_\gamma^4 C_\mathrm{tar}^2}{\lambda^4 T_\mathrm{max}}\right)
    \end{align}
    From Lemma~\ref{lemma: subspace positive definiteness}, we have \(\|\mathbf{A}_g(\theta_k)\| \|\xi\|^2 \geq \frac{\lambda}{2} \|\xi\|^2\) for all \(\xi\), so \(\|\mathbf{A}_g(\theta_k)\| \geq \frac{\lambda}{2}\). Then \(\|\xi_g^*(\theta_k)\|^2 = \|\mathbf{A}_g(\theta_k)^{-1} \mathbf{b}_g(\theta_k)\|^2 \leq \tilde{\mathcal{O}}\left(\frac{ c_\gamma^2}{\lambda^2}\right)\). Thus,
    \begin{align}
        \mathbb{E}_k [\|\Pi(\xi_{g, H}^k - \xi^*_g(\theta_k))\|^2] &\leq \tilde{\mathcal{O}} \left(\frac{c_\gamma^2}{\lambda^2 T^2} + \lambda^{-4} c_\gamma^4 C_{\mathrm{tar}}^2 \left(\frac{1}{H} + \frac{1}{T_{\mathrm{max}}}\right)\right) \\
        \|\Pi(\mathbb{E}_k[\xi_{g, H}^k] - \xi^*_g(\theta_k))\|^2 &\leq \tilde{\mathcal{O}} \left( \left(\frac{1}{T^2} + \frac{c_\gamma^2 C_\mathrm{tar}^2}{\lambda^2 T_\mathrm{max}}\right) \frac{c_\gamma^2}{\lambda^2} + \frac{c_\gamma^4 C_\mathrm{tar}^2}{\lambda^4 T_\mathrm{max}}\right) \nonumber \\
        &= \tilde{\mathcal{O}} \left( \frac{c_\gamma^2}{T^2 \lambda^2} + \frac{c_\gamma^4 C_\mathrm{tar}^2}{\lambda^4 T_\mathrm{max}}\right)
    \end{align}
    as \(\xi_0 = 0\) and \(\xi_{g, H}^k = \xi_g^k\).
\end{proof}

%% file: Sections/Appendix/npg_analysis.tex
\section{NPG Analysis}
\label{sec: npg analysis}
In this section, we provide the detailed analysis for the Natural Policy Gradient (NPG) estimation subroutine in Algorithm~\ref{alg: pdnac_bi}. The NPG analysis builds upon the critic results from the previous section and the general linear recursion framework established in Appendix~\ref{sec: unichain stochastic linear recursions}. We note that for this section, we have that the initial state is within the recurrent class since we are given \(\mathcal{E}_B\).

\subsection{MLMC Estimation Parameters for Fisher and Policy Gradient}
\label{subsec: fisher and pg mlmc estimation parameters}
To apply Lemma~\ref{lemma: mlmc statements}, we need to characterize the variance parameter $\sigma^2$ and bias parameter $\delta^2$ for both the Fisher matrix estimates and the policy gradient estimates. For the Fisher matrix component $\hat{F}(\theta_k; z)$, we have:
\begin{equation}
    \mathbb{E}_{\theta_k}[\hat{F}(\theta_k; z)] = F(\theta_k),
\end{equation}
so the estimate is unbiased: $\delta^2_F = 0$. For the variance, using Assumption~\ref{assumption: score bounds}:
\begin{align}
    \|\hat{F}(\theta_k; z) - F(\theta_k)\|^2 &\leq \|\hat{F}(\theta_k; z)\|^2 + \|F(\theta_k)\|^2\nonumber\\
    &\leq 2\|\hat{F}(\theta_k; z)\|^2\nonumber\\
    &= 2\|\nabla_\theta \log \pi_{\theta_k}(a|s)\|^4\nonumber\\
    &\leq 2G_1^4.
\end{align}
Therefore, the MLMC variance parameter for the Fisher matrix is:
\begin{equation}
    \sigma^2_F = 2G_1^4, \quad \delta^2_F = 0.
\end{equation}
For the policy gradient component $\hat{\nabla}_\theta J_g(\theta_k, \xi^k_g; z)$, the analysis is more involved due to the dependence on the critic estimate $\xi^k_g$. First, we bound the variance. For a single transition $z = (s,a,s')$, we have:
\begin{align}
\|\hat{\nabla}_\theta J_g(\theta_k, \xi^k_g; z)\|^2 &= \hat{A}^{\pi_{\theta_k}}_g(\xi^k_g; z)^2 \|\nabla_\theta \log \pi_{\theta_k}(a|s)\|^2\nonumber\\
&\leq G_1^2 \cdot \hat{A}^{\pi_{\theta_k}}_g(\xi^k_g; z)^2.
\end{align}
Using the decomposition for the advantage function, for $s, s' \in \mathcal{S}^{\theta_k}_R$
\begin{align}
    |\hat{A}^{\pi_{\theta_k}}_g(\xi^k_g; z)| &\leq |g(s,a)| + |\eta^k_g| + |\langle \phi_g(s') - \phi_g(s), \Pi \zeta^k_g \rangle|\nonumber\\
    &\leq 1 + |\eta^k_g| + 2\|\Pi \zeta^k_g\|\nonumber\\
    &\leq \mathcal{O}(1 + \|\Pi \xi^k_g\|),
\end{align}
Therefore
\begin{equation}
    \mathbb{E}_{\theta_k}[\hat{A}^{\pi_{\theta_k}}_g(\xi^k_g; z)^2] \leq \mathcal{O}((1 + \|\Pi \xi^k_g\|)^2) \leq \mathcal{O}(\|\Pi \xi^k_g\|^2),
\end{equation}
which gives the variance parameters
\begin{equation}
    \sigma^2_{g,k} := \mathbb{E}_{\theta_k}[\|\hat{\nabla}_\theta J_g(\theta_k, \xi^k_g; z)\|^2] \leq \mathcal{O}(G_1^2 \|\Pi \xi^k_g\|^2), \quad \bar{\sigma}^2_{g,k} := \mathbb{E}_k[\sigma^2_{g,k}] \leq \mathcal{O}(G_1^2 \mathbb{E}_k[\|\Pi \xi^k_g\|^2])
\end{equation}
We will provide the bounds for \(\delta_{g, k}^2\) and \(\bar{\delta}_{g, k}^2\) in the following section.

\subsection{Verification of Linear Recursion Conditions}

We verify that the NPG update satisfies conditions (C1)-(C9) of Theorem~\ref{theorem: unichain stochastic linear recursion}, conditioned on the event $\mathcal{E}_B$ that ensures all samples are collected from the recurrent class $\mathcal{S}^{\theta_k}_R$. Note that $F(\theta_k)$ is the Fisher information matrix, which under Assumption~\ref{assumption: fisher} satisfies $F(\theta_k) \succeq \mu I_d$ for all $\theta_k$. Therefore, $\ker(F(\theta_k)) = \{0\}$, and condition (C9) holds trivially. From Assumption~\ref{assumption: fisher}, we have $x^\top F(\theta_k) x \geq \mu \|x\|^2$ for all $x \in \mathbb{R}^d$. Thus, condition (C8) is satisfied with $\lambda_P = \mu$. From Assumption~\ref{assumption: score bounds}, for any $z = (s,a,s') \in \mathcal{S} \times \mathcal{A} \times \mathcal{S}$, we have:
\begin{align}
    \|F(\theta_k; z)\| = \|\nabla_\theta \log \pi_{\theta_k}(a|s) \otimes \nabla_\theta \log \pi_{\theta_k}(a|s)\| \leq \|\nabla_\theta \log \pi_{\theta_k}(a|s)\|^2 \leq G_1^2.
\end{align}

Applying Lemma~\ref{lemma: mlmc statements} with the results from \ref{subsec: fisher and pg mlmc estimation parameters}, we obtain:
\begin{align}
    \mathbb{E}_{k,h} \left[\|F^{\text{MLMC}}_{kh} - F(\theta_k)\|^2\right] &\leq \mathcal{O}\left(G_1^4 C_{\mathrm{tar}}^2 \log^2 T_{\max}\right) \\
    \|\mathbb{E}_{k,h}[F^{\text{MLMC}}_{kh}] - F(\theta_k)\|^2 &\leq \mathcal{O}\left(G_1^4 C_{\mathrm{tar}}^2 T_{\max}^{-1}\right),
\end{align}
where $\mathbb{E}_{k,h}$ denotes conditional expectation given the history up to inner iteration $h$ of the NPG subroutine at outer iteration $k$. From Lemma 7 in \citet{ganesh2025regret}, we have $|A^{\pi_{\theta_k}}_g(s,a)| \leq 1 + 4(C_{\mathrm{hit}} + C_{\mathrm{tar}}) =: \mathcal{O}(C)$ for all $(s,a)$. Combined with Assumption~\ref{assumption: score bounds}, this yields:
\begin{equation}
    \|\nabla_\theta J_g(\theta_k)\| = \left\|\mathbb{E}_{(s,a) \sim \nu_{\pi_{\theta_k}}}[A^{\pi_{\theta_k}}_g(s,a) \nabla_\theta \log \pi_{\theta_k}(a|s)]\right\| \leq \mathcal{O}(CG_1).
\end{equation}

To analyze the policy gradient estimates, we decompose the error as follows. For arbitrary $\theta_k, \xi^k_g = [\eta^k_g, (\zeta^k_g)^\top]^\top$:
\begin{align}
&\mathbb{E}_{\theta_k}\left[\hat{\nabla}_\theta J_g(\theta_k, \xi^k_g; z)\right] - \nabla_\theta J_g(\theta_k) \nonumber \\
&= \mathbb{E}_{\theta_k}\left[\left(g(s,a) - \eta^k_g + \langle \phi_g(s') - \phi_g(s), \zeta^k_g \rangle\right) \nabla_\theta \log \pi_{\theta_k}(a|s)\right] - \nabla_\theta J_g(\theta_k)\nonumber\\
&\stackrel{(a)}{=} \mathbb{E}_{\theta_k}\left[\left(\eta^*_g(\theta_k) - \eta^k_g + \langle \phi_g(s') - \phi_g(s), \zeta^k_g - \zeta^*_g(\theta_k) \rangle\right) \nabla_\theta \log \pi_{\theta_k}(a|s)\right] \nonumber\\
&\quad + \mathbb{E}_{\theta_k}\left[\left(\langle \phi_g(s), \zeta^*_g(\theta_k) \rangle - V^{\pi_{\theta_k}}_g(s) + V^{\pi_{\theta_k}}_g(s') - \langle \phi_g(s'), \zeta^*_g(\theta_k) \rangle\right) \nabla_\theta \log \pi_{\theta_k}(a|s)\right] \nonumber\\
&\quad + \mathbb{E}_{\theta_k}\left[\left(g(s,a) - \eta^*_g(\theta_k) + V^{\pi_{\theta_k}}_g(s) - V^{\pi_{\theta_k}}_g(s')\right) \nabla_\theta \log \pi_{\theta_k}(a|s)\right] - \nabla_\theta J_g(\theta_k),
\end{align}
where (a) uses the fact that $\eta^*_g(\theta_k) = J_g(\theta_k)$. The third term vanishes as a consequence of Bellman's equation
\begin{align}
    \mathbb{E}_{\theta_k}& \Big[\left(g(s,a) - \eta^*_g(\theta_k) + V^{\pi_{\theta_k}}_g(s) - V^{\pi_{\theta_k}}_g(s')\right) \nabla_\theta \log \pi_{\theta_k}(a|s)\Big] - \nabla_\theta J_g(\theta_k) \nonumber \\
    &= \mathbb{E}_{\theta_k}\left[\left(Q_g^{\pi_{\theta_k}}(s, a) - V^{\pi_{\theta_k}}_g(s)\right) \nabla_\theta \log \pi_{\theta_k}(a|s)\right] - \nabla_\theta J_g(\theta_k) \nonumber \\
    &= \mathbb{E}_{\theta_k}\left[A_g^{\pi_{\theta_k}}(s, a) \nabla_\theta \log \pi_{\theta_k}(a|s)\right] - \nabla_\theta J_g(\theta_k) = 0 \nonumber
\end{align}
The second term satisfies:
\begin{equation}
\left\|\mathbb{E}_{\theta_k}\left[\left(\langle \phi_g(s), \zeta^*_g(\theta_k) \rangle - V^{\pi_{\theta_k}}_g(s) + V^{\pi_{\theta_k}}_g(s') - \langle \phi_g(s'), \zeta^*_g(\theta_k) \rangle\right) \nabla_\theta \log \pi_{\theta_k}(a|s)\right]\right\|^2 \leq 4G_1^2 \epsilon_{\mathrm{app}},
\end{equation}
by Definition~\ref{def:critic_approx} and Assumption~\ref{assumption: score bounds}. For the first term, note that:
\begin{align}
&\mathbb{E}_{\theta_k}\left[\left(\eta^*_g(\theta_k) - \eta^k_g + \langle \phi_g(s') - \phi_g(s), \zeta^k_g - \zeta^*_g(\theta_k) \rangle\right) \nabla_\theta \log \pi_{\theta_k}(a|s)\right] \nonumber\\
&= \mathbb{E}_{\theta_k}\left[\left(\eta^*_g(\theta_k) - \eta^k_g + \langle \phi_g(s') - \phi_g(s), \Pi(\zeta^k_g - \zeta^*_g(\theta_k)) \rangle\right) \nabla_\theta \log \pi_{\theta_k}(a|s)\right],
\end{align}
where $\Pi$ denotes the projection onto $\ker(M_{\theta_k})^\perp$ and the second equality follows because $\langle \phi_g(s') - \phi_g(s), \Pi^\perp(\zeta^k_g - \zeta^*_g(\theta_k)) \rangle = 0$ for $s, s' \in \mathcal{S}^{\theta_k}_R$ by Lemma 13 in \citet{ganesh2025regret}. Therefore:
\begin{equation}
    \left\|\mathbb{E}_{\theta_k}\left[\hat{\nabla}_\theta J_g(\theta_k, \xi^k_g; z)\right] - \nabla_\theta J_g(\theta_k)\right\|^2 \leq \delta_{g, k}^2 = \mathcal{O}\left(G_1^2 \|\Pi(\xi^k_g - \xi^*_g(\theta_k))\|^2 + G_1^2 \epsilon_{\mathrm{app}}\right).
\end{equation}
Similarly, using Lemma~\ref{lemma: mlmc statements}, we obtain:
\begin{equation}
    \mathbb{E}_{k,h}\left[\left \|\hat{\nabla}_\theta J^{\text{MLMC}}_{g,kh} - \nabla_\theta J_g(\theta_k)\right\|^2\right] \leq \mathcal{O}\left(\sigma^2_{k,g} C_{\mathrm{tar}}^2 \log^2 T_{\max} + G_1^2 \mathbb{E}_k[\|\Pi(\xi^k_g - \xi^*_g(\theta_k))\|^2] + G_1^2 \epsilon_{\mathrm{app}}\right),
\end{equation}
where $\sigma^2_{k,g} = \mathcal{O}(G_1^2 \| \Pi\xi^k_g\|^2)$ from \ref{subsec: fisher and pg mlmc estimation parameters}. This verifies (C3). For condition (C4), also use Lemma~\ref{lemma: mlmc statements}, yielding
\begin{equation}
    \left\|\mathbb{E}_{k,h}[\hat{\nabla}_\theta J^{\mathrm{MLMC}}_{g,kh}] - \nabla_\theta J_g(\theta_k)\right\|^2 \leq \mathcal{O}\left(\frac{C^2_{\mathrm{tar}} G_1^2 \|\Pi\xi^k_g\|^2}{T_{\max}} + G_1^2 \|\Pi(\xi^k_g - \xi^*_g(\theta_k))\|^2 + G_1^2 \epsilon_{\mathrm{app}}\right).
\end{equation}
Finally, for the sharper bound required in condition (C5), we use the full expectation $\mathbb{E}$ instead of $\mathbb{E}_{\theta_k}$. Let $\mathbb{E}_{\theta_k}$ denote the expectation over a single transition $z = (s,a,s')$ where $s \sim d_{\pi_{\theta_k}}$, $a \sim \pi_{\theta_k}(\cdot|s)$, $s' \sim P(\cdot|s,a)$, conditioned on the entire history prior to this transition. Let $\mathbb{E}_k$ denote the expectation over the entire history up to outer iteration $k$, including the randomness in the critic estimate $\xi^k_g$. Then:
\begin{align}
&\mathbb{E}_{\theta_k}\left[\mathbb{E}_{k}\left[\hat{\nabla}_\theta J_g(\theta_k, \xi^k_g; z)\right]\right] - \nabla_\theta J_g(\theta_k) \nonumber\\
&= \mathbb{E}_{\theta_k}\left[\mathbb{E}_{k}\left[\left(g(s,a) - \eta^k_g + \langle \phi_g(s') - \phi_g(s), \zeta^k_g \rangle\right) \nabla_\theta \log \pi_{\theta_k}(a|s)\right]\right] - \nabla_\theta J_g(\theta_k)\nonumber\\
&= \mathbb{E}_{\theta_k}\left[\mathbb{E}_k\left[\left(g(s,a) - \eta^k_g + \langle \phi_g(s') - \phi_g(s), \zeta^k_g \rangle\right)\right] \nabla_\theta \log \pi_{\theta_k}(a|s)\right] - \nabla_\theta J_g(\theta_k)\nonumber\\
&= \mathbb{E}_{\theta_k}\left[\left(g(s,a) - \mathbb{E}_k[\eta^k_g] + \langle \phi_g(s') - \phi_g(s), \mathbb{E}_k[\zeta^k_g] \rangle\right) \nabla_\theta \log \pi_{\theta_k}(a|s)\right] - \nabla_\theta J_g(\theta_k).
\end{align}

Now, using the same decomposition as before, we add and subtract $\eta^*_g(\theta_k) = J_g(\theta_k)$ and $\zeta^*_g(\theta_k)$:
\begin{align}
&\mathbb{E}_{\theta_k}\left[\mathbb{E}_{k}\left[\hat{\nabla}_\theta J_g(\theta_k, \xi^k_g; z)\right]\right] - \nabla_\theta J_g(\theta_k) \nonumber\\
&= \mathbb{E}_{\theta_k}\left[\left(\eta^*_g(\theta_k) - \mathbb{E}_k[\eta^k_g] + \langle \phi_g(s') - \phi_g(s), \mathbb{E}_k[\zeta^k_g] - \zeta^*_g(\theta_k) \rangle\right) \nabla_\theta \log \pi_{\theta_k}(a|s)\right] \nonumber\\
&\quad + \mathbb{E}_{\theta_k}\left[\left(\langle \phi_g(s), \zeta^*_g(\theta_k) \rangle - V^{\pi_{\theta_k}}_g(s) + V^{\pi_{\theta_k}}_g(s') - \langle \phi_g(s'), \zeta^*_g(\theta_k) \rangle\right) \nabla_\theta \log \pi_{\theta_k}(a|s)\right] \nonumber\\
&\quad + \mathbb{E}_{\theta_k}\left[\left(g(s,a) - \eta^*_g(\theta_k) + V^{\pi_{\theta_k}}_g(s) - V^{\pi_{\theta_k}}_g(s')\right) \nabla_\theta \log \pi_{\theta_k}(a|s)\right] - \nabla_\theta J_g(\theta_k)
\end{align}

As before, the last term is 0 by Bellman's equation, and the norm squared of the second term is less than $4G_1^2 \epsilon_{\mathrm{app}}$ by Definition~\ref{def:critic_approx}. For the first term, we decompose further using the projection $\Pi$ onto $\ker(M_{\theta_k})^\perp$:
\begin{align}
&\mathbb{E}_{\theta_k}\left[\left(\eta^*_g(\theta_k) - \mathbb{E}_k[\eta^k_g] + \langle \phi_g(s') - \phi_g(s), \mathbb{E}_k[\zeta^k_g] - \zeta^*_g(\theta_k) \rangle\right) \nabla_\theta \log \pi_{\theta_k}(a|s)\right]\nonumber\\
&= \mathbb{E}_{\theta_k}\left[\left(\eta^*_g(\theta_k) - \mathbb{E}_k[\eta^k_g] + \langle \phi_g(s') - \phi_g(s), \Pi(\mathbb{E}_k[\zeta^k_g] - \zeta^*_g(\theta_k)) \rangle\right) \nabla_\theta \log \pi_{\theta_k}(a|s)\right]\nonumber\\
&\quad + \mathbb{E}_{\theta_k}\left[\langle \phi_g(s') - \phi_g(s), \Pi^\perp(\mathbb{E}_k[\zeta^k_g] - \zeta^*_g(\theta_k)) \rangle \nabla_\theta \log \pi_{\theta_k}(a|s)\right]\nonumber\\
&= \mathbb{E}_{\theta_k}\left[\left(\eta^*_g(\theta_k) - \mathbb{E}_k[\eta^k_g] + \langle \phi_g(s') - \phi_g(s), \Pi(\mathbb{E}_k[\zeta^k_g] - \zeta^*_g(\theta_k)) \rangle\right) \nabla_\theta \log \pi_{\theta_k}(a|s)\right],
\end{align}
where the last equality follows from Lemma 13 in \citet{ganesh2025regret}: for $s, s' \in \mathcal{S}^{\theta_k}_R$, we have $\langle \phi_g(s') - \phi_g(s), \Pi^\perp(\mathbb{E}_k[\zeta^k_g] - \zeta^*_g(\theta_k)) \rangle = 0$ since $\Pi^\perp(\mathbb{E}_k[\zeta^k_g] - \zeta^*_g(\theta_k)) \in \ker(M_{\theta_k}) = Z_{\theta_k}$.

Therefore:
\begin{align}
&\left\|\mathbb{E}_{\theta_k}\left[\left(\eta^*_g(\theta_k) - \mathbb{E}_k[\eta^k_g] + \langle \phi_g(s') - \phi_g(s), \Pi(\mathbb{E}_k[\zeta^k_g] - \zeta^*_g(\theta_k)) \rangle\right) \nabla_\theta \log \pi_{\theta_k}(a|s)\right]\right\|^2\nonumber\\
&\quad \leq G_1^2 \mathbb{E}_{\theta_k}\left[\left(\eta^*_g(\theta_k) - \mathbb{E}_k[\eta^k_g] + \langle \phi_g(s') - \phi_g(s), \Pi(\mathbb{E}_k[\zeta^k_g] - \zeta^*_g(\theta_k)) \rangle\right)^2\right]\nonumber\\
&\quad \leq G_1^2 \left(|\eta^*_g(\theta_k) - \mathbb{E}_k[\eta^k_g]|^2 + \mathbb{E}_{\theta_k}\left[\langle \phi_g(s') - \phi_g(s), \Pi(\mathbb{E}_k[\zeta^k_g] - \zeta^*_g(\theta_k)) \rangle^2\right]\right)\nonumber\\
&\quad \leq G_1^2 \left(\|\Pi(\mathbb{E}_k[\xi^k_g] - \xi^*_g(\theta_k))\|^2 + 4\|\phi_g\|^2_\infty \|\Pi(\mathbb{E}_k[\zeta^k_g] - \zeta^*_g(\theta_k))\|^2\right)\nonumber\\
&\quad \leq \mathcal{O}\left(G_1^2 \|\Pi(\mathbb{E}_k[\xi^k_g] - \xi^*_g(\theta_k))\|^2\right),
\end{align}
where we used $\|\phi_g\|_\infty \leq 1$ and the fact that $\xi^k_g = [\eta^k_g, (\zeta^k_g)^\top]^\top$.

Combining all terms:
\begin{equation}
\left\|\mathbb{E}_k\left[\mathbb{E}_{\theta_k}\left[\hat{\nabla}_\theta J_g(\theta_k, \xi^k_g; z)\right]\right] - \nabla_\theta J_g(\theta_k)\right\|^2 = \bar{\delta}_{k,g}^2 \leq \mathcal{O}\left(G_1^2 \|\Pi(\mathbb{E}_k[\xi^k_g] - \xi^*_g(\theta_k))\|^2 + G_1^2 \epsilon_{\mathrm{app}}\right).
\end{equation}

Now, applying Lemma~\ref{lemma: mlmc statements}, we obtain:
\begin{align}
\left\|\mathbb{E}\left[\hat{\nabla}_\theta J^{\text{MLMC}}_{g,kh}\right] - \nabla_\theta J_g(\theta_k)\right\|^2 &\leq \mathcal{O}\left(\frac{C^2_{\mathrm{tar}} G_1^2 \mathbb{E}_k\left[\|\Pi\xi^k_g\|^2\right]}{T_{\max}} + G_1^2 \|\Pi(\mathbb{E}_k[\xi^k_g] - \xi^*_g(\theta_k))\|^2 + G_1^2 \epsilon_{\mathrm{app}}\right),
\end{align}
where the first term arises from the MLMC bias due to finite trajectory length $T_{\max}$, with variance parameter $\mathbb{E}[\|\Pi \xi^k_g\|^2]$. This sharper bound, involving $\|\Pi(\mathbb{E}[\xi^k_g] - \xi^*_g(\theta_k))\|^2$ rather than $\mathbb{E}[\|\Pi(\xi^k_g - \xi^*_g(\theta_k))\|^2]$, is crucial for obtaining the optimal regret rate. The key insight is that by taking the expectation over $\xi^k_g$ before computing the norm, we isolate the bias of the critic estimate, which decays faster than its second moment.

\subsection{NPG Convergence Bounds}
With all conditions (C1)-(C9) verified, we can now apply Theorem~\ref{theorem: unichain stochastic linear recursion} to the NPG update. Setting the NPG step-size $\gamma_\omega = \frac{8 \log T}{\mu H}$, we obtain the following bounds for each $g \in \{r,c\}$.

\begin{theorem}[NPG Convergence for Unichain CMDPs]
Consider Algorithm~\ref{alg: pdnac_bi} under setting of Theorem~\ref{theorem: unichain_critic_final} with step size $\gamma_\omega = \frac{8 \log T}{\mu H}$. Then, conditioned on the event $\mathcal{E}_B$:
\begin{align}
    \mathbb{E}_k[\|\omega_g^k - \omega_{g,k}^*\|^2] 
    &\leq \mathcal{O}\left(\frac{G_1^2 C^2 c_\gamma^2}{\mu^2 T^2 \lambda^2} + \frac{G_1^6 C_{\mathrm{tar}}^2 c_\gamma^4 C^2}{\mu^4 \lambda^4} + \frac{G_1^2 \epsilon_{\mathrm{app}}}{\mu^2}\right), \\
    \|\mathbb{E}_k[\omega_g^k] - \omega_{g,k}^*\|^2 
    &\leq \tilde{\mathcal{O}}\left(\frac{G_1^2 c_\gamma^2 C_{\mathrm{tar}}^4}{\mu^2 \lambda^2 T^2} + \frac{G_1^{10} C_{\mathrm{tar}}^4 c_\gamma^2 C^2}{\mu^6 T_{\mathrm{max}} \lambda^4} + \frac{G_1^2 \epsilon_{\mathrm{app}}}{\mu^2}\right),
\end{align}
\end{theorem}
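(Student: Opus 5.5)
Since (C1)--(C9) were verified in the preceding subsection, the plan is to apply Theorem~\ref{theorem: unichain stochastic linear recursion} to the recursion \eqref{eqn: npg_update}. We take $P = F(\theta_k)$, $q = \nabla_\theta J_g(\theta_k)$, $\bar\beta = \gamma_\omega$, $\lambda_P = \mu$, and $x^* = \omega^*_{g,k}$. Because $\ker F(\theta_k)=\{0\}$, the projection $\Pi$ is the identity and $R_0 = \|\omega^*_{g,k}\|^2 \le \mu^{-2}\|\nabla_\theta J_g(\theta_k)\|^2 = \mathcal{O}(C^2G_1^2/\mu^2)$. The norm constants are $\Lambda_P = G_1^2$ and $\Lambda_q = \mathcal{O}(CG_1)$, and the Fisher constants are $\sigma_P^2 = \mathcal{O}(G_1^4 C_{\mathrm{tar}}^2 \log T_{\max})$ and $\delta_P^2 = \mathcal{O}(G_1^4C_{\mathrm{tar}}^2/T_{\max})$. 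We choose $T_{\max}$ large enough (polynomial in $T$, with a lower bound of order $G_1^4C_{\mathrm{tar}}^2/\mu^2$) so that $\delta_P\le\mu/8$. With $\gamma_\omega = 8\log T/(\mu H)$, the contraction factor becomes $\exp(-H\gamma_\omega\mu/4) = T^{-2}$, so the initialization term contributes $G_1^2C^2/(\mu^2T^2)$ times the critic-size factors, giving the first term of each bound.

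\textbf{Mean-square bound.} Here we plug in $\sigma_q^2$ and $\delta_q^2$ from the previous subsection. These depend on the random critic output $\xi^k_g$, so we first condition on $\xi^k_g$, apply the recursion theorem with the NPG history filtration, and then take $\mathbb{E}_k$. This produces $\mathbb{E}_k\|\Pi\xi^k_g\|^2$ and $\mathbb{E}_k\|\Pi(\xi^k_g-\xi^*_g(\theta_k))\|^2$. Both are controlled by Theorem~\ref{theorem: unichain_critic_final} together with $\|\xi^*_g\|^2 = \mathcal{O}(c_\gamma^2/\lambda^2)$, which gives $\mathbb{E}_k\|\Pi\xi^k_g\|^2 = \tilde{\mathcal{O}}(c_\gamma^4C_{\mathrm{tar}}^2/\lambda^4)$. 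The resulting pieces are handled as follows.
\begin{itemize}
\item The variance term $\gamma_\omega\mu^{-1}(\sigma_P^2\mu^{-2}\Lambda_q^2 + \sigma_q^2)$ is $\tilde{\mathcal{O}}(1/H)$ and is absorbed.
\item The bias term $\mu^{-2}(\delta_P^2\mu^{-2}\Lambda_q^2 + \delta_q^2)$ yields $G_1^2\epsilon_{\mathrm{app}}/\mu^2$ plus $G_1^2\mu^{-2}$ times the critic error.
\item The product $G_1^2 \cdot G_1^2C_{\mathrm{tar}}^2 \cdot C^2$ from the Fisher variance, combined with the critic-size factor $c_\gamma^4/\lambda^4$, produces the middle term $G_1^6C_{\mathrm{tar}}^2c_\gamma^4C^2/(\mu^4\lambda^4)$.
\end{itemize}

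\textbf{Bias bound.} For $\|\mathbb{E}_k[\omega^k_g]-\omega^*_{g,k}\|^2$ we use the second inequality of Theorem~\ref{theorem: unichain stochastic linear recursion}. Its key input is $\bar\delta_q^2$ in the sharper form just derived, which involves $\|\Pi(\mathbb{E}_k[\xi^k_g]-\xi^*_g(\theta_k))\|^2$ rather than the second moment. By the second part of Theorem~\ref{theorem: unichain_critic_final}, this is $\tilde{\mathcal{O}}(c_\gamma^2/(\lambda^2T^2) + c_\gamma^4C_{\mathrm{tar}}^2/(\lambda^4T_{\max}))$. The remaining terms are $\delta_P^2R_0^2$ and $\delta_P^2\Lambda_P^2\mu^{-2}$, each carrying a factor $1/T_{\max}$, and $\mathbb{E}_k\|\Pi\xi^k_g\|^2 C_{\mathrm{tar}}^2G_1^2/T_{\max}$. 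Multiplying these by $(\mu^{-2}+\gamma_\omega\mu^{-1})$ and collecting powers of $G_1$, $C$, $C_{\mathrm{tar}}$ gives the $1/T_{\max}$ term $G_1^{10}C_{\mathrm{tar}}^4c_\gamma^2C^2/(\mu^6T_{\max}\lambda^4)$ and the $T^{-2}$ term.

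\textbf{Main obstacle.} The difficulty is the nested randomness. The conditions (C3)--(C5) for the NPG recursion hold only conditionally on $\xi^k_g$, with constants that are themselves random, while the recursion theorem is stated for deterministic constants. The plan is to apply it conditionally on $\mathcal{F}$ generated by the critic phase, which is valid because the NPG samples come after the critic samples and start in $\mathcal{S}_R^{\theta_k}$ on $\mathcal{E}_B$. Since the bounds are linear in $\sigma_q^2$ and $\delta_q^2$, we can then take outer expectations.

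This works directly for the mean-square bound. For the bias bound, the map $\xi^k_g\mapsto\mathbb{E}[\omega^k_g\mid\xi^k_g]$ is not linear, because the recursion is iterated. We handle this by writing $\omega^k_{g,H}$ as an affine function of the $q_h$'s given the $P_h$'s. The $P_h$'s are independent of $\xi^k_g$ given the trajectory, so we can average over $\xi^k_g$ inside, which replaces $\delta_q$ by $\bar\delta_q$ exactly as in the (C5) mechanism. The cross terms between the $P_h$ fluctuation and the critic error are second order and are bounded by $\delta_P^2$ times the second-moment critic bound.
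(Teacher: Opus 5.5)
Your proposal is correct and follows essentially the same route as the paper. You apply the unichain stochastic linear recursion theorem to the NPG update with $P=F(\theta_k)$, $q=\nabla_\theta J_g(\theta_k)$, $\lambda_P=\mu$ and $\gamma_\omega=8\log T/(\mu H)$ (so the contraction factor is $T^{-2}$), plug in the MLMC constants from the preceding subsection, and then substitute the critic bounds. The mean-square estimate uses the critic second-moment bound (via $\mathbb{E}_k\|\Pi\xi^k_g\|^2\le 2\,\mathbb{E}_k\|\Pi(\xi^k_g-\xi^*_g(\theta_k))\|^2+\mathcal{O}(c_\gamma^2/\lambda^2)$), and the bias estimate uses the critic first-moment bound through the sharper $\bar\delta_q^2$.

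Your explicit conditioning on the critic output and your affine-in-$\xi^k_g$ argument for the bias bound spell out a step the paper leaves implicit when it interchanges $\mathbb{E}_{\theta_k}$ and $\mathbb{E}_k$ in deriving (C5). As in the paper, though, averaging over $\xi^k_g$ inside needs the NPG trajectory to be independent of $\xi^k_g$, and this holds only conditionally on the NPG starting state.
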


\begin{proof}
    To find the first and second moment bounds for \(\omega_g^k\), we use Theorem~\ref{theorem: unichain stochastic linear recursion},
    \begin{align}
        &\mathbb{E}_k \left[\|\omega_g^k - \omega_{g, k}^*\|^2\right] \\
        &\leq \mathcal{O} \left( \frac{1}{T^2} \|\omega_0 - \omega_{g, k}^*\|^2 + \frac{2 \log T}{\mu H} \mu^{-1}\left( \sigma_P^2 \mu^{-2} \Lambda_q^2 + \sigma_q^2\right) + \mu^{-2} \left(\delta_P^2 \mu^{-2} \Lambda_q^2 + \delta_q^2\right) \right) \nonumber \\
        &\leq \mathcal{O} \Bigg( \frac{1}{T^2} \|\omega_0 - \omega_{g, k}^*\|^2 \nonumber \\
        &\quad + \left( \frac{G_1^6 C_{\mathrm{tar}}^2 \mu^{-4} C^2}{H} + \frac{C^2_{\mathrm{tar}} \mu^{-2} G_1^2 \mathbb{E}_k\left[\|\Pi \xi^k_g\|^2\right]}{H} + \frac{G_1^2 \mathbb{E}_k\left[\|\Pi(\xi^k_g - \xi^*_g(\theta_k))\|^2\right] \mu^{-2}}{H} + \frac{G_1^2 \epsilon_{\mathrm{app}} \mu^{-2}}{H}\right) \nonumber \\
        &\quad + \left(\frac{G_1^6 C_{\mathrm{tar}}^2 \mu^{-4} C^2}{T_\mathrm{max}}  + \frac{C^2_{\mathrm{tar}} \mu^{-2} G_1^2 \mathbb{E}_k\left[\|\Pi \xi^k_g\|^2\right]}{T_{\max}} + G_1^2 \mu^{-2} \mathbb{E}_k\left[\|\Pi(\xi^k_g - \xi^*_g(\theta_k))\|^2\right] + G_1^2 \mu^{-2} \epsilon_{\mathrm{app}}\right) \Bigg) \\
        &\stackrel{(a)}{\leq} \mathcal{O} \Bigg( \frac{1}{T^2} \|\omega_0 - \omega_{g, k}^*\|^2 + \left(\frac{1}{T_\mathrm{max}} + \frac{1}{H}\right) \left(G_1^6 C_{\mathrm{tar}}^2 \mu^{-4} C^2 + C_\mathrm{tar}^2 \mu^{-2} G_1^2 \lambda^{-2} c_\gamma^2\right) \nonumber \\
        &\quad + \left(\mu^{-2} G_1^2\right) \left(\mathbb{E}_k\left[\|\Pi(\xi^k_g - \xi^*_g(\theta_k))\|^2\right] + \epsilon_{\mathrm{app}}\right) \Bigg)\nonumber \nonumber \\
        &\stackrel{(b)}{\leq} \mathcal{O} \Bigg( \frac{G_1^2 C^2}{\mu^2 T^2} + \left(\frac{1}{T_\mathrm{max}} + \frac{1}{H}\right) \frac{G_1^6 C_{\mathrm{tar}}^2 c_\gamma^2 C^2}{\mu^4 \lambda^2} + \frac{G_1^2}{\mu^{2}} \left(\mathbb{E}_k\left[\|\Pi(\xi^k_g - \xi^*_g(\theta_k))\|^2\right] + \epsilon_{\mathrm{app}}\right) \Bigg)
    \end{align}
    where \((a)\) uses 
    \begin{align*}
        \mathbb{E}_{k}\left[\|\Pi \xi_g^k\|^2\right] &\leq 2 \mathbb{E}_k \left[\|\Pi(\xi_g^k - \xi_g^*(\theta_k))\|^2\right] + 2 \mathbb{E}_k\left[\|\Pi \xi^*_g(\theta_k)\|^2\right] \\
        &\leq \mathcal{O}(\mathbb{E}_k\left[\|\Pi(\xi^k_g - \xi^*_g(\theta_k))\|^2\right] + \lambda^{-2} c_\gamma^2)
    \end{align*}
    and \((b)\) uses \(\omega_0 = 0\) and \(\|\omega_{g, k}^*\| = \|F(\theta_k)^{-1} J_g(\theta_k)\| = \mathcal{O}(\mu^{-1} CG_1)\). Similarly, we have
    \begin{align}
        \|\mathbb{E}_k[\omega_g^k] - \omega_{g,k}^*\|^2 &\leq \frac{G_1^2}{\mu^2} \tilde{\mathcal{O}} \left( \Pi \|\mathbb{E}_k[\xi_g^k] - \xi_g^*(\theta_k)\|^2 + \epsilon_{\mathrm{app}} \right) + \left( \frac{1}{T^2} + \frac{G_1^4 C_{\mathrm{tar}}^2}{\mu^2 T_{\text{max}}} \right) \|\omega_0 - \omega_{g,k}^*\|^2  \nonumber \\
        &\quad + \tilde{\mathcal{O}} \left( \frac{G_1^4 C_{\mathrm{tar}}^2}{\mu^2 T_{\mathrm{max}}} \left( \frac{G_1^2 C_{\mathrm{tar}}^2}{\mu^2}  \mathbb{E}_k \left[ \|\Pi(\xi_g^k - \xi_g^*(\theta_k))\|^2 \right] + \frac{G_1^6 C_{\mathrm{tar}}^2 c_\gamma^2 C^2}{\mu^4 \lambda^2} \right) \right) \\
        &\leq \frac{G_1^2}{\mu^2} \tilde{\mathcal{O}} \left( \Pi \|\mathbb{E}_k[\xi_g^k] - \xi_g^*(\theta_k)\|^2 + \epsilon_{\mathrm{app}} \right) + \tilde{\mathcal{O}} \left( \frac{G_1^2 C^2}{\mu^2 T^2} + \frac{G_1^6 C^2 C_{\mathrm{tar}}^2}{\mu^4 T_{\mathrm{max}}} \right) \nonumber \\
        &\quad + \tilde{\mathcal{O}} \left( \frac{G_1^6 C_{\mathrm{tar}}^4}{\mu^4 T_\mathrm{max}}  \mathbb{E}_k \left[ \|\Pi(\xi_g^k - \xi_g^*(\theta_k))\|^2 \right] + \frac{G_1^{10} C_{\mathrm{tar}}^4 c_\gamma^2 C^2}{\mu^6 T_{\mathrm{max}} \lambda^4} \right)
    \end{align}
    Finally, we can incorporate the results from Theorem~\ref{theorem: unichain_critic_final} to obtain
    \begin{align}
        \mathbb{E}_k \left[\|\omega_g^k - \omega_{g, k}^*\|^2\right] &\leq \tilde{\mathcal{O}} \Bigg( \frac{G_1^2 C^2}{\mu^2 T^2} + \frac{G_1^6 C_{\mathrm{tar}}^2 c_\gamma^2 C^2}{\mu^4 \lambda^2} + \frac{G_1^2}{\mu^{2}} \left(\frac{c_\gamma^2}{\lambda^2 T^2} + \lambda^{-4} c_\gamma^4 C_{\mathrm{tar}}^2 + \epsilon_{\mathrm{app}}\right) \Bigg) \\
        &\leq \tilde{\mathcal{O}} \Bigg( \frac{G_1^2 C^2 c_\gamma^2}{\mu^2 T^2 \lambda^2} + \frac{G_1^6 C_{\mathrm{tar}}^2 c_\gamma^4 C^2}{\mu^4 \lambda^4} + \frac{G_1^2}{\mu^{2}} \epsilon_{\mathrm{app}} \Bigg) \nonumber \\
        \|\mathbb{E}_k[\omega_g^k] - \omega_{g,k}^*\|^2 &\leq \frac{G_1^2}{\mu^2} \tilde{\mathcal{O}} \left( \frac{c_\gamma^2}{T^2 \lambda^2} + \frac{c_\gamma^4 C_\mathrm{tar}^2}{\lambda^4 T_\mathrm{max}} + \epsilon_{\mathrm{app}} \right) + \tilde{\mathcal{O}} \left( \frac{G_1^2 C^2}{\mu^2 T^2} + \frac{G_1^6 C^2 C_{\mathrm{tar}}^2}{\mu^4 T_{\mathrm{max}}} \right) \nonumber \\
        &\quad + \tilde{\mathcal{O}} \left( \frac{G_1^6 C_{\mathrm{tar}}^4}{\mu^4 T_\mathrm{max}} \left(\frac{c_\gamma^2}{\lambda^2 T^2} + \lambda^{-4} c_\gamma^4 C_{\mathrm{tar}}^2\right) + \frac{G_1^{10} C_{\mathrm{tar}}^4 c_\gamma^2 C^2}{\mu^6 T_{\mathrm{max}} \lambda^2} \right) \nonumber \\
        &\leq \tilde{\mathcal{O}}\left(\frac{G_1^2 c_\gamma^2 C_{\mathrm{tar}}^4}{\mu^2 \lambda^2 T^2} + \frac{G_1^{10} C_{\mathrm{tar}}^4 c_\gamma^2 C^2}{\mu^6 T_{\mathrm{max}} \lambda^4} + \frac{G_1^2}{\mu^2} \epsilon_{\mathrm{app}}\right)
    \end{align}
\end{proof}

%% file: Sections/Appendix/cmdp_analysis.tex
\section{Unichain CMDP Regret Analysis}
\label{sec: unichain cmdp regret analysis proofs}
Using Theorem~\ref{theorem: npg_convergence}, we can determine the first and second moment bounds for the full NPG direction \(\omega_k = \omega_r^k + \lambda_k \omega_c^k\)
\begin{align}
    \mathbb{E} \|\mathbb{E}_k \left[\omega_k\right] - \omega_k^*\| &\leq \left(1 + \frac{2}{\delta}\right) \tilde{\mathcal{O}} \left(\frac{G_1 C_{\mathrm{tar}}^2 c_\gamma}{\mu T \lambda} + \frac{G_1^5 C_{\mathrm{tar}}^2 c_\gamma C}{\mu^3 \lambda^2 \sqrt{T_\mathrm{max}}} +  \frac{G_1}{\mu} \sqrt{\epsilon_{\mathrm{app}}}\right) \label{eqn: omega first order bound}\\
    \mathbb{E} \left[\|\omega_k - \omega_k^*\|^2\right] &\leq \left(1 + \frac{4}{\delta^2}\right) \tilde{\mathcal{O}} \Bigg( \frac{G_1^2 C^2 c_\gamma^2}{\mu^2 T^2 \lambda^2} + \frac{G_1^6 C_{\mathrm{tar}}^2 c_\gamma^4 C^2}{\mu^4 \lambda^4} + \frac{G_1^2}{\mu^{2}} \epsilon_{\mathrm{app}} \Bigg) \label{eqn: omega second order bound}
\end{align}
To achieve regret bounds for unichain CMDPs, we leverage Lemma 4.6 from \citet{xu2025global} to compare the convergence of the Lagrange function with the critic and NPG to yield
\begin{align*}
    &\mathbb{E} \sum_{k=0}^{K-1} \left( \mathcal{L}(\pi^*, \lambda_k) - \mathcal{L}(\theta_k, \lambda_k) \right) \\
    &\leq K\sqrt{\epsilon_{\mathrm{bias}}} + G_1 \sum_{k=0}^{K-1} \mathbb{E} \left\| \left( \mathbb{E}_k [\omega_k] - \omega_k^* \right) \right\| + \frac{\alpha G_2}{2} \sum_{k=0}^{K-1} \mathbb{E}_k \|\omega_k\|^2\\
    &\quad + \frac{1}{\alpha } \mathbb{E}_{s \sim d^{\pi^*}} [ KL(\pi^*(\cdot|s) \| \pi_{\theta_0}(\cdot|s))] \\
    &\leq K \sqrt{\epsilon_{\mathrm{bias}}} + G_1 \sum_{k=0}^{K-1} \mathbb{E} \left\| \left( \mathbb{E}_k [\omega_k] - \omega_k^* \right) \right\| + \frac{\alpha G_2}{2} \sum_{k=0}^{K-1} \mathbb{E}_k \|\omega_k - \omega_k^*\|^2\\
    &\quad  + \frac{\alpha G_2}{2 \mu^2 } \sum_{k=0}^{K-1} \mathbb{E}_k \|\nabla \mathcal{L}(\theta_k, \lambda_k)\|^2 + \frac{1}{\alpha } \mathbb{E}_{s \sim d^{\pi^*}} [ KL(\pi^*(\cdot|s) \| \pi_{\theta_0}(\cdot|s))] \\
    &\stackrel{(a)}{\leq} K\sqrt{\epsilon_{\mathrm{bias}}} + G_1 \sum_{k=0}^{K-1} \left(1 + \frac{2}{\delta}\right) \tilde{\mathcal{O}} \left( \frac{G_1^5 C_{\mathrm{tar}}^2 c_\gamma C}{\mu^3 \lambda^2 \sqrt{T}} +  \frac{G_1}{\mu} \sqrt{\epsilon_{\mathrm{app}}}\right) \\
    &\quad + \frac{\alpha G_2}{2} \sum_{k=0}^{K-1} \left(1 + \frac{4}{\delta^2}\right) \tilde{\mathcal{O}} \Bigg( \frac{G_1^6 C_{\mathrm{tar}}^2 c_\gamma^4 C^2}{\mu^4 \lambda^4} + \frac{G_1^2}{\mu^{2}} \epsilon_{\mathrm{app}} \Bigg)\\
    &\quad  + \frac{\alpha G_2}{2 \mu^2 } \sum_{k=0}^{K-1} \mathbb{E} \|\nabla \mathcal{L}(\theta_k, \lambda_k)\|^2 + \frac{1}{\alpha } \mathbb{E}_{s \sim d^{\pi^*}} [ KL(\pi^*(\cdot|s) \| \pi_{\theta_0}(\cdot|s))] \\
\end{align*}
where \((a)\) uses the bounds in equations~\eqref{eqn: omega first order bound} and~\eqref{eqn: omega second order bound} with \(T_{\mathrm{max}} = \mathcal{O}(T)\). Then by Lemma~\ref{lemma: pg and lagrangian bounds} and the definition of the Lagrange function, we have
\begin{align}
    \label{eqn: lagrange final bound}
    &\mathbb{E} \sum_{k=0}^{K-1} \left(J_r^{\pi^*} - J_r(\theta_k)\right) + \mathbb{E} \sum_{k=0}^{K-1} \lambda_k \left(J_c^{\pi^*} - J_c(\theta_k)\right) \nonumber \\
    &\leq K \sqrt{\epsilon_{\mathrm{bias}}} + K\left(1 + \frac{2}{\delta}\right) \tilde{\mathcal{O}} \left( \frac{G_1^6 C_{\mathrm{tar}}^2 c_\gamma C}{\mu^3 \lambda^2 \sqrt{T}} +  \frac{G_1^2}{\mu} \sqrt{\epsilon_{\mathrm{app}}}\right) \nonumber \\
    &\quad + \alpha G_2 K \left(1 + \frac{4}{\delta^2}\right) \tilde{\mathcal{O}} \Bigg( \frac{G_1^6 C_{\mathrm{tar}}^2 c_\gamma^4 C^2}{\mu^4 \lambda^4} + \frac{G_1^2}{\mu^{2}} \epsilon_{\mathrm{app}} + \frac{G_1^2 C^2}{\mu^2}\Bigg) \nonumber \\
    &\quad + \frac{1}{\alpha} \mathbb{E}_{s \sim d^{\pi^*}} [ KL(\pi^*(\cdot|s) \| \pi_{\theta_0}(\cdot|s))]
\end{align}
From \citet{xu2025global}, we have
\begin{align*}
    \mathbb{E} \sum_{k=0}^{K-1} \lambda_k \left(J_c^{\pi^*} - J_c(\theta_k)\right) &\geq - \sum_{k=0}^{K-1} \mathbb{E} \left[\lambda_k \left(J_c(\theta_k) - \eta_c^k\right)\right] - \beta \sum_{k=0}^{K-1} \mathbb{E} \left[\eta_c^k\right]^2 \\
    &\geq - \sum_{k=0}^{K-1} \mathbb{E} \left[\lambda_k \left|J_c(\theta_k) - \eta_c^k\right|\right] - \beta K \\
    &\geq -\tilde{\mathcal{O}} \left(\frac{2K}{\delta}\sqrt{\frac{C^2_\mathrm{tar} c_\gamma^4}{T \lambda^4}} + \beta K\right) \\
    &\geq -\tilde{\mathcal{O}} \left(\frac{KC_\mathrm{tar} c_\gamma^2}{\delta \lambda^2 \sqrt{T}} + \beta K\right) \\
\end{align*}
Incorporating this result into Equation~\eqref{eqn: lagrange final bound} yields
\begin{align}
    \mathbb{E} \sum_{k=0}^{K-1} \left(J_r^{\pi^*} - J_r(\theta_k)\right) &\leq  K\sqrt{\epsilon_{\mathrm{bias}}} + K\left(1 + \frac{2}{\delta}\right) \tilde{\mathcal{O}} \left( \frac{G_1^6 C_{\mathrm{tar}}^2 c_\gamma C}{\mu^3 \lambda^2 \sqrt{T}} +  \frac{G_1^2}{\mu} \sqrt{\epsilon_{\mathrm{app}}}\right) \nonumber \\
    &\quad + \alpha G_2 K \left(1 + \frac{4}{\delta^2}\right) \tilde{\mathcal{O}} \Bigg( \frac{G_1^6 C_{\mathrm{tar}}^2 c_\gamma^4 C^2}{\mu^4 \lambda^4} + \frac{G_1^2}{\mu^{2}} \epsilon_{\mathrm{app}} + \frac{G_1^2 C^2}{\mu^2}\Bigg) \nonumber \\
    &\quad + \frac{1}{\alpha} \mathbb{E}_{s \sim d^{\pi^*}} [ KL(\pi^*(\cdot|s) \| \pi_{\theta_0}(\cdot|s))] + \tilde{\mathcal{O}} \left(\frac{K C_\mathrm{tar} c_\gamma^2}{\delta \lambda^2 \sqrt{T}} + \beta K\right) \\
    &\leq \tilde{\mathcal{O}} \left(K\sqrt{\epsilon_\mathrm{bias}} + \frac{K C_\mathrm{tar}^2 C}{\sqrt{T}} + K\sqrt{\epsilon_\mathrm{app}} + \alpha K + \beta K + \frac{1}{\alpha}\right)
\end{align}
Additionally, from \citet{xu2025global}, we have
\begin{align}
    \frac{1}{K} \sum_{k=0}^{K-1} \mathbb{E} \left[J_c(\theta_k) \left(\lambda_k - \frac{2}{\delta}\right)\right] \leq \frac{2}{\delta^2\beta K} + \frac{\beta }{2}
\end{align}
Then using Lemma G.6 in \citet{xu2025global} yields
\begin{align}
    &\mathbb{E} \sum_{k=0}^{K-1} \left(- J_c(\theta_k)\right) \nonumber \\
    &\leq K\left(1 + \frac{\delta}{2}\right) \tilde{\mathcal{O}} \left( \frac{G_1^6 C_{\mathrm{tar}}^2 c_\gamma C}{\mu^3 \lambda^2 \sqrt{T}} +  \frac{G_1^2}{\mu} \sqrt{\epsilon_{\mathrm{app}}}\right) + \frac{\delta}{2\alpha} \mathbb{E}_{s \sim d^{\pi^*}} [ KL(\pi^*(\cdot|s) \| \pi_{\theta_0}(\cdot|s))] \nonumber \\
    &\quad + \alpha G_2 K \left(\frac{2}{\delta} + \frac{\delta}{2}\right) \tilde{\mathcal{O}} \Bigg( \frac{G_1^6 C_{\mathrm{tar}}^2 c_\gamma^4 C^2}{\mu^4 \lambda^4} + \frac{G_1^2}{\mu^{2}} \epsilon_{\mathrm{app}} + \frac{G_1^2 C^2}{\mu^2}\Bigg) + \delta K \sqrt{\epsilon_\mathrm{bias}} + \frac{1}{\delta \beta} + \frac{\beta K}{2} \\
    &\leq \tilde{\mathcal{O}} \left(K\sqrt{\epsilon_\mathrm{bias}} + \frac{K C_\mathrm{tar}^2 C}{\sqrt{T}} + K\sqrt{\epsilon_\mathrm{app}} + \alpha K + \beta K + \frac{1}{\alpha} + \frac{1}{\beta}\right)
\end{align}

\begin{theorem}
    Under the setup of PDNAC-BI with the parameter choices \(H = \Theta(\log T),
    K = \Theta\left(\frac{T}{\log T}\right),
    \alpha = \Theta\left(\frac{1}{\sqrt{T}}\right), \beta = \Theta \left(\frac{1}{\sqrt{T}}\right)\),    the expected regret and cumulative constraint violation satisfy
    \begin{align*}
        \mathbb{E}[\mathrm{Reg}_T] &= \tilde{\mathcal{O}} \left(T(\sqrt{\epsilon_{\mathrm{bias}}}+\sqrt{\epsilon_{\mathrm{app}}}) + C_{\mathrm{tar}}^2 C\,\sqrt{T}\right),\\
        \mathbb{E}\left[\sum_{k=0}^{K-1}(-J_c(\theta_k))\right] &= \tilde{\mathcal{O}} \left(T(\sqrt{\epsilon_{\mathrm{bias}}}+\sqrt{\epsilon_{\mathrm{app}}}) + C_{\mathrm{tar}}^2 C\,\sqrt{T}\right),
    \end{align*}
    where $\epsilon_{\mathrm{bias}}$ and $\epsilon_{\mathrm{app}}$ are the actor and critic approximation errors, and $C=C_{\mathrm{hit}}+C_{\mathrm{tar}}$.
\end{theorem}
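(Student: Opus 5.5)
The plan is to convert the per-epoch bounds derived just above into per-step regret, to handle the burn-in failure event with Lemma~\ref{lemma: regret-burnin}, and then to substitute the parameter choices. Regret is counted per environment step. Epoch $k$ uses $B$ burn-in steps plus $2H$ MLMC rollouts, each of expected length $\mathcal{O}(\log T_{\max})$, all generated by $\pi_{\theta_k}$. So if $T_k$ is the number of steps in epoch $k$, then $\mathbb{E}[T_k]=\mathcal{O}(B+H\log T_{\max})=\tilde{\mathcal{O}}(1)$. Because the geometric draws are independent of $\theta_k$, we can write
\[
\mathbb{E}[\mathrm{Reg}_T]=\sum_k \mathbb{E}[T_k]\,\mathbb{E}[J_r^{\pi^*}-J_r(\theta_k)]+\text{(in-epoch deviation)}.
\]
The in-epoch deviation is the gap between the realized rewards and $J_r(\theta_k)$ along a trajectory that starts from an arbitrary state. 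By the bias-of-Markovian-sample-average lemma and the bounded value function, $|V|\le\mathcal{O}(C)$ (Lemma 7 of \citet{ganesh2025regret}), this gap is at most $\mathcal{O}(C)$ per epoch. Summed over epochs it gives $\mathcal{O}(KC)=\tilde{\mathcal{O}}(CT/\log T)$, which is too crude. I would therefore telescope the Poisson/value-function terms inside each epoch, following \citet{ganesh2025regret}: within a fixed policy the drift $\sum_t\big(V^{\pi_{\theta_k}}(s_{t+1})-\mathbb{E}[V^{\pi_{\theta_k}}(s_{t+1})\mid s_t]\big)$ is a martingale with zero mean. The leftover is the boundary term $V^{\pi_{\theta_{k+1}}}(s)-V^{\pi_{\theta_k}}(s)$ at each epoch switch. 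Using Lipschitzness of $V$ in $\theta$ (from Assumption~\ref{assumption: score bounds} and the $C$-dependent sensitivity bounds), this is $\mathcal{O}(\mathrm{poly}(C)\,G_1\alpha\|\omega_k\|)$, and summing gives $\mathcal{O}(\alpha K)=\tilde{\mathcal{O}}(\sqrt{T})$.

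The epoch-level sums follow from the displayed Lagrangian bound above. It states that $\mathbb{E}\sum_k(J_r^{\pi^*}-J_r(\theta_k))$ is at most $\tilde{\mathcal{O}}\big(K\sqrt{\epsilon_{\mathrm{bias}}}+K\sqrt{\epsilon_{\mathrm{app}}}+KC_{\mathrm{tar}}^2C/\sqrt{T}+\alpha K+\beta K+1/\alpha\big)$, and the violation sum has the same bound plus $1/\beta$. These bounds were derived conditionally on $\mathcal{E}_B$. Substituting $K=\Theta(T/\log T)$, $\alpha=\beta=\Theta(1/\sqrt{T})$ and $T_{\max}=\Theta(T)$, each non-approximation term becomes $\tilde{\mathcal{O}}(C_{\mathrm{tar}}^2C\sqrt{T})$: for example $KC_{\mathrm{tar}}^2C/\sqrt{T}=C_{\mathrm{tar}}^2C\sqrt{T}/\log T$, while $\alpha K$, $\beta K$, $1/\alpha$ and $1/\beta$ are each $\tilde{\mathcal{O}}(\sqrt{T})$. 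Multiplying by the per-epoch length $\tilde{\mathcal{O}}(1)$ turns $K\sqrt{\epsilon}$ into $\tilde{\mathcal{O}}(T\sqrt{\epsilon})$, which is the approximation part of the stated bound.

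To remove the conditioning, I would apply Lemma~\ref{lemma: regret-burnin} with $B=c\log T$ and $c>4C_{\mathrm{hit}}$. Then $2^{-B/(2C_{\mathrm{hit}})}\le T^{-2}$, so the failure contribution is $\mathcal{O}(T^2\cdot T^{-2}/\log T)=o(1)$. The same argument applies verbatim to the constraint sum, since $|J_c|\le 1$. One bookkeeping point also needs care: the stated violation is written as a sum over $k$. I would present it both as that sum and as $\sum_t$, where the $\sum_t$ version is obtained by weighting each epoch by $\mathbb{E}[T_k]$, and note that the two differ only by the $\tilde{\mathcal{O}}(1)$ factor.

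I expect the main obstacle to be the conversion from epoch-level to step-level regret. It requires controlling the transient behaviour within each epoch, namely the burn-in steps and the rollouts, whose lengths are random. The worry is that the value-function boundary terms could accumulate to $\mathcal{O}(CK)$ instead of $\mathcal{O}(\sqrt{T})$. I would first try the telescoping argument described above, using the Lipschitz continuity of $V^{\pi_\theta}$ in $\theta$ together with the second-moment bound on $\|\omega_k\|$ in \eqref{eqn: omega second order bound}.
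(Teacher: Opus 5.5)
Your proposal is correct and follows the paper's proof essentially step for step. You condition on $\mathcal{E}_B$ and split the regret into an epoch-weighted Lagrangian gap, bounded by the displayed Lagrangian and violation bounds, plus an in-epoch deviation that is telescoped through $V^{\pi_{\theta_k}}$. Only the Lipschitz boundary terms $V^{\pi_{\theta_{k+1}}}-V^{\pi_{\theta_k}}$ survive, and they sum to $\mathcal{O}(\alpha K)$ rather than $\mathcal{O}(CK)$. You then remove the conditioning with the burn-in lemma at $B=c\log T$ and substitute the parameters, exactly as the paper does.

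The one difference is bookkeeping. You weight each epoch by $\mathbb{E}[T_k]=\mathcal{O}(B+H\log T_{\max})$, using independence of the geometric draws from $\theta_k$, whereas the paper multiplies by $H$. The paper's factor ignores the burn-in and rollout lengths, but the two choices differ only by logarithmic factors.
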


\begin{proof}
    We first decompose the regret into two parts, following standard analysis in prior work \citep{ganesh2025regret, bai2024regret}
    \begin{align}
        \mathbb{E} \left[\mathrm{Reg}_T \mid \mathcal{E}_B\right] &\leq H \mathbb{E} \sum_{k=0}^{K-1} \left(J_r^{\pi^*} - J_r(\theta_k)\right) + \mathbb{E}\sum_{k=0}^K \sum_{t \in \mathcal{I}_k} \left(J_r(\theta_k) - r(s_t, a_t) \right) \\
        &\leq H \tilde{\mathcal{O}} \left(K\sqrt{\epsilon_\mathrm{bias}} + \frac{K C_\mathrm{tar}^2 C}{\sqrt{T}} + K\sqrt{\epsilon_\mathrm{app}} + \alpha K + \beta K + \frac{1}{\alpha}\right) \nonumber \\
        &\quad + \mathbb{E} \sum_{k=0}^{K-1} \sum_{t \in \mathcal{I}_{k}} \left(J_r(\theta_k) - r(s_t, a_t) \right)
    \end{align}
    Many prior works have bounded the section term via a constant bound on the value function for the inner sum. However, this results in a regret bound of \(O(CK)\), which is not sublinear if we need to set \(K = \mathcal{O}(T / \log T)\). Instead, since \(\nabla_\theta V^{\pi_\theta}(s)\) is bounded by \((1+2C)G_1\), then \(V^{\pi_\theta}(s)\) is \((1+2C)G_1\)-Lipschitz. Thus,
    \begin{align*}
        &\mathbb{E}\sum_{k=0}^{K-1} \sum_{t \in \mathcal{I}_{k}} \left(J_r(\theta_k) - r(s_t, a_t) \right) \\
        &= \mathbb{E} \left[\sum_{k=0}^{K=1} V^{\pi_{\theta_{k+1}}}(s_{kH}) - V^{\pi_{\theta_{k}}}(s_{kH})\right] + \mathbb{E} \left[V^{\pi_{\theta_K}}(s_T)\right] - \mathbb{E}\left[V^{\pi_{\theta_0}}(s_0)\right] \\
        &\leq (1+2C)G_1\sum_{k=0}^{K-1} \mathbb{E} \|\theta_{k+1} - \theta_k\| + 2C \\
        &\leq (1+2C)G_1 \alpha \sum_{k=0}^{K-1} \mathbb{E} \|\omega_k\| + 2C \\
        &\leq \mathcal{O}\left(CG_1 \alpha K\right)
    \end{align*}
    Combining this with the decomposition in Equation~\eqref{eqn: regret decomposition} yields
    \begin{align}
        \mathbb{E} \left[\mathrm{Reg}_T\right] &\leq \mathbb{E}[\mathrm{Reg}_T \mid \mathcal{E}_B] + \mathcal{O}\left(\frac{T^2 2^{\left\lfloor-\frac{B}{2C_{\mathrm{hit}}})\right\rfloor}}{\log T}\right) \\
        &\leq H \tilde{\mathcal{O}} \left(K\sqrt{\epsilon_\mathrm{bias}} + \frac{K C_\mathrm{tar}^2 C}{\sqrt{T}} + K\sqrt{\epsilon_\mathrm{app}} + \alpha K + \beta K + \frac{1}{\alpha}\right) + \mathcal{O}\left(CG_1 \alpha K\right) + o(1) \\
        &\leq H \tilde{\mathcal{O}} \left(K\sqrt{\epsilon_\mathrm{bias}} + \frac{K C_\mathrm{tar}^2 C}{\sqrt{T}} + K\sqrt{\epsilon_\mathrm{app}} + \alpha C K + \beta K + \frac{1}{\alpha}\right) + o(1) \\
        &\leq \tilde{\mathcal{O}} \left(T\sqrt{\epsilon_\mathrm{bias}} + \frac{T C_\mathrm{tar}^2 C}{\sqrt{T}} + T\sqrt{\epsilon_\mathrm{app}} + \alpha C T + \beta T + \frac{1}{\alpha}\right) \\
        &\leq \tilde{\mathcal{O}} \left(T (\sqrt{\epsilon_\mathrm{bias}} + \sqrt{\epsilon_\mathrm{app}}) + \sqrt{T} C_\mathrm{tar}^2 C\right)
    \end{align}
    Additionally, our cumulative constraint violation is also sublinear with
    \begin{align}
        \mathbb{E} \sum_{k=0}^{K-1} \left(- J_c(\theta_k)\right) &\leq \tilde{\mathcal{O}} \left(K\sqrt{\epsilon_\mathrm{bias}} + \frac{K C_\mathrm{tar}^2 C}{\sqrt{T}} + K\sqrt{\epsilon_\mathrm{app}} + \alpha K + \beta K + \frac{1}{\alpha} + \frac{1}{\beta}\right) \\
        &\leq \tilde{\mathcal{O}} \left(T (\sqrt{\epsilon_\mathrm{bias}} + \sqrt{\epsilon_\mathrm{app}}) + \sqrt{T} C_\mathrm{tar}^2 C\right)
    \end{align}
\end{proof}

%% file: Sections/Appendix/auxilliary_lemmas.tex
\section{Auxiliary Lemmas}
\begin{lemma}
    \label{lemma: value, q, advantange function bounds}
    For any unichain MDP with hitting time constant $C_{\mathrm{hit}}$ and target mixing constant $C_{\mathrm{tar}}$, let $C = C_{\mathrm{hit}} + C_{\mathrm{tar}}$. The following holds $\forall(s, a) \in \mathcal{S} \times \mathcal{A}$, any policy $\pi$ and $\forall g \in \{r, c\}$.
    \begin{equation*}
        (a) |V_g^\pi(s)| \leq 2C, \quad (b) |Q_g^\pi(s, a)| \leq 1 + 2C, \quad (c) |A_g^\pi(s, a)| \leq 1 + 4C
    \end{equation*}
\end{lemma}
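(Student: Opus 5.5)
The plan is to prove (a) first and obtain (b) and (c) from it using the Bellman equation~\eqref{eqn: Bellman Equation}. For (a), I write the normalized bias function as a Cesàro-type sum,
\begin{align*}
V_g^\pi(s) = \sum_{t=0}^{\infty} \left(\mathbb{E}^\pi_s[g(s_t,a_t)] - J_g^\pi\right),
\end{align*}
understood in the Cesàro sense if the chain is periodic. This representation satisfies the Bellman equation, so it agrees with $V_g^\pi$ up to an additive constant. Set $\bar g(s) = \sum_a \pi(a|s) g(s,a) - J_g^\pi$. Then $|\bar g|\le 2$, and $\bar g$ has mean zero under $d^\pi$.

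The key step is to bound the partial sums
\begin{align*}
\left|\sum_{t=0}^{T-1}\mathbb{E}_s[\bar g(s_t)]\right| \le \|\bar g\|_\infty\, \sum_{t=0}^{T-1}\left\|(P^\pi)^t(s,\cdot)-d^\pi\right\|_{\mathrm{TV}}.
\end{align*}
Here I would not use Lemma~\ref{lemma: non recurrent time average} directly, since it controls Cesàro averages rather than summed TV distances. Instead I would run a coupling argument of the kind behind \citet{ganesh2025regret}. Start the chain at $s$ and let it first hit the recurrent class; this takes $C_{\mathrm{hit}}$ expected steps, and each of those steps contributes at most $|\bar g|\le 1$ after centering appropriately. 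Once inside the recurrent class, compare with a chain started from $d^\pi$ via the hitting time of a $d^\pi$-distributed target state; this costs $C_{\mathrm{tar}}$ in expectation. By the strong Markov property, after the two chains meet their contributions cancel.

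This gives $|V_g^\pi(s) - \kappa| \le C_{\mathrm{hit}}+C_{\mathrm{tar}} = C$ for some constant $\kappa$, with $\kappa$ chosen as the value at a reference recurrent state. The normalization $\sum_s d^\pi(s)V_g^\pi(s)=0$ then forces $|\kappa|\le C$, because a $d^\pi$-average of $V_g^\pi - \kappa$ is also bounded by $C$. Hence $|V_g^\pi(s)|\le 2C$.

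Part (b) then follows from $Q_g^\pi(s,a) = g(s,a)-J_g^\pi+\mathbb{E}_{s'}[V_g^\pi(s')]$ together with $|g - J_g^\pi|\le 1$. This bound uses $r\in[0,1]$; for $c\in[-1,1]$ one gets $|g-J|\le 2$, so I would either absorb the difference into constants or use the argument in the next sentence. For the tighter statement, write $g(s,a)-J_g^\pi$ and note that the relevant differences telescope, giving $1+2C$. Part (c) follows by the triangle inequality: $|A| \le |Q| + |V| \le 1+2C+2C$.

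The main obstacle is the coupling step in (a) for periodic recurrent classes, where the chains need not meet at a common time. I would handle this with the target-time construction. Run the chain until it hits an independently drawn $s'\sim d^\pi$, and use the identity that the stationary chain started at $s'$ has zero expected centered reward in the Cesàro sense. This yields the bound $\mathbb{E}_s[T_{s'}]\cdot\|\bar g\|_\infty$ without any aperiodicity.
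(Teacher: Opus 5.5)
The paper gives no proof of this lemma; the NPG section attributes the advantage bound to Lemma~7 of \citet{ganesh2025regret}, an unconstrained result. So the question is whether your argument establishes the statement as written. For $g=c$ it cannot, because the statement is false there, and your sentence ``the relevant differences telescope, giving $1+2C$'' is not an argument that could be completed.

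First counterexample: take a single state and two actions with costs $\pm1$, and $\pi(a_1|s)=p<1/2$. Then $C_{\mathrm{hit}}=C_{\mathrm{tar}}=0$ and $V_c\equiv 0$, but $Q_c(s,a_1)=A_c(s,a_1)=2-2p>1$, so (b) and (c) both fail.

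Non-degenerate family, for every $C\ge1$:
\begin{itemize}
\item an absorbing state $x$ with $c(x,\cdot)=-1$, so $J_c=-1$ and $V_c(x)=0$;
\item a transient state $w$ with $c(w,\cdot)=1$ that stays put with probability $1-q$ and otherwise moves to $x$;
\item a transient state $y$ whose action $a_1$ (cost $1$) leads to $w$ and whose other action leads to $x$, with $\pi(a_1|y)\le 1-q$.
\end{itemize}
Then $C_{\mathrm{tar}}=0$ and $C=C_{\mathrm{hit}}=1/q$. The Bellman equation gives $V_c(w)=2/q=2C$, hence $Q_c(y,a_1)=2+V_c(w)=2+2C>1+2C$.

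So for the cost the true constants are $|Q_c|\le 2+2C$ and $|A_c|\le 2+4C$. Your fallback of absorbing the factor into constants is the right move, since every downstream use only needs $\mathcal{O}(1+C)$. But you should say explicitly that you are proving this corrected lemma, not the stated one.

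The same example attains the bound in (a) for $c$. That exposes a second problem, the detour through $\kappa$.
\begin{itemize}
\item If $\kappa=V(s_0)$ for a fixed recurrent $s_0$, the comparison only yields $\|\bar g\|_\infty\,\mathbb{E}_s[T_{s_0}]$. The hitting time of a single state is not controlled by $C$: for the chain on $\{s_1,s_2\}$ that moves to $s_2$ with probability $\epsilon$ from either state, $C_{\mathrm{tar}}=1$ but $\mathbb{E}_{s_1}[T_{s_2}]=1/\epsilon$.
\item If instead you compare with the $d^\pi$-started chain, as your target-time construction does, you are bounding $V(s)-\sum_{s'}d^\pi(s')V(s')=V(s)$ directly. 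The Ces\`aro sum is already $d^\pi$-centered, since $(d^\pi)^\top(P^\pi)^t\bar g=0$. So $\kappa=0$, and bounding $|\kappa|$ separately just doubles the constant.
\item $\|\bar g\|_\infty$ can equal $2$ for $c$ (it does at $w$), so no centering gives $1$. Your route therefore proves only $|V_c|\le 4C$.
\item The continuation after hitting a fixed target $s'$ contributes $V(s')$, not zero; only its $d^\pi$-average vanishes.
\end{itemize}

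The clean version needs neither coupling nor Ces\`aro summation. The Bellman equation gives $V=\bar g+P^\pi V$. Optional stopping at $T_{s'}$ for recurrent $s'$ (finite mean, bounded $V$) gives $V(s)-V(s')=\mathbb{E}_s[\sum_{t<T_{s'}}\bar g(s_t)]$. Averaging over $s'\sim d^\pi$ and using the normalization yields
\[
|V_g^\pi(s)|\le \|\bar g\|_\infty\sum_{s'}d^\pi(s')\,\mathbb{E}_s[T_{s'}]\le \|\bar g\|_\infty\,(C_{\mathrm{hit}}+C_{\mathrm{tar}}).
\]
The last step uses the strong Markov property at the entrance time of $\mathcal{S}_R$ and the random target lemma (the $d^\pi$-averaged hitting time is the same from every recurrent state). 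Periodicity plays no role. This gives $|V_c|\le 2C$. For the reward it gives $|V_r|\le C$, $|Q_r|\le 1+C$ and $|A_r|\le 1+2C$, so the stated constants hold for $r$ with room to spare.
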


\begin{lemma}
    \label{lemma: pg and lagrangian bounds}
    For any unichain MDP with constants defined in Lemma G.1, the following holds $\forall(s, a) \in \mathcal{S} \times \mathcal{A}$ for any policy $\pi_{\theta_k}$ with $\theta_k$ satisfying Assumption 4.4, the dual parameter $\lambda_k \in [0, 2/\delta]$, and $g \in \{r, c\}$.
    \begin{equation*}
         \|\nabla_\theta J_g(\theta_k)\| \leq (1 + 2C)G_1, \quad  \|\nabla_\theta \mathcal{L}(\theta_k, \lambda_k)\| \leq (1 + 2C)\left(1 + \frac{2}{\delta}\right)G_1
    \end{equation*}
\end{lemma}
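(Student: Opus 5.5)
The statement to prove is Lemma~\ref{lemma: pg and lagrangian bounds}: for $\lambda_k\in[0,2/\delta]$, $\|\nabla_\theta J_g(\theta_k)\|\le(1+2C)G_1$ and $\|\nabla_\theta\mathcal{L}(\theta_k,\lambda_k)\|\le(1+2C)(1+2/\delta)G_1$. The plan is to start from the policy gradient theorem stated in the formulation section,
\[
\nabla_\theta J_g(\theta)=\mathbb{E}_{(s,a)\sim\nu^{\pi_\theta}}\!\left[A_g^{\pi_\theta}(s,a)\nabla_\theta\log\pi_\theta(a|s)\right].
\]
Applying Jensen's inequality to the norm and Assumption~\ref{assumption: score bounds} gives $\|\nabla_\theta J_g(\theta)\|\le G_1\,\mathbb{E}_{\nu^{\pi_\theta}}|A_g^{\pi_\theta}(s,a)|$. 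The crude route would bound the advantage by Lemma~\ref{lemma: value, q, advantange function bounds}(c), but that yields only $1+4C$. To obtain the sharper constant $1+2C$, I would remove the baseline.

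Since $\sum_a\pi_\theta(a|s)\nabla_\theta\log\pi_\theta(a|s)=\nabla_\theta\sum_a\pi_\theta(a|s)=0$ for every $s$, any state-dependent baseline can be dropped. Replacing $A_g^{\pi_\theta}$ by $Q_g^{\pi_\theta}$ therefore gives
\[
\nabla_\theta J_g(\theta)=\mathbb{E}_{\nu^{\pi_\theta}}\!\left[Q_g^{\pi_\theta}(s,a)\nabla_\theta\log\pi_\theta(a|s)\right].
\]
Lemma~\ref{lemma: value, q, advantange function bounds}(b) then gives $|Q_g^{\pi_\theta}|\le 1+2C$, hence $\|\nabla_\theta J_g(\theta)\|\le(1+2C)G_1$ for both $g=r$ and $g=c$. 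This bound holds uniformly in $\theta$, so in particular at $\theta_k$.

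For the Lagrangian, use linearity: $\nabla_\theta\mathcal{L}(\theta,\lambda)=\nabla_\theta J_r(\theta)+\lambda\nabla_\theta J_c(\theta)$. The triangle inequality together with $0\le\lambda_k\le 2/\delta$ gives
\[
\|\nabla_\theta\mathcal{L}(\theta_k,\lambda_k)\|\le(1+\lambda_k)(1+2C)G_1\le(1+2C)\left(1+\frac{2}{\delta}\right)G_1 .
\]
The constraint $\lambda_k\in[0,2/\delta]$ is guaranteed because every dual iterate is produced by the projection $\mathcal{P}_{[0,2/\delta]}$ in \eqref{eq:final_update}.

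The only delicate point is the range of $c$. Since $c$ takes values in $[-1,1]$ rather than $[0,1]$, I need Lemma~\ref{lemma: value, q, advantange function bounds}(b) to apply with $|g|\le1$. It is stated for $g\in\{r,c\}$, so I will invoke it as given. If that were in doubt, the fallback is the Bellman equation: $|Q_g|\le|g|+|J_g|+\max|V_g|\le 1+1+2C$, which changes only the constant, not the structure of the argument.
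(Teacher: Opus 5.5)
Your proposal is correct and follows the paper's proof. Both write $\nabla_\theta J_g$ in the $Q$-form of the policy gradient theorem, bound it by $(1+2C)G_1$ using Lemma~\ref{lemma: value, q, advantange function bounds}(b) and $\|\nabla_\theta\log\pi_\theta\|\le G_1$, and then apply the triangle inequality with $\lambda_k\le 2/\delta$; the only addition is that you justify passing from the advantage form to the $Q$-form via $\sum_a\pi_\theta(a|s)\nabla_\theta\log\pi_\theta(a|s)=0$, which the paper takes for granted.
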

\begin{proof}
     Using the policy gradient theorem for unichain MDPs, which expresses the gradient in terms of the stationary distribution $d^{\pi_{\theta_k}}$, we have:
\begin{equation}
    \nabla_\theta J_g(\theta_k) = \mathbb{E}_{(s,a) \sim \nu^{\pi_{\theta_k}}} \left[ Q_g^{\pi_{\theta_k}}(s, a) \nabla_\theta \log \pi_{\theta_k}(a|s) \right]
\end{equation}
Applying the bound from Lemma~\ref{lemma: value, q, advantange function bounds} and the score function bound from Assumption~\ref{assumption: score bounds}:
\begin{equation}
    \|\nabla_\theta J_g(\theta_k)\| = \left\| \mathbb{E}_{(s,a) \sim \nu^{\pi_{\theta_k}}} \left[ Q_g^{\pi_{\theta_k}}(s, a) \nabla_\theta \log \pi_{\theta_k}(a|s) \right] \right\| \leq (1 + 2C)G_1
\end{equation}
Combining the above result with the definition of the Lagrange function $\mathcal{L}(\theta, \lambda) = J_r(\theta) + \lambda J_c(\theta)$ and the dual bound $\lambda_k \leq \frac{2}{\delta}$ derived from the Slater condition:
\begin{align}
    \|\nabla_\theta \mathcal{L}(\theta_k, \lambda_k)\| &= \|\nabla_\theta J_r(\theta_k) + \lambda_k \nabla_\theta J_c(\theta_k)\| \nonumber \\
    &\leq \|\nabla_\theta J_r(\theta_k)\| + \lambda_k \|\nabla_\theta J_c(\theta_k)\| \nonumber \\
    &\leq (1 + 2C)G_1 + \frac{2}{\delta}(1 + 2C)G_1 = (1 + 2C)\left(1 + \frac{2}{\delta}\right)G_1 
\end{align}
\end{proof}

\begin{lemma}
For any fixed state $s$,
\begin{align*}
    \|\nabla_{\theta} V_g^{\pi_\theta}(s)\|  \le (1+2C)G_1.
\end{align*}
\end{lemma}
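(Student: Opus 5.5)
The plan is to differentiate the Bellman equation \eqref{eqn: Bellman Equation} with respect to $\theta$ at the fixed state $s$. The goal is to isolate a ``score term'' that already carries the constant $(1+2C)G_1$, and then to show that the remaining ``propagation term'' adds nothing to this constant.

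\textbf{Step 1: differentiate.} Write $V_g^{\pi_\theta}(s)=\sum_a \pi_\theta(a|s)Q_g^{\pi_\theta}(s,a)$. The product rule gives
\begin{align*}
\nabla_\theta V_g^{\pi_\theta}(s) = \underbrace{\mathbb{E}_{a\sim\pi_\theta(\cdot|s)}\big[Q_g^{\pi_\theta}(s,a)\nabla_\theta\log\pi_\theta(a|s)\big]}_{=:u_\theta(s)} + \mathbb{E}_{a\sim\pi_\theta(\cdot|s)}\big[\nabla_\theta Q_g^{\pi_\theta}(s,a)\big].
\end{align*}
Differentiating \eqref{eqn: Bellman Equation} gives $\nabla_\theta Q_g^{\pi_\theta}(s,a)=-\nabla_\theta J_g(\theta)+\mathbb{E}_{s'\sim P(\cdot|s,a)}[\nabla_\theta V_g^{\pi_\theta}(s')]$. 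Hence $\nabla_\theta V_g^{\pi_\theta}$ solves the vector Poisson equation
\[
(I-P^{\pi_\theta})\nabla_\theta V_g^{\pi_\theta} = u_\theta - \nabla_\theta J_g(\theta).
\]
This is consistent because $\mathbb{E}_{s\sim d^{\pi_\theta}}[u_\theta(s)]=\nabla_\theta J_g(\theta)$, which is exactly the policy gradient theorem used in Lemma~\ref{lemma: pg and lagrangian bounds}.

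\textbf{Step 2: bound the score term.} By Lemma~\ref{lemma: value, q, advantange function bounds}(b) and Assumption~\ref{assumption: score bounds}, $\|u_\theta(s)\|\le (1+2C)G_1$ for every $s$. This is the same computation as in Lemma~\ref{lemma: pg and lagrangian bounds}, done pointwise rather than under $\nu^{\pi_\theta}$. It already delivers the claimed constant.

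\textbf{Step 3: show the propagation term adds nothing (main obstacle).} It remains to show that the part of $\nabla_\theta V_g^{\pi_\theta}(s)$ not equal to $u_\theta(s)$, namely $\mathbb{E}_{a}[\nabla_\theta Q_g^{\pi_\theta}(s,a)] = (P^{\pi_\theta}\nabla_\theta V_g^{\pi_\theta})(s)-\nabla_\theta J_g(\theta)$, does not increase the constant.

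\emph{First attempt: normalization.} Differentiate $\sum_s d^{\pi_\theta}(s)V_g^{\pi_\theta}(s)=0$ to pin down the free additive constant of the Poisson solution. Then write the solution as $\sum_{t\ge0}\big((P^{\pi_\theta})^t u_\theta(s)-\nabla_\theta J_g(\theta)\big)$ plus that constant, and try to show that the series is centered so that its contributions cancel against $u_\theta(s)$.

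\emph{Why this is hard.} Bounding the series term by term with Lemma~\ref{lemma: non recurrent time average} naively costs an extra factor of $C$. That is exactly what must be avoided to obtain $(1+2C)G_1$ rather than $C(1+2C)G_1$.

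\emph{Second attempt: sup-norm argument.} If the cancellation argument fails, I would try a contraction/maximum-principle argument. Take the state $s^\star$ maximizing $\langle v,\nabla_\theta V_g^{\pi_\theta}(s)\rangle$ for a fixed unit vector $v$. Then $\langle v,(P^{\pi_\theta}\nabla V)(s^\star)\rangle\le\langle v,\nabla V(s^\star)\rangle$. Combining this with the Poisson equation, and using the normalization to fix the level, the aim is to bound $\sup_s\langle v,\nabla V(s)\rangle$ by $\sup_s\|u_\theta(s)\|$. Taking the supremum over $v$ would then finish the proof.

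I expect this last step to be the crux. In particular, the appearance of $\nabla_\theta J_g(\theta)$ on the right-hand side must be absorbed, since $\|\nabla_\theta J_g(\theta)\|\le(1+2C)G_1$ as well. I would check whether the centering identity $\mathbb{E}_{d^{\pi_\theta}}[u_\theta]=\nabla_\theta J_g(\theta)$ is enough to make these two contributions cancel rather than add.
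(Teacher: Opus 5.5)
Your Steps 1 and 2 are correct. Step 2 is in fact the paper's entire proof: the paper writes $\nabla_\theta V_g^{\pi_\theta}(s)=\sum_a\pi_\theta(a\mid s)Q_g^{\pi_\theta}(s,a)\nabla_\theta\log\pi_\theta(a\mid s)$ and applies the triangle inequality. It differentiates only the action weights and silently drops the term $\mathbb{E}_{a\sim\pi_\theta(\cdot\mid s)}[\nabla_\theta Q_g^{\pi_\theta}(s,a)]$ that your Step 1 isolates. So the paper does not supply the missing idea. The genuine gap is your Step 3, and it cannot be closed, because that term does not vanish and can dominate. Solving your Poisson equation together with the differentiated normalization gives
\begin{equation*}
\nabla_\theta V_g^{\pi_\theta}=D_\theta u_\theta-\mathbf{1}\sum_{s'}\nabla_\theta d^{\pi_\theta}(s')\,V_g^{\pi_\theta}(s'),
\end{equation*}
where $D_\theta$ is the (Ces\`aro) deviation matrix of $P^{\pi_\theta}$. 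The centering identity $\mathbb{E}_{d^{\pi_\theta}}[u_\theta]=\nabla_\theta J_g(\theta)$ only guarantees solvability; it does not make $D_\theta u_\theta$ small. The extra factor of $C$ you met when bounding the series term by term is genuine, not an artifact. Your maximum-principle route fails for a related reason. Without discounting, $P^{\pi_\theta}$ is not a contraction, so at the maximizer $s^\star$ the Poisson equation gives only the sign condition $\langle v,u_\theta(s^\star)-\nabla_\theta J_g(\theta)\rangle\ge 0$. That is not a bound on $\langle v,\nabla_\theta V_g^{\pi_\theta}(s^\star)\rangle$, whose level is fixed by the normalization term above.

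In fact the statement appears to be false as written; the following continuum-limit heuristic (not a rigorous discrete proof) shows why. Take the reflecting nearest-neighbour walk on $\{1,\dots,n\}$ with one parameter: $\pi_\theta(\text{right}\mid i)=\sigma(\theta)$ and $\pi_\theta(\text{left}\mid i)=1-\sigma(\theta)$ at every state, where a move that would leave the interval keeps the chain in place. Let the reward be $r(i)=i/n$ and let $\Theta$ be a small interval around $0$. The assumptions used here hold with $G_1\le 1$ and $C_{\mathrm{hit}}=0$, and $C_{\mathrm{tar}}$ is Kemeny's constant, which is $\Theta(n^2)$. On the diffusive scale ($x=i/n$, time unit $n^2$ steps) the policy acts as a drift $\mu=2n(\sigma(\theta)-\tfrac12)$. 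Then $V_r^{\pi_\theta}(i)\approx n^2h_\mu(i/n)$, where $\tfrac12h_\mu''+\mu h_\mu'=J_\mu-x$ with Neumann boundary conditions and $h_\mu$ centered under the stationary density. At $\mu=0$ this gives $h_0(x)=\tfrac{x^2}{2}-\tfrac{x^3}{3}-\tfrac{1}{12}$ and $\partial_\mu h_\mu|_{\mu=0}(x)=\tfrac{x^2}{6}-\tfrac{x^3}{3}+\tfrac{x^4}{6}-\tfrac{7}{180}$. Hence $|\partial_\theta V_r^{\pi_\theta}(1)|\approx\tfrac{7}{360}n^3$ at $\theta=0$, which is $\Theta(C^{3/2})$ and exceeds $(1+2C)G_1=O(n^2)$ for large $n$. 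What the Poisson representation naturally yields is a constant of order $C^2G_1$: one factor of $C$ from $\|u_\theta\|_\infty$, and one from $D_\theta$ and the sensitivity of $d^{\pi_\theta}$. Where the paper uses this lemma, in the Lipschitz step bounding $\sum_k\bigl(V^{\pi_{\theta_{k+1}}}(s_{kH})-V^{\pi_{\theta_k}}(s_{kH})\bigr)$, such a constant would worsen that term's dependence on $C$ but not its $\sqrt{T}$ order.
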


\begin{proof}
Using the score function identity and the fact that $\pi_\theta(\cdot\mid s)$ is a probability distribution,
\begin{align*}
    \|\nabla_\theta V_g^{\pi_\theta}(s)\| = \left\|
    \sum_{a}\pi_\theta(a\mid s) Q_g^{\pi_\theta}(s,a)\,\nabla_\theta \log \pi_\theta(a \mid s) \right\|
    &\leq \sum_{a}\pi_\theta(a\mid s) \big| Q_g^{\pi_\theta}(s,a)\big|\,
    \big\|\nabla_\theta\log\pi_\theta(a\mid s)\big\| \\
    &\leq (1+2C)G_1
\end{align*}
where the first inequality is the triangle inequality and the second uses the bounds
$|Q_g^{\pi_\theta}(s,a)| \leq (1+2C)$ and $\|\nabla_\theta\log\pi_\theta(a\mid s)\|\le G_1$.
\end{proof}

\begin{lemma}[Regret decomposition under burn-in]
Let $\mathcal{E}_B := \bigcap_{k=0}^{K-1} \{ T^{\theta_k} \le B \}$ denote the event that, in every epoch, the burn-in of length $B$ under policy $\pi_{\theta_k}$ reaches the recurrent class. Then the expected regret admits the decomposition
\begin{align}
\mathbb{E}[\mathrm{Reg}_T] \le \mathbb{E}[\mathrm{Reg}_T \mid \mathcal{E}_B] + \mathcal{O}\!\left( \frac{T^2 2^{-B/(2C_{\mathrm{hit}})}}{\log T} \right).
\end{align}
\end{lemma}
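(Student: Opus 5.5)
We split on the burn-in event: $\mathbb{E}[\mathrm{Reg}_T] = \mathbb{E}[\mathrm{Reg}_T \mathbb{I}_{\mathcal{E}_B}] + \mathbb{E}[\mathrm{Reg}_T \mathbb{I}_{\mathcal{E}_B^c}]$. Since $\mathrm{Reg}_T = \sum_t (J_r^{\pi^*} - r(s_t,a_t))$ has nonnegative expectation contributions bounded per step, we use $\mathbb{E}[\mathrm{Reg}_T \mathbb{I}_{\mathcal{E}_B}] = \Pr(\mathcal{E}_B)\,\mathbb{E}[\mathrm{Reg}_T\mid\mathcal{E}_B] \le \mathbb{E}[\mathrm{Reg}_T\mid\mathcal{E}_B]$. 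If the conditional regret could be negative, we instead write $\mathbb{E}[\mathrm{Reg}_T] = \mathbb{E}[\mathrm{Reg}_T\mid\mathcal{E}_B] + \Pr(\mathcal{E}_B^c)\bigl(\mathbb{E}[\mathrm{Reg}_T\mid\mathcal{E}_B^c] - \mathbb{E}[\mathrm{Reg}_T\mid\mathcal{E}_B]\bigr)$, and bound the bracket by $2T$, since $|J_r^{\pi^*} - r| \le 1$ gives $|\mathrm{Reg}_T| \le T$ pathwise. Either way, the failure term is at most $2T\,\Pr(\mathcal{E}_B^c)$.

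Next we bound $\Pr(\mathcal{E}_B^c)$ with a union bound over epochs: $\Pr(\mathcal{E}_B^c) \le \sum_{k=0}^{K-1}\Pr(T^{\theta_k} > B)$. Each term is a tail of the hitting time of the recurrent class under a fixed policy $\pi_{\theta_k}$. Condition on $\mathcal{F}$ up to the start of epoch $k$; then $\theta_k$ is fixed and the burn-in starts from some state $s$. By Markov's inequality, $\Pr_s(T^{\theta_k} > 2C_{\mathrm{hit}}) \le \mathbb{E}_s[T^{\theta_k}]/(2C_{\mathrm{hit}}) \le 1/2$ for every starting state $s$, since $C^{\theta}_{\mathrm{hit}} \le C_{\mathrm{hit}}$ uniformly. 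Splitting the $B$ steps into $\lfloor B/(2C_{\mathrm{hit}})\rfloor$ blocks of length $2C_{\mathrm{hit}}$, the Markov property at block boundaries shows that each block independently fails to hit with probability at most $1/2$, regardless of its entry state. Therefore $\Pr(T^{\theta_k} > B) \le 2^{-\lfloor B/(2C_{\mathrm{hit}})\rfloor} \le 2\cdot 2^{-B/(2C_{\mathrm{hit}})}$. The case $C_{\mathrm{hit}}=0$ is trivial, since then there are no transient states.

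Combining the two steps with $K = \Theta(T/\log T)$ gives a failure contribution of at most $2T\cdot K\cdot 2\cdot 2^{-B/(2C_{\mathrm{hit}})} = \mathcal{O}\bigl(T^2 2^{-B/(2C_{\mathrm{hit}})}/\log T\bigr)$, which is the claimed bound.

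The main obstacle is the geometric tail step. The Markov-inequality bound must hold uniformly over starting states, which the sup in $C_{\mathrm{hit}}$ provides. The blocks are only conditionally independent given their entry states, so we apply the strong/simple Markov property iteratively. The other point needing care is that $\theta_k$ is random but measurable with respect to the history before the burn-in, so we condition on it. With those in place, the rest is the elementary conditioning identity above.
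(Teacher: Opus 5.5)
Your proposal is correct and follows the paper's proof. You bound the contribution of $\mathcal{E}_B^c$ by the maximal pathwise regret times $\Pr(\mathcal{E}_B^c)$, union-bound over the $K=\Theta(T/\log T)$ epochs (conditioning on the pre-burn-in history so that $\theta_k$ is fixed), and use the per-epoch tail $\Pr(T^{\theta_k}>B)=\mathcal{O}(2^{-B/(2C_{\mathrm{hit}})})$. The only differences are that you derive this tail yourself via Markov's inequality and a block/Markov-property argument, where the paper simply cites Lemma 10 of \citet{ganesh2025regret}, and that you explicitly handle a possibly negative conditional regret, which the paper glosses over. Both affect only constants; if $2C_{\mathrm{hit}}$ is not an integer, use blocks of length $\lfloor 2C_{\mathrm{hit}}\rfloor$, which is valid since $C_{\mathrm{hit}}\ge 1$ whenever transient states exist.
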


By the unichain hitting-time bound (Lemma 10, \citet{ganesh2025regret}),
\begin{align}
\Pr\left( T^{\theta_k} > B \right) \leq 2^{-B/(2C_{\mathrm{hit}})}.
\end{align}
Since $\mathcal{S}^{\theta_k}_R$ is closed, conditioning on $T^{\theta_k} \le B$ ensures that all subsequent states remain in the recurrent class. Hence, all samples for critic and NPG updates lie entirely within $\mathcal{S}^{\theta_k}_R$, guaranteeing the required kernel inclusion. Define
\begin{align}
    \mathcal{E}_B := \bigcap_{k=0}^{K-1} \{ T^{\theta_k} \le B \}.
\end{align}
Since instantaneous regret is at most $1$, applying a union bound over epochs gives
\begin{align}
\mathbb{E}[\mathrm{Reg}_T] &\le \mathbb{E}[\mathrm{Reg}_T \mid \mathcal{E}_B] + T K 2^{-B/(2C_{\mathrm{hit}})} \\
&= \mathbb{E}[\mathrm{Reg}_T \mid \mathcal{E}_B] + \mathcal{O}\left(\frac{T^2 2^{-B/(2C_{\mathrm{hit}})}}{\log T}\right).
\label{eqn: regret decomposition}
\end{align}

%% file: Sections/Appendix/limitations.tex
\section{Limitations and Future Work.}
Despite relaxing several standard assumptions in constrained RL, several challenges remain. First, our analysis relies on linear function approximation for the critic. Extending the results to non-linear approximators such as neural networks requires controlling the stability of the critic matrix $\mathbf{A}_g(\theta)$ and addressing the non-convexity of the representation learning problem. Second, while we adopt the unichain assumption, which to the best of our knowledge is the weakest structural condition under which policy-gradient-based methods admit rigorous analysis, it still excludes some classes of weakly communicating MDPs. Extending policy-gradient methods to  multi-chain settings requires algorithms that can adaptively identify and exploit optimal recurrent classes without inducing instability in the gradient estimates. However, we expect this to be difficult since the weakly communicating setting has only been approached for value-iteration based methods. Finally, although we achieve $\tilde{\mathcal{O}}(\sqrt{T})$ cumulative constraint violation, achieving zero-constraint violation during learning would require new mechanisms that enforce feasibility without relying on prior knowledge of mixing or hitting-time properties.